\newcommand{\argmin}{\mathop{ \arg\!\min}}
\newcommand{\arginf}{\mathop{ \arg\!\inf}}
\renewcommand{\vec}[1]{\mathbf{#1}}
\def\lt{\left}
\def\rt{\right}
\def\x{\vec{x}}
\def\w{\vec{w}}
\def\R{{\cal R}}
\def\L{{\cal L}}
\def\bu{\mathbf{u}}
\def\bv{\mathbf{v}}
\def\bx{\mathbf{x}}
\def\bz{\mathbf{z}}
\def\bp{\mathbf{p}}
\def\bone{\mathbf{1}}
\def\bzero{\mathbf{0}}
\def\sgn{\hbox{sign}}
\def\eg{\emph{e.g.}} 
\def\ie{\emph{i.e.}}
\def\st{\text{s.t.}}
\def\N{\mathbb{N}}
\def\R{\mathbb{R}}
\def\X{\mathcal{X}}
\def\Y{\mathcal{Y}}
\def\Z{\mathcal{Z}}
\def\C{\mathcal{C}}
\def\L{\mathcal{L}}
\def\H{\mathcal{H}}
\def\A{{\mathcal{A}}}
\def\EX{{\mathbb{E}}}
\def\bx{{\bf x}}
\def\beg{\begin}
\def\gl{\lambda}
\def\R{\mathbb{R}}
\def\cR{{\mathcal{R}}}
\def\cE{{\mathcal{E}}}
\def\beg{\begin}
\def\mbI{\mathbb{I}}
\def\U{\mathcal{U}}
\def\ie{{\em i.e.}}
\def\eg{{\em e.g.}}
\newtheorem{theorem}{Theorem}
\newtheorem{corollary}{Corollary}
\newtheorem{lemma}{Lemma}
\newtheorem{proposition}{Proposition}
\newtheorem{definition}{Definition}
\newtheorem{remark}{Remark}
\newtheorem{example}{Example}
\def\dsum{\displaystyle\sum }
\def\begeqn{\begin{equation}}
\def\endeqn{\end{equation}}
\def\begth{\begin{theorem}}
\def\endth{\end{theorem}}
\def\begprop{\begin{proposition}}
\def\endprop{\end{proposition}}
\def\begcor{\begin{corollary}}
\def\endcor{\end{corollary}}
\def\begdef{\begin{definition}}
\def\enddef{\end{definition}}
\def\beglemm{\begin{lemma}}
\def\endlemm{\end{lemma}}
\def\begexm{\begin{example}}
\def\endexm{\end{example}}
\def\begrem{\begin{remark}}
\def\endrem{\end{remark}}
\def\beg{\begin}
\def\ga{\alpha}
\def\gb{\beta}
\def\gep{\varepsilon}
\def\gk{\kappa}
\def\gl{\lambda}
\def\gs{\sigma}
\def\gO{\Omega}
\def\bz{{\bf z}}
\def\bx{{\bf x}}
\def\N{\mathbb{N}}
\def\R{\mathbb{R}}
\def\X{\mathcal{X}}
\def\Y{\mathcal{Y}}
\def\Z{\mathcal{Z}}
\def\C{\mathcal{C}}
\def\L{\mathcal{L}}
\def\H{\mathcal{H}}
\def\A{{\mathcal{A}}}
\def\EX{{\mathbb{E}}}
\def\bx{{\bf x}}
\def\beg{\begin}
\def\gl{\lambda}
\def\R{\mathbb{R}}
\def\cR{{\mathcal{R}}}
\def\cE{{\mathcal{E}}}
\def\beg{\begin}
\def\mbI{\mathbb{I}}
\def\U{\mathcal{U}}
\def\ie{{\em i.e.}}
\def\eg{{\em e.g.}}
\def\dmax{\displaystyle\max}
\def\dsum{\displaystyle\sum}
\def\dsup{\displaystyle\sup}
\def\dprod{\displaystyle\prod }
\newcommand\numberthis{\addtocounter{equation}{1}\tag{\theequation}}
\newcites{apx}{Reference}
\def\Lag{\L_\text{avg}}
\def\Lmm{\L_\text{max}}
\def\Lt#1{\L_{\text{top-}#1}}
\def\Lat#1{\L_{\text{avt-}#1}}
\def\matk{MAT$_k$\xspace}
\def\atk{AT$_k$\xspace}
\def\Apx{Appendix}
\title{Learning with Average Top-k Loss}
\author{Yanbo Fan$^{3,4,1}$ 
	, Siwei Lyu$^{1}$\thanks{Corresponding author.}
	, Yiming Ying$^{2}$
	, Bao-Gang Hu$^{3,4}$\\
	{$^{1}$Department of Computer Science, University at Albany, SUNY} \\
	{$^{2}$Department of Mathematics and Statistics, University at Albany, SUNY} \\
	{$^{3}$National Laboratory of Pattern Recognition, CASIA}\\
	{$^{4}$University of Chinese Academy of Sciences (UCAS)}\\
	{\{yanbo.fan,hubg\}@nlpr.ia.ac.cn, slyu@albany.edu, yying@albany.edu}
}
\begin{document}
	\maketitle

\begin{abstract}
In this work, we introduce the {\em average top-$k$} (\atk) loss as a new aggregate loss for supervised learning, which is the average over the $k$ largest individual losses over a training dataset. We show that the \atk loss is a natural generalization of the two widely used aggregate losses, namely the average loss and the maximum loss, but can combine their advantages and mitigate their drawbacks to better adapt to different data distributions. Furthermore, it remains a convex function over all individual losses, which can lead to convex optimization problems that can be solved effectively with conventional gradient-based methods. We provide an intuitive interpretation of the \atk loss based on its equivalent effect on the continuous individual loss functions, suggesting that it can reduce the penalty on correctly classified data.  We further give a learning theory analysis of \matk learning on the classification calibration of the \atk loss and the error bounds of \atk-SVM. We demonstrate the applicability of minimum average top-$k$ learning for binary classification and regression using synthetic and real datasets.
\end{abstract}
	
\section{Introduction}


Supervised learning concerns the inference of a function $f: \X \mapsto \Y$ that predicts a target $y \in \Y$ from data/features $\x \in \X$ using a set of labeled training examples $\{(\x_i,y_i)\}_{i=1}^n$. This is typically achieved by seeking a function $f$ that minimizes an {\em aggregate loss} formed from {\em individual losses} evaluated over all training samples. 

To be more specific, the individual loss for a sample $(\x,y)$ is given by $\ell(f(\x),y)$, in which $\ell$ is a nonnegative bivariate function that evaluates the quality of the prediction made by function $f$. For example, for binary classification (\ie, $y_i \in \{\pm 1\}$), commonly used forms for individual loss include the $0$-$1$ loss, $\mbI_{yf(\x)\le 0}$, which is $1$ when $y$ and $f(\x)$ have different sign and $0$ otherwise, the hinge loss, $\max(0,1-yf(\x))$, and the logistic loss, $\log_2(1+\exp(-yf(\x)))$, all of which can be further simplified as the so-called {\em margin loss}, \ie, $\ell(y,f(\x)) = \ell(yf(\x))$.  For regression, squared difference $(y-f(\x))^2$ and absolute difference $|y-f(\x)|$ are two most popular forms for individual loss, which can be simplified as $\ell(y,f(\x)) = \ell(|y-f(\x)|)$. Usually the individual loss is chosen to be a convex function of its input, but recent works also propose various types of non-convex individual losses (\eg, \cite{he2011maximum,masnadi2009design, wu2007robust,yang2010relaxed}).


The supervised learning problem is then formulated as $\min_f \lt\{\L(L_\bz(f)) + \Omega(f)\rt\}$, where $\L(L_\bz(f))$ is the aggregate loss accumulates all individual losses over training samples, \ie, $L_\bz(f) = \{\ell_i(f)\}_{i=1}^n$, with $\ell_i(f)$ being the shorthand notation for $\ell(f(\x_i),y_i)$, and $\Omega(f)$ is the regularizer on $f$.  However, in contrast to the plethora of the types of individual losses, there are only a few choices when we consider the aggregate loss:
~\vspace{-.75em}
\begin{itemize} \itemsep -0.3em
\item the {\em average loss}: $\Lag(L_\bz(f)) = {1 \over n} \sum_{i=1}^n \ell_i(f)$, \ie, the mean of all individual losses;
\item the {\em maximum loss}: $\Lmm(L_\bz(f)) = \max_{1\le k\le n} \ell_i(f)$, \ie, the largest individual loss;
\item the {\em top-$k$ loss} \cite{shalev2016minimizing}: $\Lt{k}(L_\bz(f)) = \ell_{[k]}(f)$\footnote{We define the {\em top-$k$} element of a set $S = \{s_1,\cdots,s_n\}$ as $s_{[k]}$, such that $s_{[1]} \geq s_{[2]} \geq \cdots \geq s_{[n]}$.} for $1 \le k \le n$, \ie, the $k$-th largest (top-$k$) individual loss.
~\vspace{-.5em}
\end{itemize}
The average loss is unarguably the most widely used aggregate loss, as it is a unbiased approximation to the expected risk and leads to the {\em empirical risk minimization} in learning theory \cite{bartlett2006convexity,de2005model,steinwart2003optimal,vapnik,wu2006learning}.  Further, minimizing the average loss affords simple and efficient stochastic gradient descent algorithms \cite{bousquet2008tradeoffs,srebro2010stochastic}. 
On the other hand, the work in \cite{shalev2016minimizing} shows that constructing learning objective based on the maximum loss may lead to improved performance for data with separate typical and rare sub-populations. The top-k loss \cite{shalev2016minimizing} generalizes the maximum loss, as $\Lmm(L_\bz(f)) = \Lt{1}(L_\bz(f))$, and can alleviate the sensitivity to outliers of the latter. However, unlike the average loss or the maximum loss, the top-k loss in general does not lead to a convex learning objective, as it is not convex of all the individual losses $L_\bz(f)$. 

In this work, we propose a new type of aggregate loss that we term as the {\em average top-k} (\atk) loss, which is the average of the largest $k$ individual losses, that is defined as:
~\vspace{-.5em} \begin{equation} \textstyle
\Lat{k}(L_\bz(f)) = {1 \over k} \sum_{i=1}^k \ell_{[i]}(f).
\label{eq:0}
\end{equation}
We refer to learning objectives based on minimizing the \atk loss as {\em \matk learning}. 

\begin{figure}[t]
\begin{tabular}{c@{\hspace{0em}}c@{\hspace{0.5em}}|@{\hspace{0.5em}}c@{\hspace{0em}}c}
\includegraphics[width=.24\textwidth]{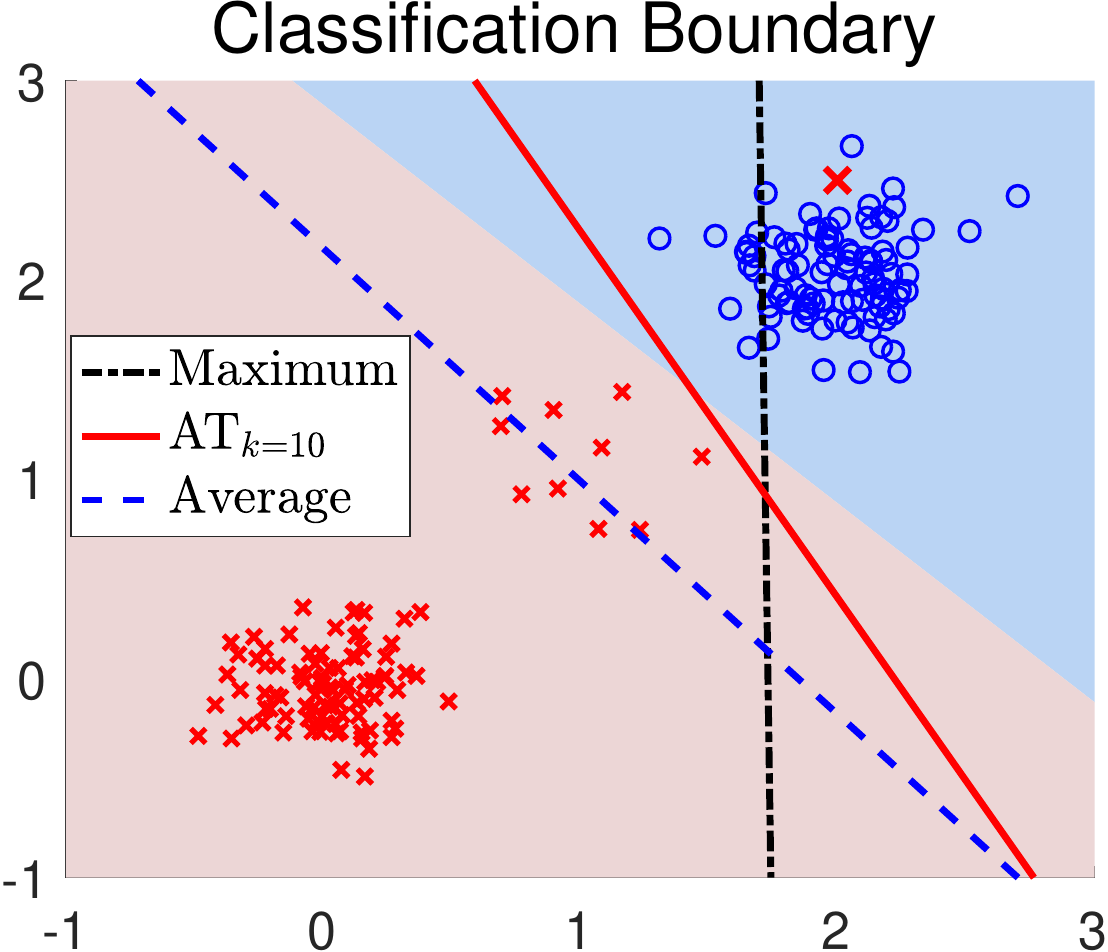} &
\includegraphics[width=.24\textwidth]{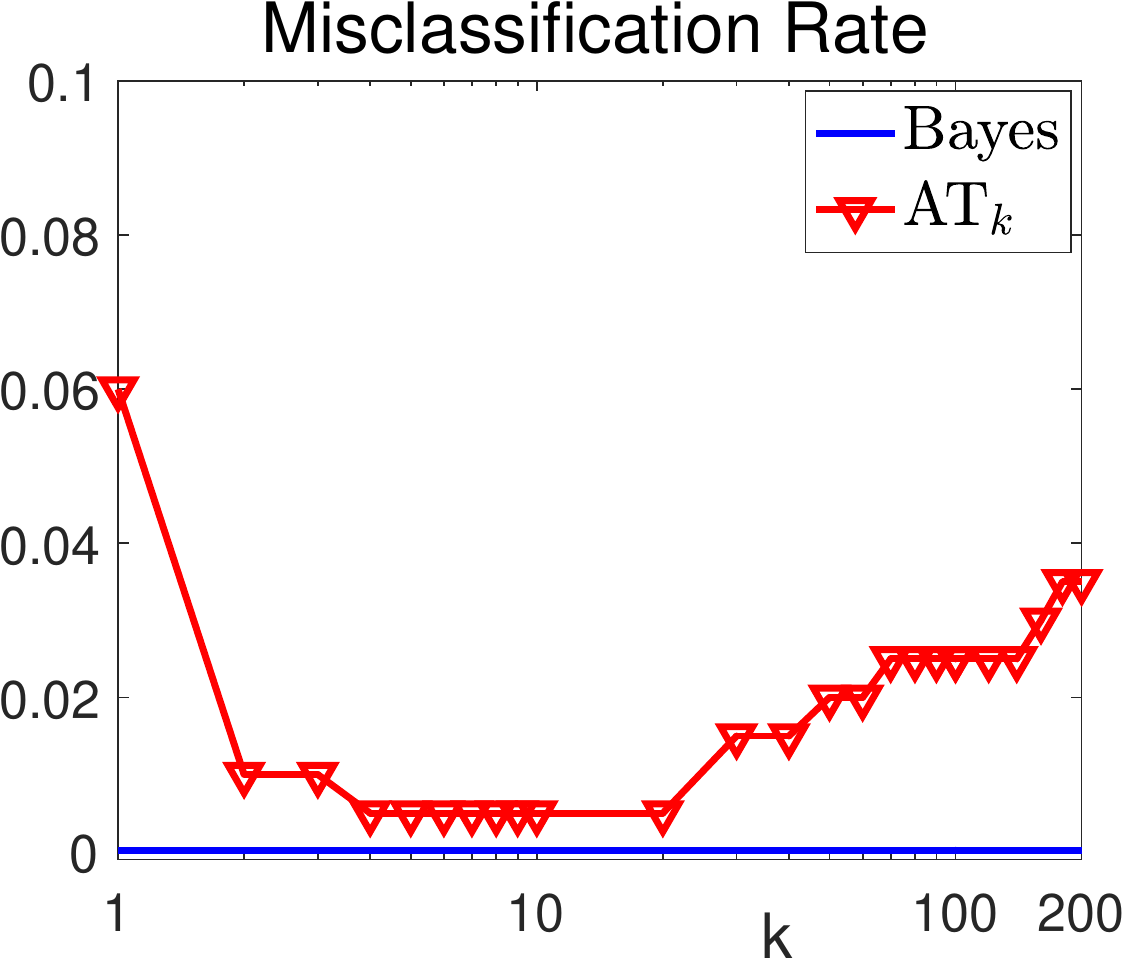}  &
\includegraphics[width=.24\textwidth]{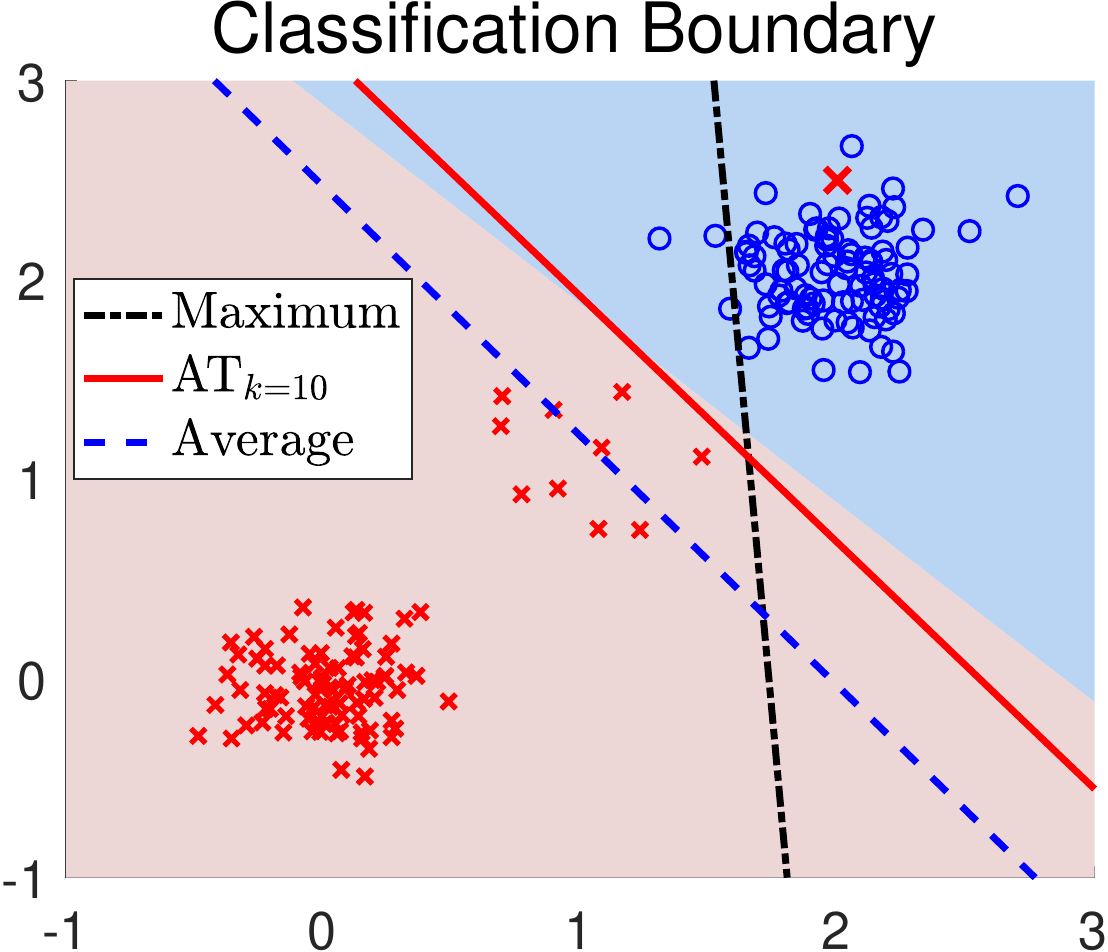} &
\includegraphics[width=.24\textwidth]{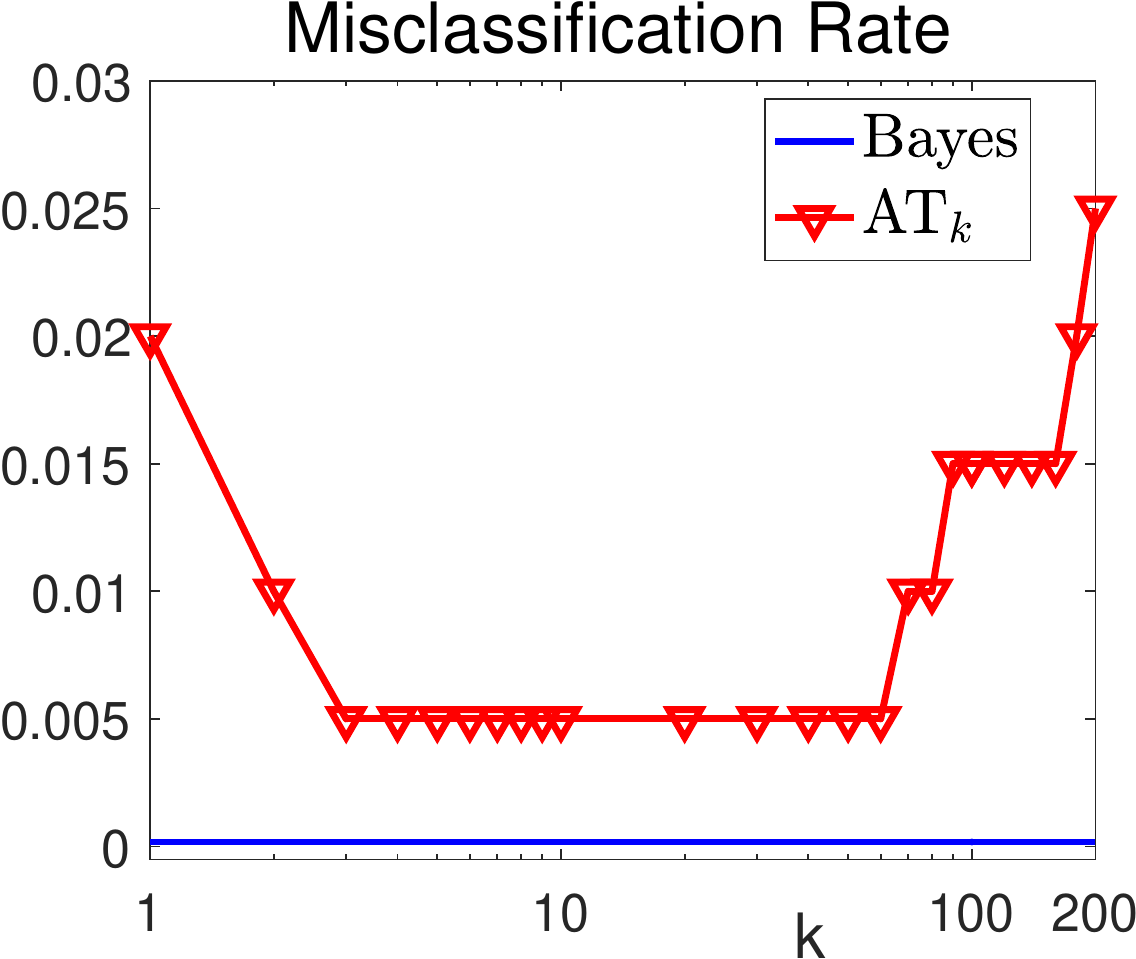} \\
\includegraphics[width=.24\textwidth]{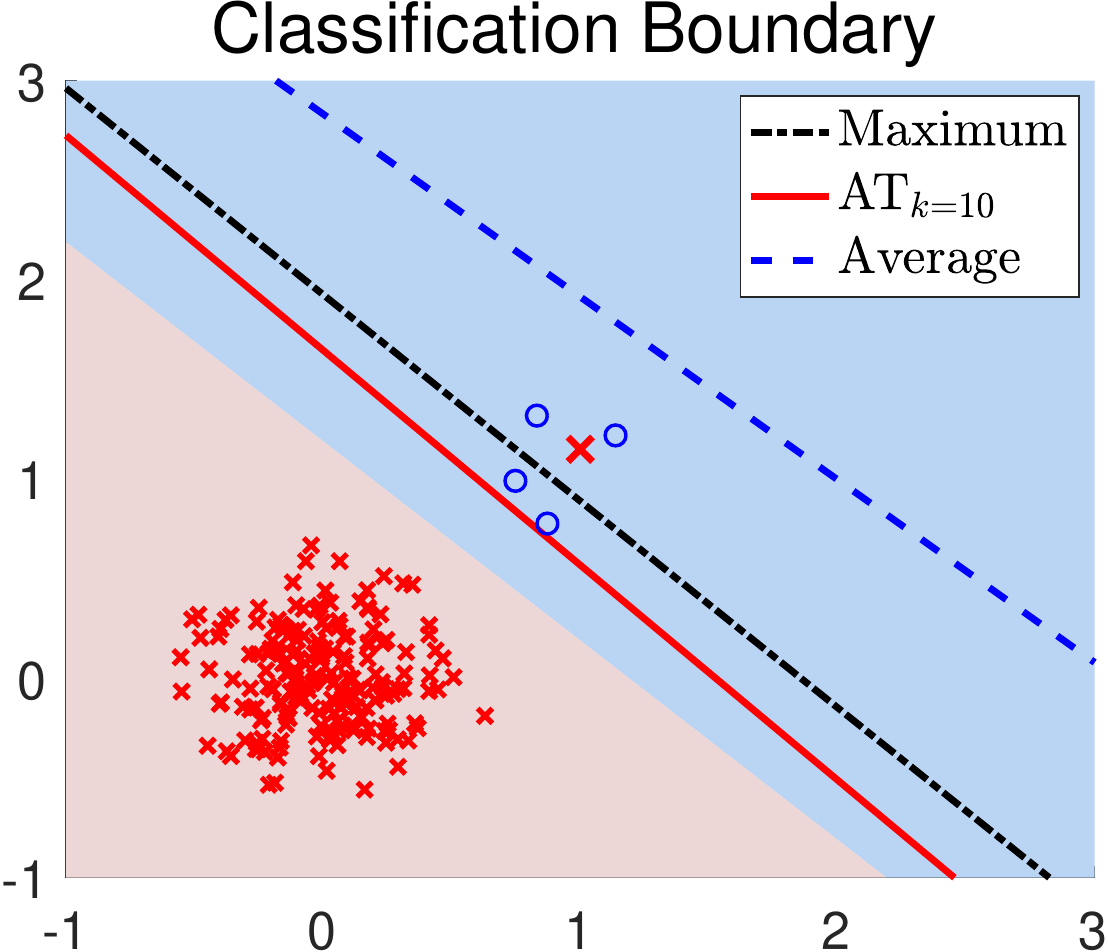} &
\includegraphics[width=.24\textwidth]{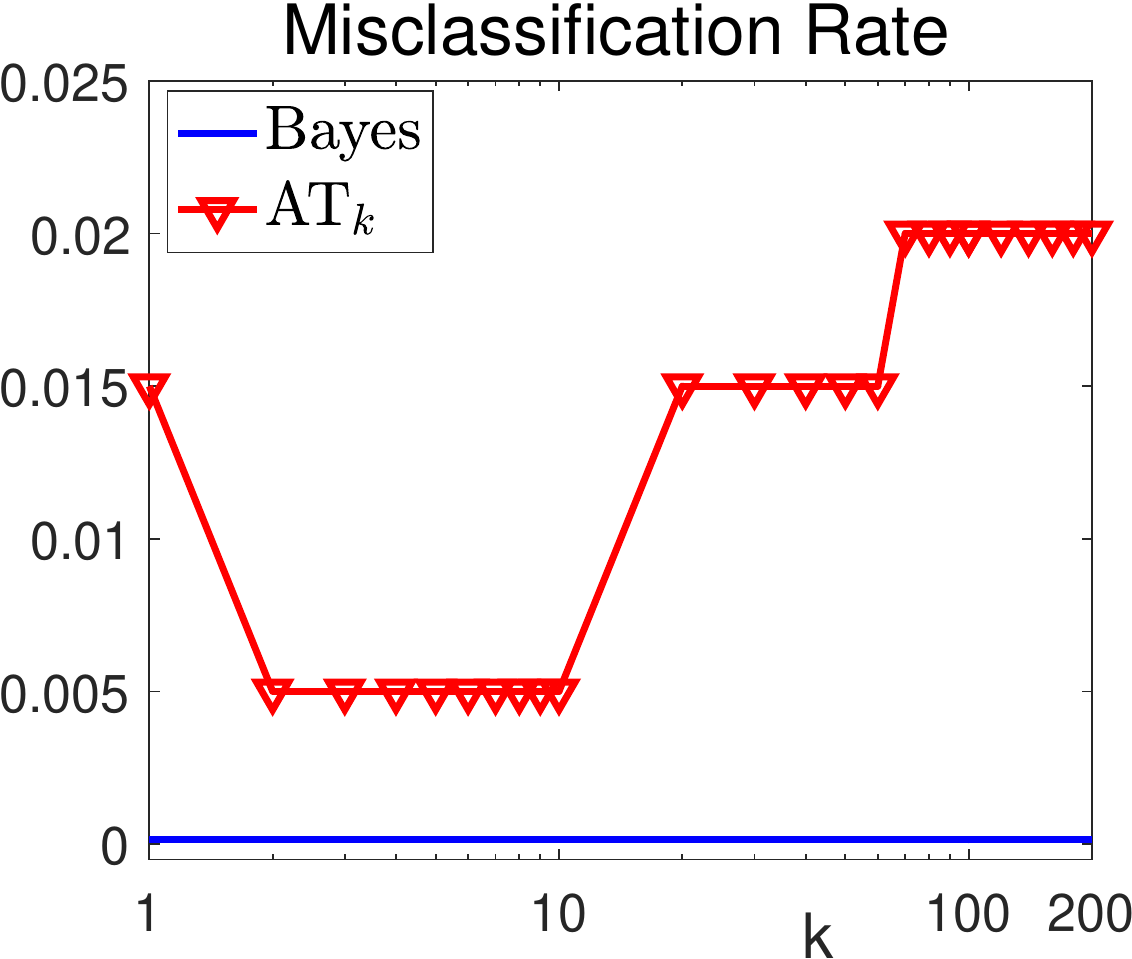}  &
\includegraphics[width=.24\textwidth]{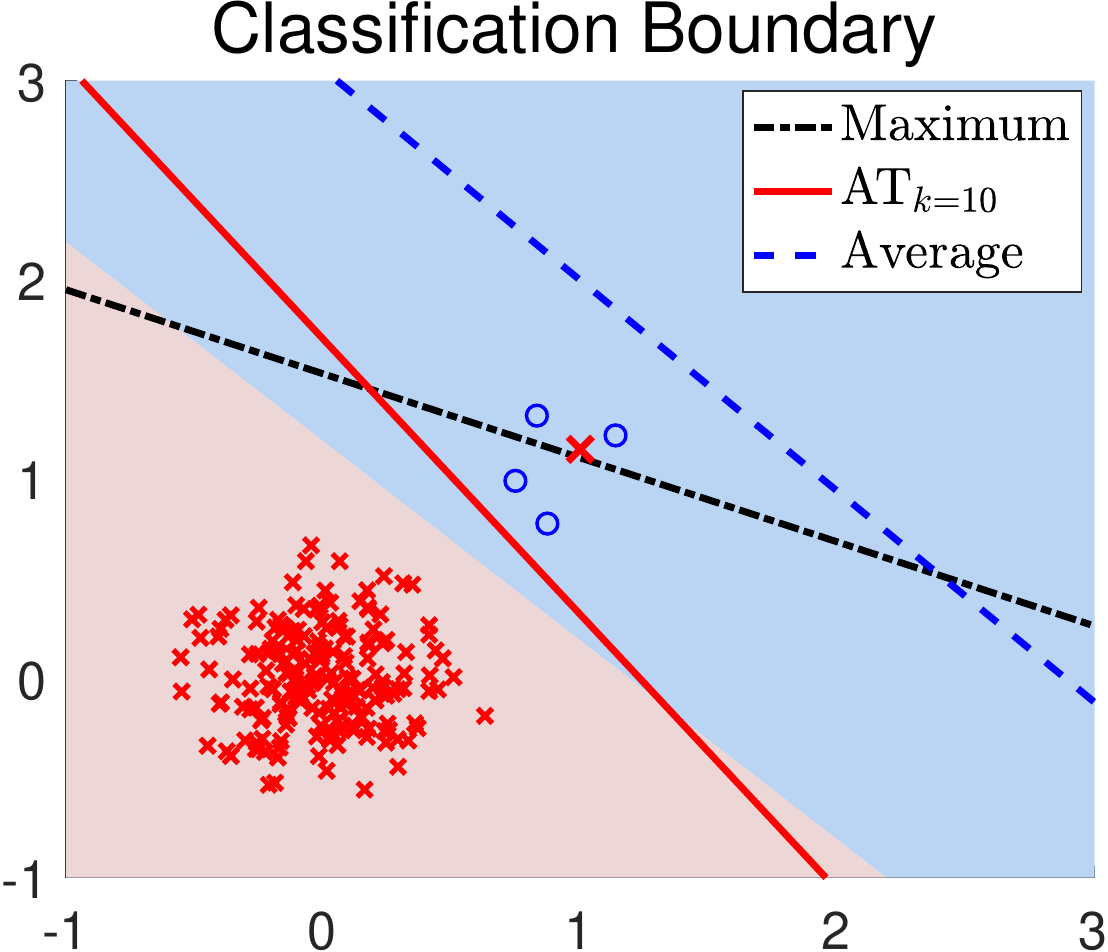} &
\includegraphics[width=.24\textwidth]{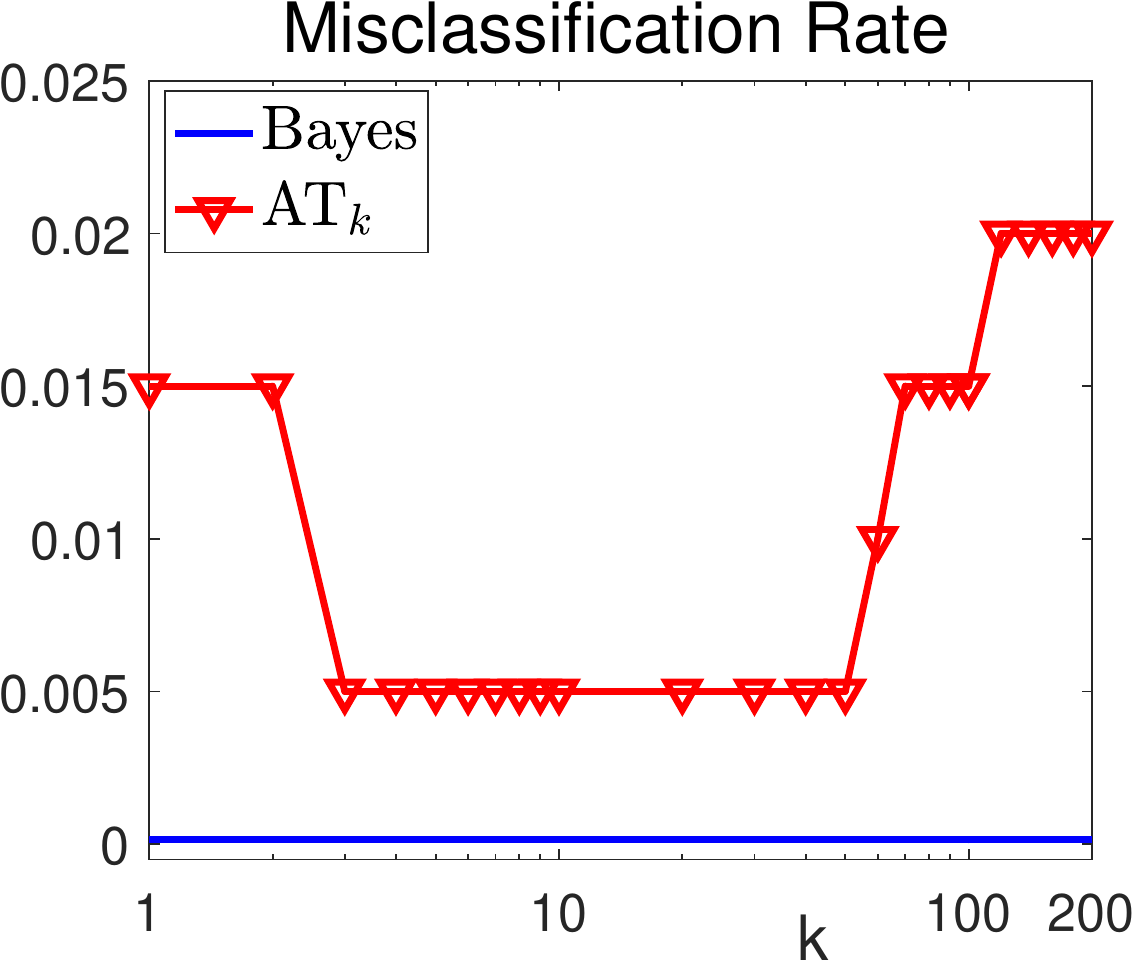} \\
\end{tabular}
~\vspace{-1em}
\caption{
	\em \small Comparison of different aggregate losses on 2D synthetic datasets with $n=200$ samples for binary classification on a balanced but multi-modal dataset and with outliers ({\bf top}) and an imbalanced dataset with outliers ({\bf bottom}) with logistic loss ({\bf left}) and hinge loss ({\bf right}). 
	Outliers in data are shown as an enlarged $\times$ and the optimal Bayes classifications are shown as shaded areas. The figures in the second and fourth columns show the misclassification rate of \atk vs. $k$ for each case.
}
\label{fig:1}
~\vspace{-2.5em}
\end{figure}
The \atk loss generalizes the average loss ($k=n$) and the maximum loss ($k=1$), yet it is less susceptible to their corresponding drawbacks, \ie, it is less sensitive to outliers than the maximum loss and can adapt to imbalanced and/or multi-modal data distributions better than the average loss. This is illustrated with two toy examples of synthesized 2D data for binary classification in Fig.\ref{fig:1} (see \Apx\: for a complete illustration). As these plots show, the linear classifier obtained with the maximum loss is not optimal due to the existence of outliers while the linear classifier corresponding to the average loss has to accommodate the requirement to minimize individual losses across all training data, and sacrifices smaller sub-clusters of data (\eg, the rare population of $+$ class in the top row and the smaller dataset of $-$ class in the bottom row).
In contrast, using \atk loss with $k=10$ {can better protect such smaller sub-clusters and} leads to linear classifiers closer to the optimal Bayesian linear classifier. This is also corroborated by the plots of corresponding misclassification rate of \atk vs. $k$ value in Fig.\ref{fig:1}, which show that minimum misclassification rates occur at $k$ value other than $1$ (maximum loss) or $n$ (average loss). 

The \atk loss is a tight upper-bound of the top-$k$ loss, as $\Lat{k}(L_\bz(f)) \ge \Lt{k}(L_\bz(f))$ with equality holds when {$k=1$ or} $\ell_i(f) = $ constant, and it is a convex function of the individual losses (see Section \ref{sec:interpretation}). Indeed, we can express $\ell_{[k]}(f)$ as the difference of two convex functions $k \Lat{k}(L_\bz(f)) - (k-1)\Lat{(k-1)}(L_\bz(f))$, which shows that in general $\Lt{k}(L_\bz(f))$ is not convex with regards to the individual losses.

In sequel, we will provide a detailed analysis of the \atk loss and \matk learning. First, we establish a reformulation of the \atk loss as the minimum of the average of the individual losses over all training examples transformed by a hinge function. This reformulation leads to a simple and effective stochastic gradient-based algorithm for \matk learning, and interprets the effect of the \atk loss as shifting down and truncating at zero the individual loss to reduce the undesirable penalty on correctly classified data. When combined with the hinge function as individual loss, the \atk aggregate loss leads to a new variant of SVM algorithm that we term as \atk SVM, which generalizes the C-SVM and the $\nu$-SVM algorithms \cite{scholkopf2000new}. We further study learning theory of \matk learning, focusing on the classification calibration of the \atk loss function and {error bounds} of the \atk SVM algorithm.  This provides a theoretical lower-bound for $k$ for reliable classification performance. We demonstrate the applicability of minimum average top-$k$ learning for binary classification and regression using synthetic and real datasets.

The main contributions of this work can be summarized as follows.
~\vspace{-.75em}
\begin{itemize} \itemsep -0em
\item We introduce the \atk loss for supervised learning, which can balance the pros and cons of the average and maximum losses, and allows the learning algorithm to better adapt to imbalanced and multi-modal data distributions. 

\item We provide algorithm and interpretation of the \atk loss, suggesting that most existing learning algorithms can take advantage of it without significant increase in computation.

\item We further study the theoretical aspects of \atk loss on classification calibration and error bounds of minimum average top-$k$ learning for \atk-SVM.

\item We perform extensive experiments to validate the effectiveness of the \matk learning.
\end{itemize}


\section{Formulation and Interpretation}
\label{sec:interpretation}
%
%

The original \atk loss, though intuitive, is not convenient to work with because of the sorting procedure involved. This also obscures its connection with the statistical view of supervised learning as minimizing the expectation of individual loss with regards to the underlying data distribution. Yet, it affords an equivalent form, which is based on the following result.
\begin{lemma}[Lemma 1, \cite{Ogryczak:2003dl}] 
$\sum_{i=1}^{k} x_{[i]}$ is a convex function of $(x_1,\cdots,x_n)$.  Furthermore, for $x_i \ge 0$ and $i = 1,\cdots, n$, we have $\sum_{i=1}^{k} x_{[i]} = \min_{\lambda \ge 0} \left\{ k\lambda + \sum_{i=1}^{n}\left[x_i - \lambda \right]_{+}  \right\}$, where $\left[a\right]_{+} = \max\{0,a\}$ is the hinge function.  
\label{lem:1}
\end{lemma}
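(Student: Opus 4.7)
The plan has two parts: first establish convexity, then derive the variational identity.

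\textbf{Convexity.} The cleanest route is to write the top-$k$ sum as a pointwise maximum of linear functions. Specifically, I will use the identity
\[
\sum_{i=1}^{k} x_{[i]} \;=\; \max_{S\subseteq\{1,\dots,n\},\, |S|=k} \sum_{i\in S} x_i,
\]
whose "$\ge$" direction is immediate (the set of top-$k$ indices is a valid $S$) and whose "$\le$" direction follows because for any size-$k$ subset $S$, $\sum_{i\in S} x_i \le \sum_{i=1}^k x_{[i]}$ by the definition of order statistics. Since this is a finite max of linear (hence convex) functions of $x$, convexity of $\sum_{i=1}^k x_{[i]}$ follows from the standard fact that the pointwise max of convex functions is convex.

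\textbf{The variational formula.} I will interpret the top-$k$ sum as the optimum of a linear program and read off the dual. Observe that
\[
\sum_{i=1}^{k} x_{[i]} \;=\; \max_{w\in[0,1]^n,\,\sum_i w_i = k}\; w^\top x,
\]
since a vertex of the feasible polytope is a $0/1$ indicator of a size-$k$ set, and the maximum is attained by selecting the indicator of the top-$k$ coordinates. Introducing a Lagrange multiplier $\lambda\in\mathbb{R}$ for the equality constraint $\sum_i w_i = k$ while keeping the box constraint explicit, the inner maximization splits coordinate-wise:
\[
\max_{0\le w_i\le 1}\, w_i(x_i-\lambda) \;=\; [x_i-\lambda]_+,
\]
so the Lagrangian dual function is $g(\lambda) = k\lambda + \sum_{i=1}^n [x_i-\lambda]_+$. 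By LP strong duality, $\sum_{i=1}^k x_{[i]} = \min_{\lambda\in\mathbb{R}} g(\lambda)$.

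\textbf{Reducing $\lambda\in\mathbb{R}$ to $\lambda\ge 0$.} It remains to verify that the unconstrained minimum of $g$ is attained at some $\lambda\ge 0$ whenever $x_i\ge 0$. Each $[x_i-\lambda]_+$ is convex and piecewise linear in $\lambda$, so $g$ is convex and piecewise linear, with right-derivative $g'_+(\lambda) = k - |\{i:x_i>\lambda\}|$. This derivative is nonpositive for $\lambda < x_{[k]}$ and nonnegative for $\lambda \ge x_{[k]}$, so the minimum is attained at $\lambda^\star = x_{[k]}$; the nonnegativity assumption $x_i\ge 0$ gives $\lambda^\star\ge 0$. A direct plug-in confirms the value: at $\lambda=x_{[k]}$, the indices with $[x_i-\lambda]_+>0$ are exactly the top $k$ (the $k$-th contributing $0$), so $g(x_{[k]}) = kx_{[k]} + \sum_{i=1}^k (x_{[i]}-x_{[k]}) = \sum_{i=1}^k x_{[i]}$.

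\textbf{Expected obstacle.} The only nontrivial point is justifying that the constraint $\lambda\ge 0$ is not binding; the subdifferential/derivative computation above handles this cleanly once the $x_i\ge 0$ hypothesis is used. Ties among the $x_i$ (so that several indices equal $x_{[k]}$) do not cause trouble because at $\lambda = x_{[k]}$ any tied indices contribute exactly $[x_i-\lambda]_+=0$, and the optimum value is unchanged; this is precisely the flexibility provided by the box variables $w_i\in[0,1]$ in the primal LP. Those who prefer to avoid LP duality can instead take the variational identity as the definition of $g$, verify convexity of $g$ in $\lambda$ from the sum-of-hinges form, and use the derivative analysis above together with the plug-in evaluation to conclude.
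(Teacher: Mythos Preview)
Your proof is correct and follows essentially the same LP-duality route as the paper: both write $\sum_{i=1}^k x_{[i]}$ as $\max\{p^\top x : 0\le p\le 1,\ \mathbf{1}^\top p = k\}$, dualize to obtain $\min_{\lambda} \{k\lambda + \sum_i [x_i-\lambda]_+\}$, and then observe that $\lambda^\star = x_{[k]}$ is optimal so the constraint $\lambda\ge 0$ is free when $x_i\ge 0$. The one minor difference is in the convexity argument: you use the max-of-linear-functions representation directly, whereas the paper deduces convexity \emph{a posteriori} from the variational formula (as the partial minimum of a jointly convex function); both are standard.
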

For completeness, we include a proof of Lemma \ref{lem:1} in \Apx. Using Lemma \ref{lem:1}, we can reformulate the \atk loss \eqref{eq:0} as
~\vspace{-.5em}\begeqn
\Lat{k}(L_\bz(f)) = {1 \over k} \sum_{i=1}^k \ell_{[i]}(f) \propto \min_{\lambda \ge 0} \lt\{{1\over n}\sum_{i=1}^{n}\left[\ell_i(f) - \lambda \right]_{+} + {k \over n}\lambda \rt\}. \label{eq:atk}
\endeqn 
In other words, the \atk loss is equivalent to minimum of the average of individual losses that are shifted and truncated by the hinge function controlled by $\lambda$. This sheds more lights on the \atk loss, which is particularly easy to illustrate in the context of binary classification using the margin losses, $\ell(f(\x),y) = \ell(yf(\x))$. 
\begin{wrapfigure}{R}{.35\textwidth}
\centering
\includegraphics[width=.35\textwidth]{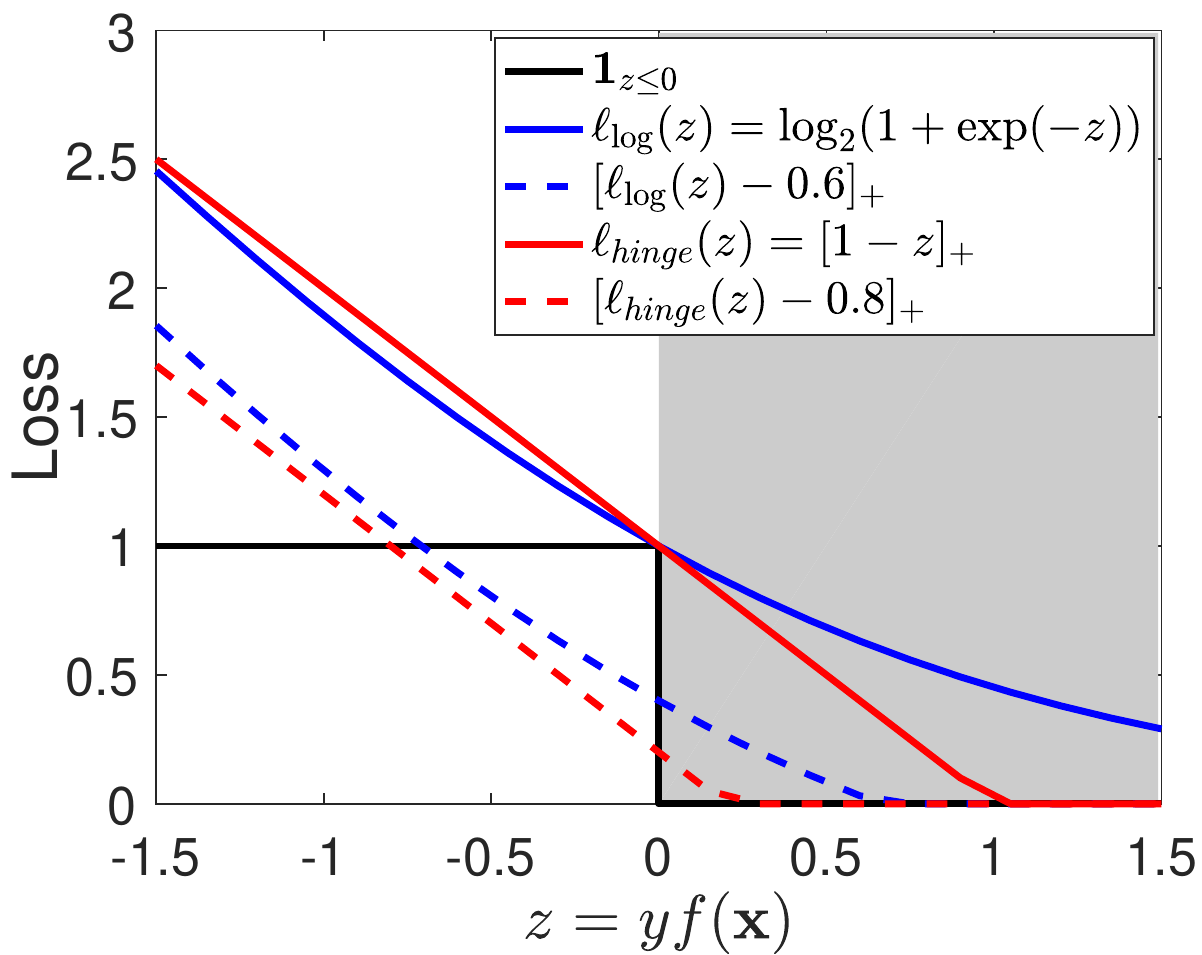}
~\vspace{-2em}
\caption{\em \small The \atk loss interpreted at the individual loss level. Shaded area corresponds to data/target with correct classification.}
\label{fig:2}
~\vspace{-2.5em}
\end{wrapfigure}

In binary classification, the ``gold standard'' of individual loss is the $0$-$1$ loss $\mbI_{yf(\x)\le 0}$, which exerts a constant penalty $1$ to examples that are misclassified by $f$ and no penalty to correctly classified examples. However, the $0$-$1$ loss is difficult to work as it is neither continuous nor convex. In practice, it is usually replaced by a surrogate convex loss. Such convex surrogates afford efficient algorithms, but as continuous and convex upper-bounds of the $0$-$1$ loss, they typically also penalize correctly classified examples, \ie, for $y$ and $\x$ that satisfy $yf(\x) >0$, $\ell(yf(\x)) > 0$, whereas $\mbI_{yf(\x)\le 0} = 0$ (Fig.\ref{fig:2}). This implies that when the average of individual losses across all training examples is minimized, correctly classified examples by $f$ that are ``too close'' to the classification boundary may be sacrificed to accommodate reducing the average loss, as is shown in Fig.\ref{fig:1}.

In contrast, after the individual loss is combined with the hinge function, \ie, $\left[\ell(yf(\x)) - \lambda \right]_{+}$ with $\lambda > 0$, it has the effect of ``shifting down'' the original individual loss function and truncating it at zero, see Fig.\ref{fig:2}. The transformation of the individual loss reduces penalties of all examples, and in particular benefits correctly classified data. In particular, if such examples are ``far enough'' from the decision boundary, like in the $0$-$1$ loss, their penalty becomes zero. This alleviates the likelihood of misclassification on those rare sub-populations of data that are close to the decision boundary. 

\smallskip

\noindent{\bf Algorithm}: The reformulation of the \atk loss in Eq.\eqref{eq:atk} also facilitates development of optimization algorithms for the minimum \atk learning.  As practical supervised learning problems usually use a parametric form of $f$, as $f(\x;\w)$, where $\w$ is the parameter, the corresponding minimum \atk objective becomes
\begin{equation}
\min_{\w, \lambda \ge 0} \lt\{ {1\over n}\sum_{i=1}^{n}\left[\ell(f(\x_i;\w),y_i) - \lambda \right]_{+}  + {k \over n} \lambda + \Omega(\w)\rt\},
\label{eq:1}
\end{equation}
 It is not hard to see that if $\ell(f(\x;\w),y)$ is convex with respect to $\w$,  the objective function of in Eq.\eqref{eq:1} is a convex function for $\w$ and $\lambda$ jointly. This leads to an immediate stochastic (projected) gradient descent \cite{bousquet2008tradeoffs,srebro2010stochastic} for solving \eqref{eq:1}. For instance, with $\Omega(\w) = {1 \over 2C} \|\w\|^2$, where $C > 0$ is a regularization factor, at the $t$-th iteration, the corresponding \matk objective can be minimized by first randomly sampling $(\x_{i_t}, y_{i_t})$ from the training set and then updating the parameters as
\begin{equation}
\begin{array}{ll}
\w^{(t+1)} & \leftarrow \w^{(t)} - \eta_t \lt( \partial_{\w}\ell(f(\x_{i_t};\w^{(t)}),y_{i_t}) \cdot \mbI_{[\ell(f(\x_{i_t};\w^{(t)}),y_{i_t}) > \lambda^{(t)}]}  + {\w^{(t)}\over C}\rt) \\
\gl^{(t+1)} & \leftarrow \lt[ \gl^{(t)} - \eta_t \lt( {k \over n}  -  \mbI_{[\ell(f(\x_{i_t};\w^{(t)},y_{i_t}) > \lambda^{(t)}]}\rt)\rt]_+
\end{array}
\label{eq:2}
\end{equation}
where $\partial_{\w}\ell(f(\x;\w),y)$ denotes the sub-gradient with respect to $\w,$ and $\eta_t \sim {1\over \sqrt{t}}$ is the step size. 

\smallskip

\noindent{\bf \atk-SVM}: As a general aggregate loss, the \atk loss can be combined with any functional form for individual losses. In the case of binary classification, the \atk loss combined with the individual hinge loss for a prediction function $f$ from a reproducing kernel Hilbert space (RKHS) \cite{scholkopf2001learning} leads to the \atk-SVM model. Specifically, we consider function $f$ as a member of RKHS $\H_K$ with norm $\|\cdot\|_K$, which is induced from a reproducing kernel $K: \X \times \X \to \R$. Using the individual hinge loss, $[1 - y_i f(\x_i)]_+$,  the corresponding \matk learning objective in RKHS becomes  
\begin{equation}
	\min_{f \in \H_K,\lambda \ge 0}  {1 \over n}\sum_{i=1}^{n} \left[\left[1 - y_i f(\x_i) \right]_+ - \lambda \right]_+ + {k \over n} \lambda + \frac{1}{2C} \|f\|^2_K,
	\label{topk_hinge_v1}
\end{equation}
where $C > 0$ is the regularization factor.  Furthermore, the outer hinge function in \eqref{topk_hinge_v1} can be removed due to the following result.
\begin{lemma}\label{lemm:hing-hing} For $a \geq 0$, $b\geq 0$, there holds
	$\left[ \left[a - \ell \right]_+ - b \right]_+$ = $\left[a-b-\ell\right]_+$.
\end{lemma}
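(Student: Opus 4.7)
The plan is to prove this identity by a straightforward case analysis on the sign of the inner expression $a - \ell$, exploiting the nonnegativity of $b$ to handle the degenerate case.

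First I would split into two cases. In the case $a - \ell \ge 0$, the inner hinge evaluates trivially as $[a - \ell]_+ = a - \ell$, so the left-hand side becomes $[(a-\ell) - b]_+ = [a - b - \ell]_+$, which is exactly the right-hand side. No use of the sign hypotheses on $a, b$ is needed here.

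In the complementary case $a - \ell < 0$, the inner hinge collapses: $[a-\ell]_+ = 0$. The left-hand side is then $[0 - b]_+ = [-b]_+$, and this is where the assumption $b \ge 0$ enters, forcing $[-b]_+ = 0$. For the right-hand side, I would observe $a - b - \ell = (a - \ell) - b < 0 - b = -b \le 0$, so $[a-b-\ell]_+ = 0$ as well. Combining the two cases yields the claimed identity.

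The only subtlety, and really the only place the hypotheses matter, is the second case: without $b \ge 0$ one could not conclude $[-b]_+ = 0$, and without being able to bound $a - b - \ell$ above by a nonpositive quantity one could not collapse the right-hand side to zero. There is no real obstacle here since the argument is elementary; the value of the lemma is conceptual rather than technical, in that it lets the nested hinge in \eqref{topk_hinge_v1} be rewritten as a single hinge $[1 - \lambda - y_i f(\x_i)]_+$, which is what makes the \atk-SVM formulation tractable.
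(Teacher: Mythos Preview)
Your proof is correct and is essentially the same as the paper's: both split on the sign of $a-\ell$ (equivalently, on whether $\ell \ge a$), collapse the inner hinge accordingly, and invoke $b\ge 0$ only in the case $a-\ell<0$ to force both sides to zero. As you implicitly noticed, the hypothesis $a\ge 0$ is never actually used in either argument.
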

Proof of Lemma \ref{lemm:hing-hing} can be found in the \Apx. In addition, note that for any minimizer $(f_\bz, \gl_\bz)$ of \eqref{topk_hinge_v1}, setting $f(\x)=0, \gl=1$ in the objective function of \eqref{topk_hinge_v1}, we have $ {k\over n} \lambda_\bz  \le {1 \over n}\sum_{i=1}^{n} \left[\left[1 - y_i f_\bz(\bx_i) \right]_+ - \lambda_\bz \right]_+ + {k \over n} \lambda_\bz + \frac{1}{2C} \|f_\bz\|^2_K \le {k\over n}$, so we have $0 \le \gl_\bz \le 1$ which means that the minimization can be restricted to $0\le\gl\le 1.$  Using these results and introducing $\rho = 1-\lambda$, Eq.\eqref{topk_hinge_v1} can be rewritten as
\begin{equation}
	\min_{f \in \H_K, 0\le \rho \le 1} \  {1\over n} \sum_{i=1}^{n} [\rho - y_i f(\x_i)]_+ - {k \over n} \rho   + \frac{1}{2C} \|f\|^2_K.
	\label{topk_hinge_v2}
\end{equation}
The \atk-SVM objective generalizes many several existing SVM models.  For example, when $k = n$, it equals to the standard C-SVM \cite{cortes1995support}. When $C = 1$ and with conditions $K(\x_i,\x_i)\le 1$ for any $i$, \atk-SVM reduces to $\nu$-SVM \cite{scholkopf2000new} with $\nu={k\over n}$. Furthermore, similar to the conventional SVM model, writing in the dual form of \eqref{topk_hinge_v2} can lead to a convex quadratic programming problem that can be solved efficiently. See \Apx\: for more detailed explanations. 

{\bf Choosing $k$}. The number of top individual losses in the \atk loss is a critical parameter that affects the learning performance. In concept, using \atk loss will not be worse than using average or maximum losses as they correspond to specific choices of $k$. In practice, $k$ can be chosen during training from a validation dataset as the experiments in Section \ref{sec:4}. As $k$ is an integer, a simple grid search usually suffices to find a satisfactory value.  Besides, Theorem \ref{thm:fisher} in Section \ref{sec:analysis} establishes a theoretical lower bound for $k$ to guarantee reliable classification based on the Bayes error. If we have information about the proportion of outliers, we can also narrow searching space of $k$ based on the fact that \atk loss is the convex upper bound of the top-k loss, which is similar to \cite{shalev2016minimizing}.

%
%

\section{Statistical Analysis}\label{sec:analysis}

In this section, we address the statistical properties of the \atk objective in the context of binary classification. Specifically, we investigate the property of {\em classification calibration} \cite{bartlett2006convexity} of the \atk general objective, and  derive bounds for the misclassification error of  the \atk-SVM model in the framework of statistical learning theory (\eg\, \cite{bartlett2006convexity,de2005model,steinwart2008support,wu2006learning}).

\subsection{Classification Calibration under \atk Loss}
\label{sec:31}
We assume the training data $\bz=\{(\x_i,y_i)\}_{i=1}^n$ are i.i.d. samples from an unknown distribution $p$ on $\X \times \{\pm 1\}$.  Let $p_\X$ be the marginal distribution of $p$ on the input space $\X$.
Then, the misclassification error of a  classifier $f:\X \to \{\pm 1\}$ is denoted by $\cR(f) = \Pr( y \neq f(\x))=  \EX[\mbI_{yf(\x)\le 0}]$. The Bayes error is given by $\cR^\ast = \inf_{f} \cR(f),$ where the infimum is over all measurable functions. No function can achieve less risk than the Bayes rule $f_c(\x)=\sgn(\eta(\x) - {1 \over 2})$, where $\eta(\x) = \Pr(y=1|\x)$ \cite{devroye2013probabilistic}.  

In practice, one uses a surrogate loss $\ell: \R \to [0,\infty)$ which is convex and upper-bound the $0$-$1$ loss. The population  $\ell$-risk (generalization error) is given by $\cE_\ell(f) =  \EX[\ell(yf(x))]$.  Denote the optimal $\ell$-risk by $\cE^\ast_\ell = \inf_f \cE_\ell(f)$. A very basic requirement for using such a surrogate loss $\ell$ is the so-called {\em classification calibration} (point-wise form of Fisher consistency) \cite{bartlett2006convexity,lin}. Specifically, a loss $\ell$ is {\em classification calibrated} with respect to distribution $p$ if, for any $x$, the minimizer $f^\ast_\ell = \inf_{f}\cE_\ell(f)$ should have the same sign as the Bayes rule $f_c(\x)$, \ie, $\sgn(f^\ast_\ell(\x)) = \sgn(f_c(\x))$ whenever $f_c(\x) \neq 0$.  

An appealing result concerning the classification calibration of a loss function $\ell$ was obtained in \cite{bartlett2006convexity}, which states that $\ell$ is classification calibrated if $\ell$ is convex,  differentiable at $0$ and $\ell'(0)<0$.  In the same spirit, we investigate the classification calibration property of the \atk loss. Specifically, we first obtain the population form of the \atk objective using the infinite limit of \eqref{eq:atk}
\[
{1\over n}\sum_{i=1}^{n}\left[\ell(y_if(\x_i)) - \lambda \right]_{+} + {k \over n}\lambda \xrightarrow[n \to \infty]{{k \over n} \to \nu} \EX \lt[[ \ell(yf(\x))-\gl ]_+ \rt] + \nu\lambda.
\]
We then consider the optimization problem
\begeqn \label{eq:fisher-obj}  
(f^\ast,\gl^\ast) = \arg\inf_{f, \lambda\ge 0}  \EX \lt[[ \ell(yf(\x))-\gl ]_+\rt]  + \nu \gl,
\endeqn
where the infimum is taken over all measurable function $f: \X \to \R$. We say the \atk (aggregate) loss is classification calibrated with respect to $p$ if $f^\ast$ has the same sign as the Bayes rule $f_c$. 
The following theorem establishes such conditions.
%
\begth \label{thm:fisher}Suppose the individual loss $\ell:  \R  \to \R^+$ is convex, differentiable at $0$ and $\ell'(0)<0$. Without loss of generality, assume that $\ell(0)=1$. Let $(f^\ast,\gl^\ast)$ be defined in \eqref{eq:fisher-obj},
\begin{itemize}
\item[(i)] If $\nu> {\cE^\ast_\ell}$ then the \atk loss is classification calibrated.
\item[(ii)] If, moreover, $\ell$ is monotonically decreasing and the \atk aggregate loss is classification calibrated then $\nu \ge \int_{\eta(\bx)\neq {1\over 2}} \min( \eta(\x),1-\eta(\x) ) dp_\X(\x)$. \end{itemize}
\endth
The proof of Theorem \ref{thm:fisher} can be found in the \Apx. Part (i) and (ii) of the above theorem address respectively the sufficient and necessary conditions on $\nu$ such that the \atk loss becomes classification calibrated. Since $\ell$ is an upper bound surrogate of the $0$-$1$ loss, the optimal $\ell$-risk $\cE^\ast_\ell$ is larger than the Bayes error $\cR^\ast,$ \ie,\;  $\cE^\ast_\ell \ge \cR^\ast$. In particular, if the individual loss $\ell$ is the hinge loss then $\cE^\ast_\ell = 2\cR^\ast$.  Part (ii) of the above theorem indicates that the \atk aggregate loss is classification calibrated if $\nu=\lim_{n\to \infty} {k/n}$ is larger than the optimal generalization error $\cE^\ast_\ell$ associated with the individual loss. The choice of $k >  n \cE^\ast_\ell$ thus guarantees classification calibration, which gives a lower bound of $k$. This result also provides a theoretical underpinning of the sensitivity to outliers of the maximum loss (\atk loss with $k=1$).  If the probability of the set $\{x: \eta(x) = 1/2 \}$ is zero,  $\cR^\ast = \int_\X \min( \eta(\x),1-\eta(\x) ) dp_\X(\x) = \int_{\eta(\x)\neq {1/ 2} } \min( \eta(\x),1-\eta(\x) ) dp_\X(x)$. Theorem \ref{thm:fisher} indicates that in this case, if the maximum loss is calibrated, one must have ${1\over n} \approx \nu \geq R^\ast$. In other words, as the number of training data increases, the Bayes error has to be arbitrarily small, which is consistent with the empirical observation that the maximum loss works well under the well-separable data setting but are sensitive to outliers and non-separable data. 


\subsection{Error bounds of \atk-SVM}

We next study the excess misclassification error of the \atk-SVM model \ie,\, $\cR(\sgn(f_\bz)) - \cR^\ast$. Let $(f_\bz, \rho_\bz)$ be the minimizer of the \atk-SVM objective \eqref{topk_hinge_v2} in the RKHS setting.  Let $f_\H$ be the minimizer of the generalization error over the RKHS space $\H_K$, \ie,\: $f_{\H} =  \argmin_{f\in \H_K}\cE_h(f)$, where we use the notation $\cE_h(f)= \EX\lt[[1 - yf(\x)]_+\rt]$ to denote the $\ell$-risk of the hinge loss. In the finite-dimension case, the existence of $f_\H$ follows from the direct method in the variational calculus, as $\cE_h(\cdot)$ is lower bounded by zero, coercive, and weakly sequentially lower semi-continuous by its convexity. For an infinite dimensional $\H_K$, we assume the existence of $f_\H$. We also assume that $\cE_h(f_\H) <1$ since even a na\"ive zero classifier can achieve $\cE_h(0) =1$.  Denote the approximation error by $\A(\H_K) = \inf_{f\in \H_K} \cE_h(f) - \cE_h(f_c) =  \cE_h(f_\H) - \cE_h(f_c)$, and let $\gk= \sup_{\x\in \X} \sqrt{K(\x,\x)}$. 
The main theorem  can be stated as follows.
\beg{theorem}\label{thm:analysis} Consider the \atk-SVM in RKHS \eqref{topk_hinge_v2}. 
For any $\gep \in (0, 1]$ and $\mu\in (0, 1-\cE_h(f_\H))$, choosing $k=  \lceil n (\cE_h(f_\H)+ \mu)\rceil$.   Then, it holds 
\[
\Pr\bigl\{  {\cal R}(\sgn(f_{\bz}))-\cR^\ast
\ge \mu   + \A(\H) +\gep +  {1+ C_{\gk,\H} \over \sqrt{n} \mu } \bigr\} \le 2\exp\bigl( -{n \mu^2 \gep^2 \over (1+ C_{\gk,\H})^2}\bigr),
\]
where $C_{\gk,\H} = \gk (2\sqrt{2C} + 4\|f_\H\|_K)$.
\end{theorem}
The complete proof of Theorem \ref{thm:analysis} is given in the \Apx. The main idea is to show that $\rho_\bz$ is bounded from below by a positive constant with high probability, and then bound the excess misclassification error  $\cR(\sgn(f^\ast_\bz)) - \cR^\ast$ by $\cE_h({f_\bz/\rho_\bz}) - \cE_h(f_c)$.
If $K$ is a universal kernel then $\A(\H_K)=0$ \cite{steinwart2008support}. In this case, let $\mu = \gep \in (0, 1-\cE_h(f_\H))$, then from Theorem \ref{thm:analysis} we have 
\[
\Pr\bigl\{  {\cal R}(\sgn(f_{\bz}))-\cR^\ast
\ge  2\gep +  {1+ C_{\gk,\H} \over \sqrt{n} \gep } \bigr\} \le 2\exp\bigl( -{n\gep^4 \over (1+ C_{\gk,\H})^2}\bigr),
\]
Consequently, choosing $C$ such that $\lim_{n\to\infty}  {C /n } = 0$, which is equivalent to $\lim_{n\to\infty}  {(1+ C_{\gk,\H})^2  /n } = 0$, then $ {\cal R}(\sgn(f_{\bz}))$ can be arbitrarily close to the Bayes error $\cR^\ast$, with high probability, as long as $n$ is sufficiently large.


\section{Experiments}
\label{sec:4}

We have demonstrated that \atk loss provides a continuum between the average loss and the maximum loss, which can potentially alleviates their drawbacks. A natural question is whether such an advantage actually benefits practical learning problems. In this section, we demonstrate the behaviors of \matk learning coupled with different individual losses for binary classification and regression on synthetic and real datasets, with minimizing the average loss and the maximum loss treated as special cases for $k=n$ and $k=1$, respectively.  For simplicity, in all experiments, we use homogenized linear prediction functions $f(\x) = \w^T\x$ with parameters $\w$ and the Tikhonov regularizer $\Omega(\w) = {1 \over 2C} ||\w||^2$ , and optimize the \matk learning objective with the stochastic gradient descent method given in \eqref{eq:2}.

\textbf{Binary Classification:} We conduct experiments on binary classification using eight benchmark datasets from the UCI\footnote{\url{https://archive.ics.uci.edu/ml/datasets.html}} and KEEL\footnote{\url{http://sci2s.ugr.es/keel/datasets.php}} data repositories to illustrate the potential effects of using \atk loss in practical learning to adapt to different underlying data distributions. A detailed description of the datasets is given in \Apx. The standard individual logistic loss and hinge loss are combined with different aggregate losses. Note that average loss combined with individual logistic loss corresponds to the logistic regression model and average loss combined with individual hinge loss leads to the C-SVM algorithm \cite{cortes1995support}. 

For each dataset, we randomly sample 50\%, 25\%, 25\% examples as training, validation and testing sets, respectively. During training, we select parameters $C$ (regularization factor) and $k$ (number of top losses) on the validation set.  Parameter $C$ is searched on grids of $\log_{10}$ scale in the range of $[10^{-5}, 10^{5}]$ (extended when optimal value is on the boundary), and $k$ is searched on grids of $\log_{10}$ scale in the range of $[1, n]$. We use $k^\ast$ to denote the optimal $k$ selected from the validation set. 

We report the average performance over $10$ random splitting of training/validation/testing for each dataset with \matk learning objectives formed from individual logistic loss and hinge loss.  Table \ref{results:binary classification} gives their experimental results in terms of misclassification rate (results in terms of other classification quality metrics are given in \Apx). Note that on these datasets, the average loss consistently outperforms the maximum loss, but the performance can be further improved with the \atk loss, which is more adaptable to different data distributions. This advantage of the \atk loss is particularly conspicuous for datasets {\tt Monk} and {\tt Australian}.

\begin{table}
	\centering
	\footnotesize
	\renewcommand\arraystretch{1.1}

	\begin{tabular}{c|c c c || c c c}
		\hline 
		\multirow{2}{*}{} & \multicolumn{3}{c||}{Logistic Loss} & \multicolumn{3}{c}{Hinge Loss}
		\\ \cline{2-7}
		& Maximum & Average & AT$_{k^\ast}$ & Maximum & Average & AT$_{k^\ast}$ \\
		\hline	
		
		{\tt Monk}   
		& 22.41(2.95)	 & 20.46(2.02)	 & \textbf{16.76(2.29)}	     
		& 22.04(3.08)	 & 18.61(3.16)	 & \underline{17.04(2.77)}  \\
		
		{{\tt Australian}}
		& 19.88(6.64)	 & 14.27(3.22)	 & \textbf{11.70(2.82)} 	 
		& 19.82(6.56)	 & 14.74(3.10)	 & \underline{12.51(4.03)}  \\
		
		{\tt Madelon}
		& 47.85(2.51)	 & 40.68(1.43)	 & \textbf{39.65(1.72)} 	
		& 48.55(1.97)	 & 40.58(1.86)	 & 40.18(1.64) 	\\
		
		{\tt Splice}
		& 23.57(1.93)	 & 17.25(0.93)	 & \textbf{16.12(0.97)}
		& 23.40(2.10)	 & \underline{16.25(1.12)}	 & \underline{16.23(0.97)}  \\
		
		{\tt Spambase}
		& 21.30(3.05)	 & 8.36(0.97)	 & 8.36(0.97)
		& 21.03(3.26)	 & \textbf{7.40(0.72)}	 & \textbf{7.40(0.72)}  \\
		
		{\tt German}
		& 28.24(1.69)	 & 25.36(1.27)	 & \textbf{23.28(1.16)}
		& 27.88(1.61)	 & \underline{24.16(0.89)}	 & \underline{23.80(1.05)}  \\
		
		{\tt Titanic}
		& 26.50(3.35)	 & 22.77(0.82)	 & 22.44(0.84)
		& 25.45(2.52)	 & 22.82(0.74)	 & \textbf{22.02(0.77)}  \\
		
		{\tt Phoneme}   
		& 28.67(0.58)	 & 25.50(0.88)	 & 24.17(0.89)  
		& 28.81(0.62)	 & \textbf{22.88(1.01)}	 & \textbf{22.88(1.01)}	\\
		
		\hline
	\end{tabular}
	\caption{\em \small Average misclassification rate (\%) of different learning objectives over $8$ datasets. The best results are shown in bold with results that are not significant different to the best results underlined.}
	\label{results:binary classification}
	~\vspace{-2.em}
\end{table}

\begin{figure}[t]
	\begin{tabular}{c@{\hspace{0em}}c@{\hspace{0em}}c@{\hspace{0em}}c}
		\includegraphics[width=.24\textwidth]{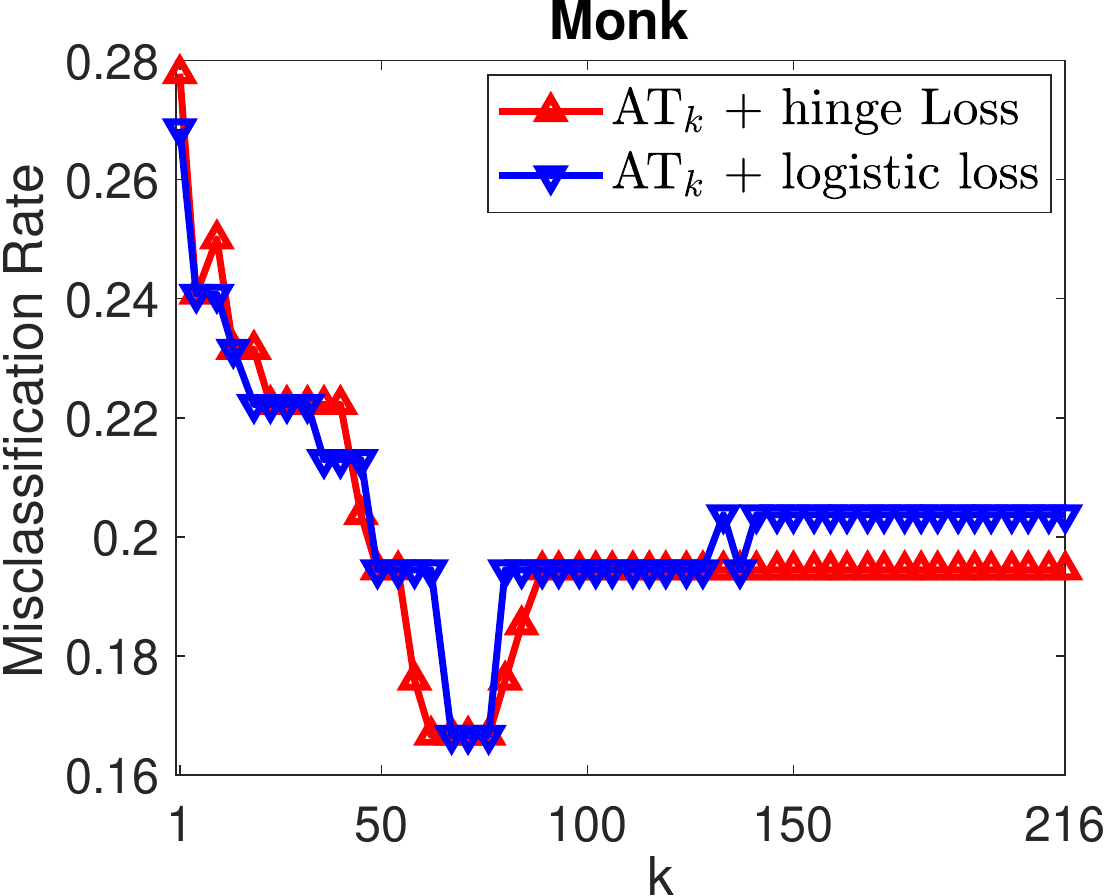} &
		\includegraphics[width=.24\textwidth]{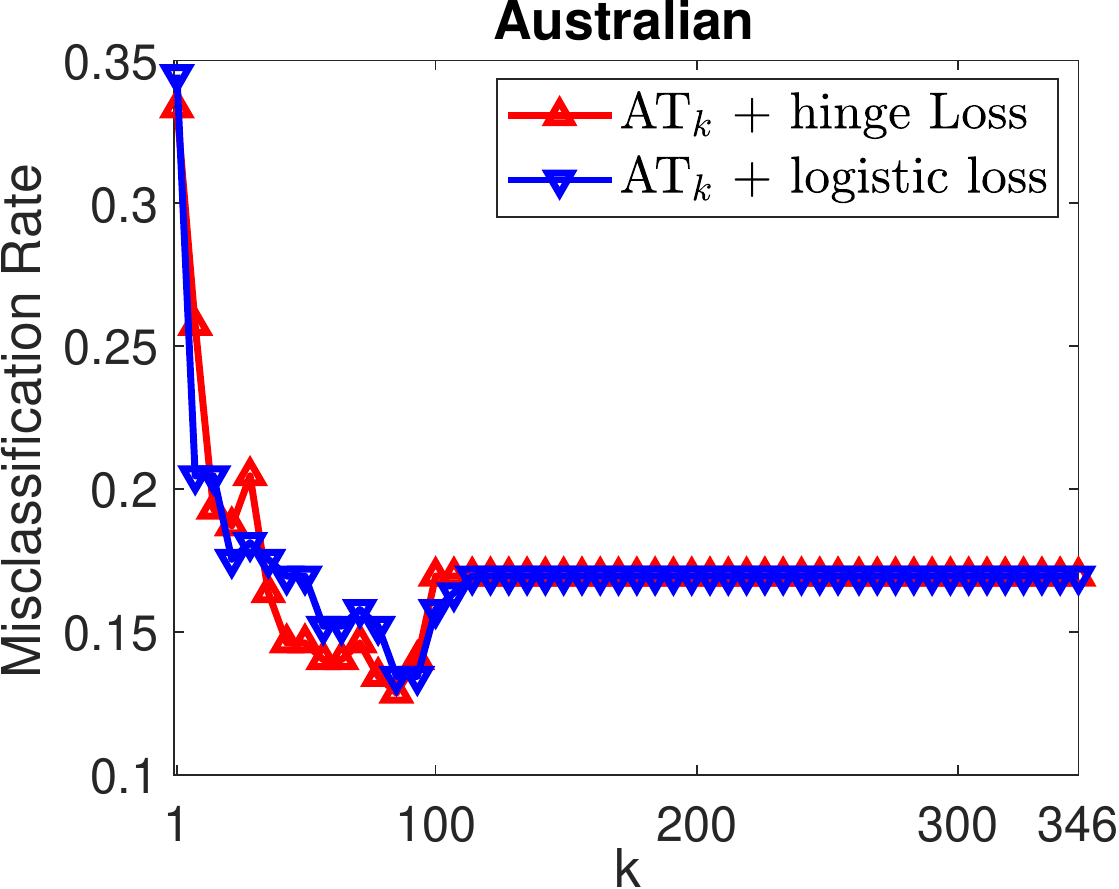}  &
		\includegraphics[width=.24\textwidth]{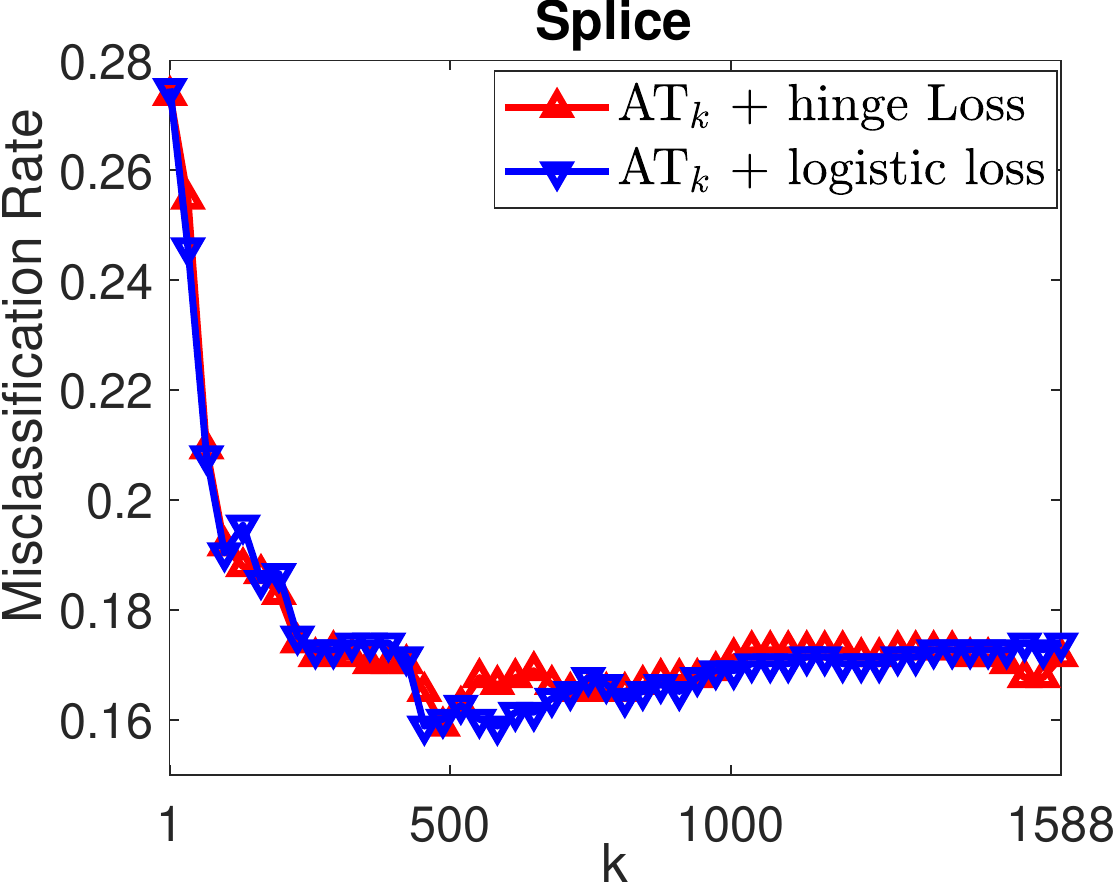} &
		\includegraphics[width=.24\textwidth]{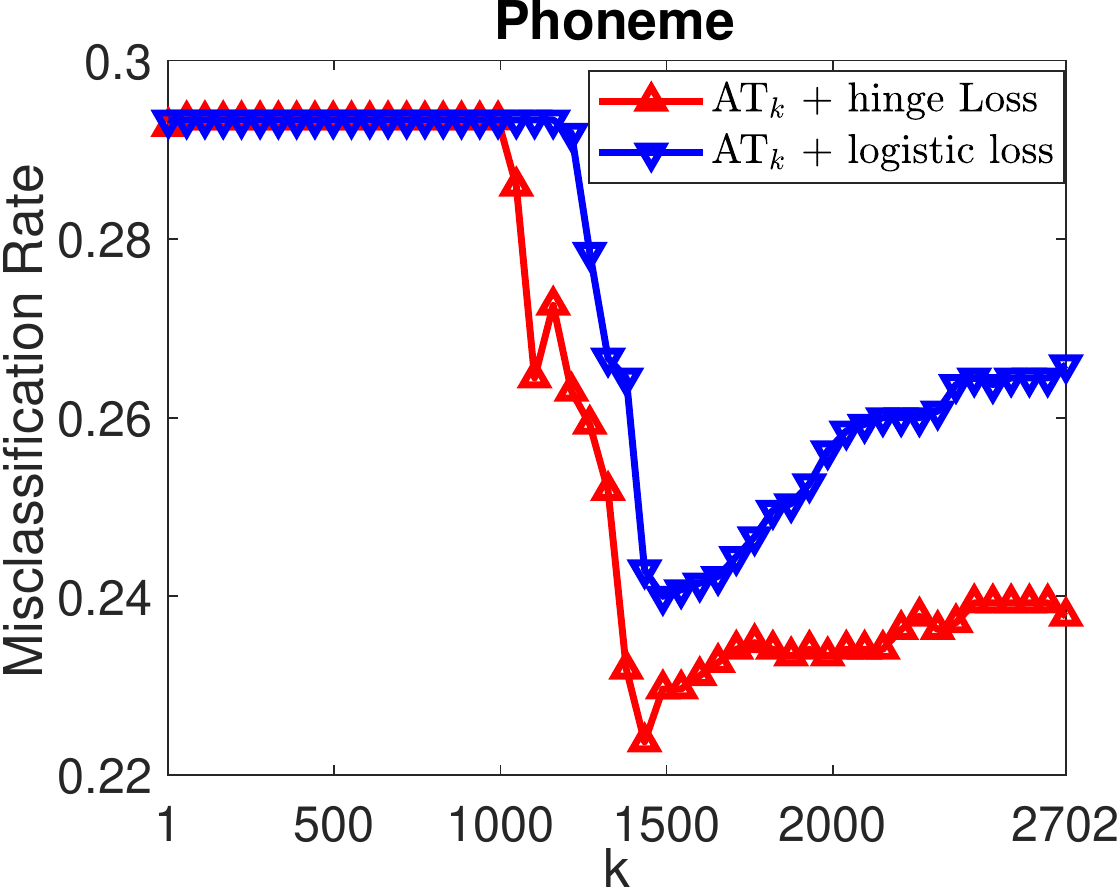} \\
	\end{tabular}
	~\vspace{-1em}
	\caption{\small \em Plots of misclassification rate on testing set vs. $k$ on  four datasets.}
	\label{fig:binary classification}
	~\vspace{-2.5em}
\end{figure}

To further understand the behavior of \matk learning on individual datasets, we show plots of misclassification rate on testing set vs. $k$ for four representative datasets in Fig.\ref{fig:binary classification} (in which $C$ is fixed to $10^2$ and $k \in [1,n]$). As these plots show, on all four datasets, there is a clear range of $k$ value with better classification performance than the two extreme cases $k=1$ and $k=n$, corresponding to the maximum and average loss, respectively. To be more specific, when $k=1$, the potential noises and outliers will have the highest negative effects on the learned classifier and the related classification performance is very poor. As $k$ increases, the negative effects of noises and outliers will reduce and the classification performance becomes better, this is more significant on dataset {\tt Monk}, {\tt Australian} and {\tt Splice}. However, if $k$ keeps increase, the classification performance may decrease (\eg, when $k=n$). This may because that as $k$ increases, more and more well classified samples will be included and the non-zero loss on these samples will have negative effects on the learned classifier (see our analysis in Section \ref{sec:interpretation}), especifically for dataset {\tt Monk}, {\tt Australian} and {\tt Phoneme}.

\textbf{Regression.} Next, we report experimental results of linear regression on one synthetic dataset ({\tt Sinc}) and three real datasets from \cite{chang2011libsvm}, with a detailed description of these datasets given in \Apx. The standard square loss and absolute loss are adopted as individual losses. Note that average loss coupled with individual square loss is standard ridge regression model and average loss coupled with individual absolute loss reduces to $\nu$-SVR \cite{scholkopf2000new}. 
We normalize the target output to $[0,1]$ and report their \emph{root mean square error (RMSE)} in Table \ref{results:regression}, with optimal $C$ and $k^*$ obtained by a grid search as in the case of classification (performance in terms of \emph{mean absolute square error (MAE) is given in \Apx}). Similar to the classification cases, using the \atk loss usually improves performance in comparison to the average loss or maximum loss.

\begin{table}[t]
	\centering
	\footnotesize
	\renewcommand\arraystretch{1.1}
	\setlength\tabcolsep{1.5pt}
	
	\begin{tabular}{c| c c c||c c c}
		\hline 
		\multirow{2}{*}{} & \multicolumn{3}{c||}{Square Loss} & \multicolumn{3}{c}{Absolute Loss}
		\\ \cline{2-7}
		& Maximum & Average & AT$_{k^\ast}$ & Maximum & Average & AT$_{k^\ast}$ \\		
		\hline
		{\tt Sinc}   
		& 0.2790(0.0449) & 0.1147(0.0060) & \textbf{0.1139}(0.0057) 
		& 0.1916(0.0771) & 0.1188(0.0067) & 0.1161(0.0060) \\
		{\tt Housing}   
		& 0.1531(0.0226) & 0.1065(0.0132) & \textbf{0.1050}(0.0132) 
		& 0.1498(0.0125) & 0.1097(0.0180) & 0.1082(0.0189) 	  \\
		
		{\tt Abalone}   
		& 0.1544(0.1012) & \underline{0.0800}(0.0026) & \textbf{0.0797}(0.0026)
		& 0.1243(0.0283) & 0.0814(0.0029) & 0.0811(0.0027)   \\
		
		{\tt Cpusmall}   
		& 0.2895(0.0722) & 0.1001(0.0035) & \textbf{0.0998}(0.0037) 
		& 0.2041(0.0933) & 0.1170(0.0061) & 0.1164(0.0062)   \\
		
		\hline
	\end{tabular}
	\caption{\small \em Average RMSE on four datasets. The best results are shown in bold with results that are not significant different to the best results underlined.}
	\label{results:regression}
		~\vspace{-3.5em}
\end{table}

\section{Related Works}

Most work on learning objectives focus on designing individual losses, and only a few are dedicated to new forms of aggregate losses. Recently, aggregate loss considering the order of training data have been proposed in {\em curriculum learning} \cite{bengio2009curriculum} and {\em self-paced learning} \cite{kumar2010self,spl2017}, which suggest to organize the training process in several passes and samples are included from {\em easy} to {\em hard} gradually. It is interesting to note that each pass of {\em self-paced learning} \cite{kumar2010self} is equivalent to minimum the average of the $k$ smallest individual losses, \ie, ${1 \over k} \sum_{i=n-k+1}^n \ell_{[i]}(f)$, which we term it as the {\em average bottom-k} loss in contrast to the average top-k losses in our case. In \cite{shalev2016minimizing}, the pros and cons of the maximum loss and the average loss are compared, and the top-k loss, \ie, $\ell_{[k]}(f)$, is advocated as a remedy to the problem of both. However, unlike the \atk loss, in general, neither the average bottom-k loss nor the top-k loss are convex functions with regards to the individual losses.

Minimizing top-$k$ errors has also been used in individual losses. For ranking problems, the work of \cite{rudin2009p,usunier2009ranking} describes a form of individual loss that gives more weights to the top examples in a ranked list.  In multi-class classification, the top-$1$ loss is commonly used which causes penalties when the top-1 predicted class is not the same as the target class label \cite{crammer2001algorithmic}. This has been further extended in \cite{lapin2015top,lapin2016loss} to the {\em top-$k$} multi-class loss, in which for a class label that can take $m$ different values, the classifier is only penalized when the correct value does not show up in the top $k$ most confident predicted values. As an individual loss, these works are complementary to the \atk loss and they can be combined to improve learning performance. 
	
\section{Discussion}

In this work, we introduce the {\em average top-$k$} (\atk) loss as a new aggregate loss for supervised learning, which is the average over the $k$ largest individual losses over a training dataset. We show that the \atk loss is a natural generalization of the two widely used aggregate losses, namely the average loss and the maximum loss, but can combine their advantages and mitigate their drawbacks to better adapt to different data distributions. We demonstrate that the \atk loss can better protect small subsets of hard samples from being swamped by a large number of easy ones, especially for imbalanced problems. Furthermore, it remains a convex function over all individual losses, which can lead to convex optimization problems that can be solved effectively with conventional gradient-based methods. We provide an intuitive interpretation of the \atk loss based on its equivalent effect on the continuous individual loss functions, suggesting that it can reduce the penalty on correctly classified data.  We further study the theoretical aspects of \atk loss on classification calibration and error bounds of minimum average top-$k$ learning for \atk-SVM. We demonstrate the applicability of minimum average top-$k$ learning for binary classification and regression using synthetic and real datasets.

There are many interesting questions left unanswered regarding using the \atk loss as learning objectives. Currently, we use conventional gradient-based algorithms for its optimization, but we are investigating special instantiations of \matk learning for which more efficient optimization methods can be developed. Furthermore, the \atk loss can also be used for unsupervised learning problems (\eg, clustering), which is a focus of our subsequent study. It is also of practical importance to combine \atk loss with other successful learning paradigms such as deep learning, and to apply it to large scale real life dataset. Lastly, it would be very interesting to derive error bounds of \matk with general individual loss functions.


\section{Acknowledgments}
This work was completed when the first author was a visiting student at SUNY Albany, supported by a scholarship from University of Chinese Academy of Sciences (UCAS). Siwei Lyu is supported by the National Science Foundation (NSF, Grant IIS-1537257) and Yiming Ying is supported by the Simons Foundation (\#422504) and the 2016-2017 Presidential Innovation Fund for Research and Scholarship (PIFRS) program from SUNY Albany. This work is also partially supported by the National Science Foundation of China (NSFC, Grant  61620106003) for Bao-Gang Hu and Yanbo Fan. 
	
	{\small
		\bibliographystyle{plain}
		\bibliography{refs_main}

\begin{thebibliography}{1}

\bibitem{bartlett2002rademacher}
P.~L. Bartlett and S.~Mendelson.
\newblock Rademacher and gaussian complexities: Risk bounds and structural
  results.
\newblock {\em Journal of Machine Learning Research}, 3(Nov):463--482, 2002.

\bibitem{mcdiarmid1989method}
C.~McDiarmid.
\newblock On the method of bounded differences.
\newblock {\em Surveys in combinatorics}, 141(1):148--188, 1989.

\bibitem{meir-zhang}
R.~Meir and T.~Zhang.
\newblock Generalization error bounds for {Bayesian} mixture algorithms.
\newblock {\em Journal of Machine Learning Research}, 4(Oct):839--860, 2003.

\bibitem{zhang2004statistical}
T.~Zhang.
\newblock Statistical behavior and consistency of classification methods based
  on convex risk minimization.
\newblock {\em Annals of Statistics}, pages 56--85, 2004.

\end{thebibliography}


\begin{thebibliography}{10}

\bibitem{bartlett2006convexity}
P.~L. Bartlett, M.~I. Jordan, and J.~D. McAuliffe.
\newblock Convexity, classification, and risk bounds.
\newblock {\em Journal of the American Statistical Association},
  101(473):138--156, 2006.

\bibitem{bengio2009curriculum}
Y.~Bengio, J.~Louradour, R.~Collobert, and J.~Weston.
\newblock Curriculum learning.
\newblock In {\em ICML}, pages 41--48, 2009.

\bibitem{bousquet2008tradeoffs}
O.~Bousquet and L.~Bottou.
\newblock The tradeoffs of large scale learning.
\newblock In {\em NIPS}, pages 161--168, 2008.

\bibitem{chang2011libsvm}
C.-C. Chang and C.-J. Lin.
\newblock Libsvm: a library for support vector machines.
\newblock {\em TIST}, 2(3):27, 2011.

\bibitem{cortes1995support}
C.~Cortes and V.~Vapnik.
\newblock Support-vector networks.
\newblock {\em Machine learning}, 20(3):273--297, 1995.

\bibitem{crammer2001algorithmic}
K.~Crammer and Y.~Singer.
\newblock On the algorithmic implementation of multiclass kernel-based vector
  machines.
\newblock {\em Journal of machine learning research}, 2(Dec):265--292, 2001.

\bibitem{de2005model}
E.~De~Vito, A.~Caponnetto, and L.~Rosasco.
\newblock Model selection for regularized least-squares algorithm in learning
  theory.
\newblock {\em Foundations of Computational Mathematics}, 5(1):59--85, 2005.

\bibitem{devroye2013probabilistic}
L.~Devroye, L.~Gy{\"o}rfi, and G.~Lugosi.
\newblock {\em A probabilistic theory of pattern recognition}, volume~31.
\newblock Springer Science \& Business Media, 2013.

\bibitem{spl2017}
Y.~Fan, R.~He, J.~Liang, and B.-G. Hu.
\newblock Self-paced learning: An implicit regularization perspective.
\newblock In {\em AAAI}, pages 1877--1833, 2017.

\bibitem{he2011maximum}
R.~He, W.-S. Zheng, and B.-G. Hu.
\newblock Maximum correntropy criterion for robust face recognition.
\newblock {\em IEEE Transactions on Pattern Analysis and Machine Intelligence},
  33(8):1561--1576, 2011.

\bibitem{kumar2010self}
M.~P. Kumar, B.~Packer, and D.~Koller.
\newblock Self-paced learning for latent variable models.
\newblock In {\em NIPS}, pages 1189--1197, 2010.

\bibitem{lapin2015top}
M.~Lapin, M.~Hein, and B.~Schiele.
\newblock Top-k multiclass {SVM}.
\newblock In {\em NIPS}, pages 325--333, 2015.

\bibitem{lapin2016loss}
M.~Lapin, M.~Hein, and B.~Schiele.
\newblock Loss functions for top-k error: Analysis and insights.
\newblock In {\em CVPR}, pages 1468--1477, 2016.

\bibitem{lin}
Y.~Lin.
\newblock A note on margin-based loss functions in classification.
\newblock {\em Statistics \& probability letters}, 68(1):73--82, 2004.

\bibitem{masnadi2009design}
H.~Masnadi-Shirazi and N.~Vasconcelos.
\newblock On the design of loss functions for classification: theory,
  robustness to outliers, and savageboost.
\newblock In {\em NIPS}, pages 1049--1056, 2009.

\bibitem{Ogryczak:2003dl}
W.~Ogryczak and A.~Tamir.
\newblock Minimizing the sum of the k largest functions in linear time.
\newblock {\em Information Processing Letters}, 85(3):117--122, 2003.

\bibitem{rudin2009p}
C.~Rudin.
\newblock The p-norm push: A simple convex ranking algorithm that concentrates
  at the top of the list.
\newblock {\em Journal of Machine Learning Research}, 10(Oct):2233--2271, 2009.

\bibitem{scholkopf2001learning}
B.~Sch{\"o}lkopf and A.~J. Smola.
\newblock {\em Learning with kernels: support vector machines, regularization,
  optimization, and beyond}.
\newblock MIT press, 2001.

\bibitem{scholkopf2000new}
B.~Sch{\"o}lkopf, A.~J. Smola, R.~C. Williamson, and P.~L. Bartlett.
\newblock New support vector algorithms.
\newblock {\em Neural computation}, 12(5):1207--1245, 2000.

\bibitem{shalev2016minimizing}
S.~Shalev-Shwartz and Y.~Wexler.
\newblock Minimizing the maximal loss: How and why.
\newblock In {\em ICML}, 2016.

\bibitem{srebro2010stochastic}
N.~Srebro and A.~Tewari.
\newblock Stochastic optimization for machine learning.
\newblock {\em ICML Tutorial}, 2010.

\bibitem{steinwart2003optimal}
I.~Steinwart.
\newblock On the optimal parameter choice for $\nu$-support vector machines.
\newblock {\em IEEE Transactions on Pattern Analysis and Machine Intelligence},
  25(10):1274--1284, 2003.

\bibitem{steinwart2008support}
I.~Steinwart and A.~Christmann.
\newblock {\em Support vector machines}.
\newblock Springer Science \& Business Media, 2008.

\bibitem{usunier2009ranking}
N.~Usunier, D.~Buffoni, and P.~Gallinari.
\newblock Ranking with ordered weighted pairwise classification.
\newblock In {\em ICML}, pages 1057--1064, 2009.

\bibitem{vapnik}
V.~Vapnik.
\newblock {\em Statistical learning theory}, volume~1.
\newblock Wiley New York, 1998.

\bibitem{wu2006learning}
Q.~Wu, Y.~Ying, and D.-X. Zhou.
\newblock Learning rates of least-square regularized regression.
\newblock {\em Foundations of Computational Mathematics}, 6(2):171--192, 2006.

\bibitem{wu2007robust}
Y.~Wu and Y.~Liu.
\newblock Robust truncated hinge loss support vector machines.
\newblock {\em Journal of the American Statistical Association},
  102(479):974--983, 2007.

\bibitem{yang2010relaxed}
Y.~Yu, M.~Yang, L.~Xu, M.~White, and D.~Schuurmans.
\newblock Relaxed clipping: A global training method for robust regression and
  classification.
\newblock In {\em NIPS}, pages 2532--2540, 2010.

\end{thebibliography}
	}
	
	
	\newpage
\appendix 
\section{Proofs}
\subsection{Proofs of Lemma \ref{lem:1} and \ref{lemm:hing-hing}}

\noindent{\bf Proof of Lemma \ref{lem:1}}

Notice that $\sum_{i=1}^k x_{[i]}$ is the solution of the following linear programming problem
\begin{equation}
\max_{\bp} \ \bp^T \bx, \ \ \st \ \ \bp^T \bone = k, \bzero \leq \bp \leq \bone.
\label{prime_0}
\end{equation}
The Lagrangian of this linear programming problem is
\begin{equation}
L(\bp,\bu,\bv,\lambda) = -\bp^T \bx - \bv^T \bp + \bu^T(\bp - \bone) + \lambda (\bp^T \bone - k),
\end{equation}
where $\bu \geq 0$, $\bv \geq 0$ and t are Lagrangian multipliers. Taking its derivative w.r.t $\bp$ and set it to be $\bzero$, we have
$\bv = \bu -\bx +\lambda \bone$. Substituting this back into the Lagrangian to eliminate the primal variable, we obtain the dual problem of (\ref{prime_0}) as
\begin{equation}
\min_{\bu,\lambda} \ \bu^T\vec{1} + k\lambda, \ \st \ \bu \geq \bzero, \bu + \lambda \bone - \bx \geq \bzero.
\end{equation}
This further means that \begeqn\label{eq:app1}\sum_{i=1}^k x_{[i]}=\min_{\lambda} \Bigl\{ k\lambda + \sum_{i=1}^{n}\left[x_i - \lambda \right]_{+}  \Bigr\}.\endeqn 

The convexity of $\sum_{i=1}^k x_{[i]}$ follows directly from \eqref{eq:app1} and the fact that the partial minimum of a jointly convex function is convex.  
Furthermore, it is easy to see that $\lambda = x_{[k]}$ is always one optimal solution for \eqref{eq:app1}, hence, for $x_i \ge 0, i = 1,\cdots, n$, there holds
\begeqn\label{eq:app2}\sum_{i=1}^k x_{[i]}=\min_{\lambda \ge 0} \Bigl\{ k\lambda + \sum_{i=1}^{n}\left[x_i - \lambda \right]_{+}  \Bigr\}.\endeqn 
\qed

\noindent{\bf Proof of Lemma \ref{lemm:hing-hing}}

Denote $g(\ell) = \left[ \left[a - \ell \right]_+ - b \right]_+.$ For any $a \geq 0, b\geq 0$, we have
$g(\ell) = 0 = \left[a-b-\ell \right]_+$ if  $\ell \geq a$. In the Case  of $\ell < a$, there holds $g(\ell) = \left[a-b-\ell \right]_+$. Thus $g(\ell) = \left[a-b-\ell \right]_+$ for any $a \geq 0, b\geq 0$.

\qed

\subsection{Proof of Theorem \ref{thm:fisher}} 

Note that $\ell:  \R  \to \R^+$ is convex, differentiable at $0$ and $\ell'(0)<0$ implies that $\ell(0) > 0$.
Hence, by normalization we can let $\ell(0)=1$. Indeed, the commonly used individual losses such as the least square loss $\ell (t) = {(1-t)^2} $, the hinge  loss $\ell(t) = (1-t)_+$, and the logistic loss  $ \ell(t) = \log_2(1+ e^{-t})$ satisfy the conditions $\ell'(0)<0$ and $\ell(0)=1$. The assumption in part (ii) of Theorem \ref{thm:fisher} implicitly assumes that $\cE^\ast_\ell \le 1$ because $\cE^\ast_\ell\le \cE_\ell(0)=1$. 

Since $(f^\ast,\gl^\ast)$ is a minimizer, then, by choosing $f=0$ and $\gl=\ell(0)=1$ there holds $  \EX [ \ell(yf^\ast(x))-\gl^\ast)_+ ]  + \nu \gl^\ast\le   \EX[(\ell(0) -\ell(0))_+] + \nu \ell(0) $ which implies that the minimizer $\gl^\ast$ defined in \eqref{eq:fisher-obj} must satisfy $0\le \gl^\ast \le \ell(0)=1$. 
 This means that the minimization over $\gl$ in \eqref{eq:fisher-obj} can be restricted to $0\le \gl\le \ell(0)=1.$ Let $\gb = 1 - \gl$ which implies that the minimization \eqref{eq:fisher-obj} is equivalent to  the following
\begeqn \label{eq:fisher-2}(f^\ast, \gb^\ast) = \arg\inf_{f,  0 \le \gb\le 1 }  \bigl\{\EX[  (\gb+ \ell(yf(\x)) - 1 )_+   ] - \nu\gb \bigr\} .\endeqn  
Let  $(f^\ast, \gb^\ast) $ be the minimizer.  We have, for any $f$ and  choosing $\gb = \ell(0)=1$, that  
$$  -\nu\gb^\ast \le \bigl\{\EX[  (\gb^\ast+ \ell(yf^\ast(x)) - 1 )_+   ] - \nu\gb^\ast  \le   \EX[  (1+ \ell(yf(\x)) - 1 )_+   ] - \nu  = \cE_\ell(f) - \nu.$$
This implies that $\nu \gb^\ast \ge \nu - \cE_\ell(f)$.  Since $f$ is arbitrary,     $\gb^\ast \ge  {\nu - \cE^\ast_\ell  \over \nu } > 0$ if $\nu > \cE^\ast_\ell$. Consequently, the above arguments show that $0\le \gl^\ast=1-\gb^\ast<1$ if $\nu> \cE^\ast_\ell.$ 

Now observe that $f^\ast  =  \arg\inf_{f} \bigl\{ \EX[ (\ell(yf(\x)) - \gl^\ast)_+ ] + \nu \gl^\ast \bigr\} = \arg\inf_{f}\bigl\{ \EX[ (\ell(yf(\x)) -\gl^\ast)_+ ] \bigr\}$.  Define $\phi(t) = (\ell(t) -\gl^\ast)_+$.  This means that    $f^\ast= \arg\inf_{f}\EX[\phi( yf(\x) ) ] $ for standard classification.  The result of Theorem 2 in \cite{bartlett2006convexity} states that that the loss $\phi$ is classification calibrated if $\phi$ is differentiable at $0$ and $\phi'(0) <0.$    Notice that $\gl^\ast< \ell(0)=1$ as proved above, which implies that $\phi$ is differentiable at $0$ and $\phi'(0) = \ell'(0) <0$. 	This shows that $f^\ast$ has the same sign as the Bayes rule $\sgn(\Pr(y=1|\x)-{1 \over 2})$ if $\nu> \cE^\ast_\ell.$  This completes the proof of the first part of the theorem.

We now move on to prove the proof of the second part of the theorem. To this end,  observe that $\gl^\ast = \arginf_{\lambda\ge 0}   \bigl\{ \EX [ \ell(yf^\ast(x))-\gl)_+ ]  + \nu \gl\bigr\}.$  Assume that $\gl^\ast=0$. Then, $f^\ast =f^\ast_\ell$ and choosing $f=0$ and $\gl=1=\ell(0)$ in the objective function of \eqref{eq:fisher-obj} implies that 
$ \nu= \EX[ (\ell(0) - 1)_+] +\nu \ge \EX [ \ell(y f^\ast(x) )- \gl^\ast)_+ ]  + \nu \gl^\ast  = \EX[\ell( y f^\ast_\ell(x) ) ] \ge \cR^\ast.$  Recall \cite{devroye2013probabilistic} that the Bayes error $\cR^\ast= \int_{\X} \min(\eta(\x), 1-\eta(\x)) \rho_\X(\x).$ This proves the Case  $\gl^\ast=0.$ 


Now it only suffices to prove the Case  of $\gl^\ast>0$. In this Case , by the first-order optimality condition, there exists a subgradient of $\EX [ \ell(yf^\ast(x))-\gl)_+ ]  + \nu \gl$ of the variable $\gl$ at $\gl^\ast$ equals to zero.  This implies that
$\EX [ h(x,y) ]  + \nu = 0,$ where $h(x,y)$ is some subgradient of $(\ell(yf^\ast(x))-\gl)_+$ with respect to $\gl$ at $\gl^\ast$. Notice that $h(x,y) \le -\mbI_{\ell(yf^\ast(x))>\gl^\ast}.$  Consequently, $\nu \ge \EX[\mbI_{\ell(yf^\ast(x))>\gl^\ast}] \ge \EX[\mbI_{\ell(yf^\ast(x))>\ell(0)}] $ since $\gl^\ast \le \ell(0)$ as proved in part (i).  Since we assume that $\ell$ is monotonically decreasing,  $\ell(y  f^\ast(x) )> \ell(0)$ is equivalent to $ yf^\ast(x) <0.$ The calibration of AT$_k$ models (i.e. $f^\ast$ has the sign as the Bayes rule)  implies that $y f^\ast(x) <0$ is equivalent to $ y (2\eta(x)-1) <0.$ Putting the above arguments together, we conclude that 
$\nu \ge \EX[\mbI_{y(2\eta(x)-1)<0}] = \int_{\eta(x)\neq {1/2}} \min(\eta(x),1-\eta(x)).$ This completes the proof of the theorem.

\subsection{Proof of Theorem \ref{thm:analysis}}

Steinwart \cite{steinwart2003optimal} derived the bounds for the excess misclassification error for $\nu$-SVM under the assumption that the kernel is {\em universal}, \ie, the RKHS  is dense in the space of continuous functions $\C(\X)$ under the uniform norm $\|\cdot\|_\infty$ (See \cite{steinwart2008support} for more details).  The proof there depends on Urysohn's lemma in topology which states any two disjoint closed subsets can be separated by a continuous function. In contrast, our result holds true without the assumption of universal kernels.  

To prove Theorem \ref{thm:analysis}, we need some technical lemmas.  We say the function $F:\displaystyle\dprod_{k=1}^m \Omega_k \rightarrow \mathbb{R}$ has bounded differences $\{ c_k \}_{k=1}^m $ if, for
all $1\le k \le m $, $$ \dmax_{z_1,\cdots
,z_k,z^\prime_{k}\cdots ,z_m}  |  F(z_1, \cdots
,z_{k-1},z_k,z_{k+1}, \cdots , z_m) -F(z_1, \cdots
,z_{k-1},z^\prime_k,z_{k+1}, \cdots , z_m) |\le c_k.
$$

\begin{lemma}\label{lem:McD}  (McDiarmid's inequality \citeapx{mcdiarmid1989method}) Suppose $f:\displaystyle\dprod_{k=1}^m \Omega_k \rightarrow
\mathbb{R}$ has bounded differences $ \{ c_k \}_{k=1}^m $ then ,
for all $\epsilon>0 $, there holds
$$ \Pr\biggl\{ F(\bz) -\mathbb{E}[F({\bf z})]\ge \epsilon\biggr\}\le
 e^{- \frac{2\epsilon^2}{\sum_{k=1}^m c_k^2}}.$$
\end{lemma}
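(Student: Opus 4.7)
The plan is to prove McDiarmid's inequality via the classical Doob martingale plus Hoeffding's lemma argument, followed by a Chernoff bound. Concretely, fix independent random variables $z_1, \ldots, z_m$ on $\prod_k \Omega_k$ and define the Doob martingale $M_k = \mathbb{E}[F(\bz) \mid z_1, \ldots, z_k]$ for $k = 0, 1, \ldots, m$, so that $M_0 = \mathbb{E}[F(\bz)]$ and $M_m = F(\bz)$. Then write
\begin{equation*}
F(\bz) - \mathbb{E}[F(\bz)] = \sum_{k=1}^m V_k, \qquad V_k := M_k - M_{k-1},
\end{equation*}
which is a martingale difference sequence with respect to the filtration generated by $(z_1, \ldots, z_k)$.

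The key step is to show that conditionally on $z_1, \ldots, z_{k-1}$, the difference $V_k$ lies in an interval of length at most $c_k$. To that end, I would define the functions
\begin{equation*}
A_k(z_1, \ldots, z_{k-1}) = \inf_{u \in \Omega_k} \mathbb{E}[F(\bz) \mid z_1, \ldots, z_{k-1}, z_k = u] - M_{k-1},
\end{equation*}
\begin{equation*}
B_k(z_1, \ldots, z_{k-1}) = \sup_{u \in \Omega_k} \mathbb{E}[F(\bz) \mid z_1, \ldots, z_{k-1}, z_k = u] - M_{k-1},
\end{equation*}
so that $A_k \le V_k \le B_k$. The bounded difference hypothesis, combined with independence of the remaining coordinates $z_{k+1}, \ldots, z_m$, implies $B_k - A_k \le c_k$ by an averaging/coupling argument over the $z_{k+1}, \ldots, z_m$ coordinates. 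This is the one genuinely subtle step and will be the main obstacle: one must be careful that independence lets us pull out a single supremum and infimum over the $k$th coordinate alone.

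Next I would invoke Hoeffding's lemma: any random variable $X$ supported in $[a, b]$ with $\mathbb{E}[X] = 0$ satisfies $\mathbb{E}[e^{tX}] \le \exp(t^2(b-a)^2/8)$. Applied conditionally, this yields
\begin{equation*}
\mathbb{E}\bigl[e^{t V_k} \,\big|\, z_1, \ldots, z_{k-1}\bigr] \le \exp\!\bigl(t^2 c_k^2 / 8\bigr).
\end{equation*}
Iterating via the tower property from $k = m$ down to $k = 1$ gives
\begin{equation*}
\mathbb{E}\bigl[e^{t(F(\bz) - \mathbb{E}[F(\bz)])}\bigr] \le \exp\!\Bigl(\tfrac{t^2}{8} \sum_{k=1}^m c_k^2\Bigr).
\end{equation*}

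Finally I apply Markov's inequality to $e^{t(F - \mathbb{E}F)}$: for any $t > 0$,
\begin{equation*}
\Pr\{F(\bz) - \mathbb{E}[F(\bz)] \ge \epsilon\} \le \exp\!\Bigl(-t\epsilon + \tfrac{t^2}{8}\sum_{k=1}^m c_k^2\Bigr),
\end{equation*}
and optimize over $t > 0$ by taking $t = 4\epsilon / \sum_k c_k^2$, which yields the stated bound $\exp(-2\epsilon^2 / \sum_k c_k^2)$. Hoeffding's lemma itself may be taken as a standard tool or proved in a short digression by noting that on $[a,b]$ the function $x \mapsto e^{tx}$ is bounded above by the chord, then bounding the resulting cumulant-generating function by Taylor expansion.
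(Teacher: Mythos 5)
The paper does not prove this lemma at all: it is quoted as a classical result with a citation to McDiarmid's 1989 survey, and is used as a black box in the proof of Lemma \ref{lem:rad-est}. Your argument is the standard martingale proof of the bounded-differences inequality --- Doob martingale decomposition, the conditional range bound $B_k - A_k \le c_k$ via independence of the remaining coordinates, Hoeffding's lemma, the tower property, and Chernoff optimization at $t = 4\epsilon/\sum_k c_k^2$ --- and it is correct; this is essentially the proof given in the cited reference itself, so there is nothing in the paper to diverge from.
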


We need to use the the Rademacher
average  and its contraction  property  \citeapx{bartlett2002rademacher,meir-zhang}.

\begdef Let $\mu$ be a probability measure on $\gO$ and $F$ be a
class of uniformly bounded functions. For every integer $m$,  the Rademacher
average over a set of functions F on Ω
$$R_m(F):= \EX_\mu \EX_\epsilon \Big\{{1\over{ {m}}} \dsup_{f\in F}
 \Big|\dsum_{i=1}^m \gs_i f(z_i) \Big|\Big\}$$ where
$\{z_i\}_{i=1}^m$ are independent random variables distributed
according to $\mu$ and $\{\gs_i\}_{i=1}^m$ are independent
Rademacher random variables, i.e.,
$\Pr(\gs_i=+1)=\Pr(\gs_i=-1)=1/2$.
\enddef

\begin{lemma}\label{lem:contr-prop}Let $F$ be a class of uniformly bounded real-valued
functions on $(\gO,\mu)$ and $m\in\N$. If for each $i\in \{1,
\ldots, m\}$, $\Psi_i: \R \to \R$ is a function  with a Lipschitz constant $c_i$, then for any $\{x_i\}_{i=1}^m$,
\begeqn \EX_\epsilon\Big( \dsup_{f\in F} \big|\dsum_{i=1}^m
\epsilon_i \Psi_i(f(x_i)) \big|\Big)
 \le 2   \EX_\epsilon\Big(  \dsup_{f\in F}
 \Big|\dsum_{i=1}^m c_i\epsilon_i
f(x_i) \big| \Big).\label{contr}\endeqn
\end{lemma}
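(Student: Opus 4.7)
The plan is to prove the contraction inequality by the classical Ledoux--Talagrand argument, organized in three stages: reductions that normalize the problem, a one-coordinate contraction via a double-supremum swap, and induction on $m$. First, I would reduce to a one-sided normalized problem. Rescaling $\widetilde{\Psi}_i(t) = \Psi_i(t/c_i)$ (still Lipschitz with constant $1$) and noting that $\Psi_i(f(x_i)) = \widetilde{\Psi}_i(c_i f(x_i))$ lets me take all Lipschitz constants equal to $1$, acting on the transformed vector $\{c_i f(x_i)\}_{i=1}^m$ (the degenerate case $c_i = 0$ contributes only a mean-zero constant in $\epsilon$ and is discarded). The factor of $2$ on the RHS comes from stripping the absolute value via the pointwise bound $\sup_f |X_f| \le \sup_f X_f + \sup_f(-X_f)$ together with the distributional symmetry $\{-\epsilon_i\} \stackrel{d}{=} \{\epsilon_i\}$. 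Finally, recentering each $\widetilde{\Psi}_i$ by $\widehat{\Psi}_i = \widetilde{\Psi}_i - \widetilde{\Psi}_i(0)$ makes $\widehat{\Psi}_i(0) = 0$ while preserving the Lipschitz constant; the leftover $\sum_i \epsilon_i \widetilde{\Psi}_i(0)$ is $f$-independent and has mean zero under $\epsilon$, so it disappears. What remains to prove is the one-sided contraction $\EX_\epsilon \sup_{f \in F} \sum_i \epsilon_i \widehat{\Psi}_i(c_i f(x_i)) \le \EX_\epsilon \sup_{f \in F} \sum_i c_i \epsilon_i f(x_i)$, after which bounding the RHS above by its $|\cdot|$-version completes the lemma.

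Second, I would prove the one-coordinate contraction step by conditioning on $\{\epsilon_j\}_{j \neq i}$, absorbing the remaining sum into a generic functional $A(f)$, and observing that $\EX_{\epsilon_i} \sup_f \{A(f) + \epsilon_i \widehat{\Psi}_i(c_i f(x_i))\} = \tfrac{1}{2} \sup_{f,g} \{A(f) + A(g) + \widehat{\Psi}_i(c_i f(x_i)) - \widehat{\Psi}_i(c_i g(x_i))\}$ by decoupling the two terms of the $\epsilon_i = \pm 1$ average into independent sups over $f$ and $g$. The Lipschitz property $|\widehat{\Psi}_i(u) - \widehat{\Psi}_i(v)| \le |u - v|$ together with the swap symmetry $(f,g) \leftrightarrow (g,f)$ then allows replacing the last difference by $c_i f(x_i) - c_i g(x_i)$ (without absolute value), giving $\tfrac{1}{2} \sup_{f,g} \{A(f) + A(g) + c_i f(x_i) - c_i g(x_i)\} = \EX_{\epsilon_i} \sup_f \{A(f) + \epsilon_i c_i f(x_i)\}$. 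Iterating this across $i = 1, \ldots, m$ via the tower property of conditional expectation delivers the desired one-sided contraction.

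The main technical subtlety lies in the reduction stage rather than in the induction: the factor of $2$ and the absence of any $\Psi_i(0) = 0$ hypothesis both have to be dealt with \emph{before} applying the coordinate-wise step, since the double-supremum swap only works cleanly on a one-sided expression and the Lipschitz bound produces a \emph{difference} $\widehat{\Psi}_i(u) - \widehat{\Psi}_i(v)$ to which Lipschitzness applies only after the recentering. Once these normalizations are in place, the double-supremum swap and the induction over $i$ are essentially mechanical.
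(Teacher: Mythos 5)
First, a caveat: the paper does not actually prove this lemma --- it is quoted from the cited references on Rademacher averages --- so there is no internal proof to compare yours against; what follows is an assessment of your argument on its own terms. Your second stage (conditioning on $\{\epsilon_j\}_{j\neq i}$, decoupling the $\epsilon_i=\pm1$ average into a double supremum, applying Lipschitzness to the difference via the $(f,g)\leftrightarrow(g,f)$ swap, and iterating over coordinates) is exactly the standard Ledoux--Talagrand argument and is correct; note that it nowhere uses $\widehat\Psi_i(0)=0$, so the \emph{one-sided} contraction (no absolute values, no factor $2$) holds as you derive it. The problems are in your reduction stage. The pointwise bound $\sup_f|X_f|\le\sup_f X_f+\sup_f(-X_f)$ is false: for a singleton class $F=\{f\}$ the right-hand side is $X_f+(-X_f)=0$ while the left-hand side is $|X_f|$. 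The correct decomposition is $\sup_f|X_f|\le\sup_f(X_f)_++\sup_f(X_f)_-$, and after the symmetrization step this leaves you needing to control $\EX_\epsilon\bigl(\sup_f\sum_i\epsilon_i\Psi_i(f(x_i))\bigr)_+$, i.e.\ you must run the coordinate-wise argument for $\EX\,G(\sup_f\cdots)$ with $G$ convex and nondecreasing (here $G=(\cdot)_+$), not just for the plain expectation; your induction as written does not deliver that stronger form.

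Second, and more seriously, the recentering $\widehat\Psi_i=\widetilde\Psi_i-\widetilde\Psi_i(0)$ cannot dispose of the anchoring condition $\Psi_i(0)=0$ in the two-sided statement, because the statement is simply false without it: take $m=1$, $F=\{0\}$ and $\Psi_1\equiv 1$ (Lipschitz with any constant $c_1$); the left side is $\EX|\epsilon_1|=1$ while the right side is $2\,\EX|c_1\epsilon_1\cdot 0|=0$. The $f$-independent term $\sum_i\epsilon_i\widetilde\Psi_i(0)$ has mean zero but does not vanish inside an absolute value or a positive part, which is precisely where your reduction needs it to. The hypothesis $\Psi_i(0)=0$ is present in the sources the paper cites and should be added to the lemma; with it your recentering step becomes vacuous, and the rest of the Ledoux--Talagrand route (with the convex-$G$ strengthening noted above) goes through. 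Incidentally, the application in the proof of Lemma~\ref{lem:rad-est} uses $\Phi_i(t)=(1-y_it)_+$ with $\Phi_i(0)=1$ and no factor of $2$, so what is really being used there is the one-sided version, for which the anchoring condition is indeed unnecessary.
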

Using the standard techniques involving  Rademacher averages \citeapx{bartlett2002rademacher}, one can get the following estimation.   For completeness, we give a self-contained proof.  Let the empirical error related to the hinge loss  be denoted by 
$\cE_{h,\bz}(f) = {1\over n} \sum_{i=1}^n (1-yf(\x_i))_+.$
\begin{lemma}\label{lem:rad-est} For any $\gep>0$, there holds
$$  \Pr\biggl\{ \dsup_{\|f\|_K \le R} \cE_h(f) - \cE_{h,\bz}(f) \ge \gep +  {2 \gk R \over \sqrt{n}} \biggr\} \le e^{- {2n\gep^2 \over (1+ \gk R)^2} }.   $$
\end{lemma}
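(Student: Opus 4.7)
The strategy is the standard symmetrization $+$ concentration route, specialized to the RKHS hinge class. Let $F(\bz) = \dsup_{\|f\|_K \le R}\bigl[\cE_h(f) - \cE_{h,\bz}(f)\bigr]$. The plan is first to apply Lemma \ref{lem:McD} to concentrate $F$ around $\EX[F(\bz)]$, and then to bound $\EX[F(\bz)]$ by the Rademacher complexity of the RKHS ball $\{f : \|f\|_K \le R\}$.

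For the concentration step I would verify bounded differences. The reproducing property gives $|f(\x)| = |\langle f, K(\x,\cdot)\rangle_K| \le \gk \|f\|_K \le \gk R$, so $0 \le (1 - y f(\x))_+ \le 1 + \gk R$ uniformly over the ball. Swapping a single $(\x_i, y_i)$ for $(\x'_i, y'_i)$ therefore changes $\cE_{h,\bz}(f)$, and hence $F(\bz)$, by at most $(1 + \gk R)/n$. Applying Lemma \ref{lem:McD} with $c_i = (1+\gk R)/n$ yields
$$
\Pr\bigl\{F(\bz) - \EX[F(\bz)] \ge \gep\bigr\} \le \exp\Bigl(-\tfrac{2n\gep^2}{(1+\gk R)^2}\Bigr).
$$

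To bound $\EX[F(\bz)]$, I would introduce an independent ghost sample $\bz'$ and Rademacher signs $\{\gs_i\}$; the one-sided symmetrization trick then gives
$$
\EX[F(\bz)] \le 2\,\EX_\bz\EX_\gs \dsup_{\|f\|_K \le R} \frac{1}{n}\sum_{i=1}^n \gs_i (1 - y_i f(\x_i))_+ .
$$
Writing $(1 - y_i f(\x_i))_+ = \bigl[(1 - y_i f(\x_i))_+ - 1\bigr] + 1$, the constant piece contributes $\EX_\gs \sum_i \gs_i = 0$ and the shifted function $t\mapsto (1-y_i t)_+ - 1$ both vanishes at $0$ and is $1$-Lipschitz; the one-sided Ledoux--Talagrand contraction principle (combined with the fact that $\gs_i y_i$ is itself Rademacher) reduces the above to
$$
\EX[F(\bz)] \le 2\,\EX_\bz\EX_\gs \dsup_{\|f\|_K \le R} \frac{1}{n}\sum_{i=1}^n \gs_i f(\x_i).
$$
By the reproducing property the inner supremum equals $\tfrac{R}{n}\bigl\|\sum_i \gs_i K(\x_i,\cdot)\bigr\|_K$, and since $\EX_\gs\bigl\|\sum_i \gs_i K(\x_i,\cdot)\bigr\|_K^2 = \sum_i K(\x_i,\x_i) \le n\gk^2$, Jensen's inequality gives $\EX[F(\bz)] \le 2\gk R / \sqrt{n}$. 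Substituting back into the McDiarmid estimate produces the claimed tail bound.

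The main obstacle is constant-tracking. A naive chaining of the symmetrization, the symmetric contraction stated in Lemma \ref{lem:contr-prop}, and the RKHS-ball Rademacher bound produces $4\gk R/\sqrt{n}$. Obtaining the sharper $2\gk R/\sqrt{n}$ relies on keeping the symmetrization one-sided and invoking the unsigned form of the Ledoux--Talagrand contraction principle after peeling off the constant $1$ from the hinge; everything else is a routine computation.
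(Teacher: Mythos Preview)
Your proposal is correct and follows the same route as the paper: bounded differences from the reproducing property, McDiarmid for concentration, ghost-sample symmetrization, contraction to strip the hinge, and the RKHS-ball Rademacher bound $\gk R/\sqrt{n}$. The only difference is in the contraction step: the paper passes to the absolute-value Rademacher average and applies Lemma~\ref{lem:contr-prop} directly to $\Phi_i(t)=(1-y_it)_+$, whereas you keep the symmetrization one-sided, center the hinge so it vanishes at $0$, and invoke the unsigned Ledoux--Talagrand form. Your bookkeeping is in fact tidier here, since Lemma~\ref{lem:contr-prop} as stated carries a factor of $2$ that the paper's chain silently omits; your centering trick makes the constant $2\gk R/\sqrt{n}$ honest.
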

\begin{proof}  Let $F(z) = \dsup_{\|f\|_K \le R} [\cE_h(f) - \cE_{h,\bz}(f)].$ Observe, for any $x,y$, that $(1-yf(x))_+ \le 1+ |f(x)| \le 1+ |\langle K_x, f \rangle_K| \le 1+ \|f\| {\langle K_x, K_x \rangle_K}^{1\over 2} = 1+ \|f\|_K \sqrt{K(x,x)}\le  \gk R. $ Then, one can easily get that the bounded differences are $c_k = {1+ \gk R\over n} $ for any $1\le k\le n.$
By the McDiarmid inequality, we have
$${\Pr} \biggl\{ \dsup_{\|f\|_K \le R} [\cE_h(f) - \cE_{h,\bz}(f)] \ge  \EX_\bz\dsup_{\|f\|_K \le R} [\cE_h(f) - \cE_{h,\bz}(f)] +  \gep \biggr\}\ge
 \exp\Bigr\{- {2n\gep^2 \over (1+\gk R)^2 }\Bigl\}.$$
Let $\bz' = \{z_1',z_2',\ldots, z_n'\}$ be i.i.d. copies of $\bz.$ Then,
$$\EX_\bz\dsup_{\|f\|_K \le R} [\cE_h(f) - \cE_{h,\bz}(f)]  = \EX_\bz[\dsup_{\|f\|_K \le R} [\EX_{\bz'}(\cE_{\bz'}(f)) - \cE_{h,\bz}(f)] \le  \EX_\bz\EX_{\bz'}\dsup_{\|f\|_K \le R} [\cE_{\bz'}(f) - \cE_{h,\bz}(f)].$$
By standard symmetrization techniques \citeapx{bartlett2002rademacher}, for any Rademacher variables $\{\gs_i: i=1,\ldots,n\}$, we have that
\begin{align*}& \EX_\bz\EX_{\bz'}\dsup_{\|f\|_K \le R} [\cE_h(f) - \cE_{h,\bz}(f)]  = \EX_\bz\EX_{\bz'}\EX_{\gs}\dsup_{\|f\|_K \le R}[ {1\over n }\dsum_{i=1}^n \gs_i((1-y'_i f(x'_i))_+ - (1- y_i f(x_i))_+) ]\\
&  = 2 \EX_\bz\EX_{\gs}\dsup_{\|f\|_K \le R}\bigl[ {1\over n }\dsum_{i=1}^n \gs_i (1- y_i f(x_i))_+ \bigr] \le 2 \EX_\bz\EX_{\gs}\dsup_{\|f\|_K \le R}  {1\over n }\Big| \dsum_{i=1}^n \gs_i (1- y_i f(x_i))_+ \Big|.
\end{align*}
Let $\Phi_i (t) = (1-y_i t)_+$ which has Lipschitz constant $1$. By the contraction property of Rademacher averages,
\beg{align*} \EX_{\gs}\dsup_{\|f\|_K \le R}  {1\over n }\Big| \dsum_{i=1}^n \gs_i (1- y_i f(x_i))_+ \Big|  & \le \EX_{\gs}\dsup_{\|f\|_K \le R}{1\over n } \Big| \dsum_{i=1}^n \gs_i f(x_i) \Big|      = \EX_{\gs}\dsup_{\|f\|_K \le R}\Big|   \langle {1\over n }  \dsum_{i=1}^n \gs_i K_{x_i},  f\rangle \Big| \\
& \le  \EX_{\gs}\dsup_{\|f\|_K \le R}  \|{1\over n }  \dsum_{i=1}^n \gs_i K_{x_i}\|_K \|f\|_K  = R \: \EX_{\gs} \Big[\|{1\over n } \dsum_{i=1}^n \gs_i K_{x_i}\|_K\Big]\\
& \le  R\:  \Big[ \EX_{\gs} \|{1\over n } \dsum_{i=1}^n \gs_i K_{x_i}\|^2_K\Big]^{1\over 2} \le  {R\over n} \Big[ \dsum_{i=1}^n K(x_i,x_i)\Big]^{1\over 2} \le {\gk R \over \sqrt{n}}.
\end{align*}
Putting all the above estimations together yields the desired result.  This completes the proof of the lemma.
\end{proof}

We also need the H\"oeffding's inequality stated as follows. 
\beg{lemma}\label{lemm:hfding}
    Let $\xi$ be a random variable and, for any $i\in [m]$, $a_i\le\xi \le b_i.$ Then, for any $\gep>0$, there holds
    $${\Pr}\biggl\{{1\over m} \dsum_{i=1}^m \xi_i- E\xi
    \ge \epsilon \biggr\}
    \le \exp\biggl\{- {m \epsilon^2 \over {2 M^2}}\biggr\}.$$
  \end{lemma}

To prove the main theorem, we need to establish a lower bound for $\rho_\bz.$  Denote $\gk = \sup_{x\in \X} \sqrt{K(x,x)}. $
\beg{lemma} \label{lem:rhoinq}For  $\mu \in (0, 1-\cE_h(f_\H))$,  let $ \lceil n(\cE_h(f_\H) + \mu)\rceil \le k \le n$, then we have
$$\Pr\biggl\{\bz\in \Z^n:   {\|f_\bz\|_K \over \rho_\bz}\le   { 2k\over n}  \max\Bigl(\sqrt{2C \over \mu },  {2 \|f_\H\|_K\over \mu}  \Bigr) \biggr\} \ge 1- \exp\Bigl\{ - {  n \mu^2 \over 2(1+ \gk \|f_\H\|_K)^2}\Bigr\}.$$
\end{lemma}

\begin{proof} Since $(f_\bz,\rho_\bz)$ is a minimizer of  formulation \eqref{topk_hinge_v2},   for any $0<\rho \le 1$ there holds
\begin{align*}{1\over n}    \sum_{ i =1}^n  (\rho_\bz - y_i f_\bz(x_i))_+  - {k\over n } \rho_\bz    + {1 \over 2C}\|f_\bz\|^2_K & \le  {1\over n}\displaystyle\sum^{n}(\rho-y_i\rho f_\H(x_i))_+  - {k\over n }\rho+  {1\over 2C}\|\rho f_\H\|^2_K\\
& = \rho \cE_{h,\bz} (f_\H) - {k\over n }\rho + \frac{\rho^2}{2C}\|f_\H\|^2_K. \numberthis \label{eq:inter1}
    \end{align*}
This implies, for any $0<\rho\le 1$, that $${k\over n } \rho_\bz \ge  -\rho \cE_{h,\bz} (f_\H) + {k\over n } \rho - \frac{\rho^2}{2C}\|f_\H\|^2_K.$$
Applying the Hoeffding inequality (Lemma \ref{lemm:hfding})  yields that
\begeqn\label{eq:hfinq}\Pr\Bigl\{\cE_{h,\bz}(f_\H) - \cE_h(f_\H) \le  {\mu \over 2} \Bigr\} \le 1- \exp\Bigl\{ - {  n \mu^2 \over 2(1+ \gk \|f_\H\|_K)^2}\Bigr\}.\endeqn
Then, on the event \;$\mathcal{U} =  \bigl\{\bz\in \Z^n: \cE_{h,\bz}(f_\H) - \cE_h(f_\H) \le {\mu\over 2} \bigr\},$ we have $ -\rho \cE_{h,\bz} (f_\H) + {k\over n}\rho - \frac{\rho^2}{2C}\|f_\H\|^2_K\ge  {\rho ({k\over n}- \cE (f_\H) - {\mu\over 2}) }- \frac{\rho^2}{2C}\|f_\H\|^2_K\ge \rho {\mu\over 2}- \frac{\rho^2}{2C}\|f_\H\|^2_K.$   Define $g(\rho) = {\rho \mu\over 2}- \frac{\rho^2}{2C}\|f_\H\|^2_K.$ It is easy to observe that $$\dmax_{0<\rho\le 1} g(\rho) \ge \left\{\beg{array}{ll}   {C \mu^2 \over 8 \|f_\H\|^2_K},  &  C \mu \le 2\|f_\H\|_K^2, \\
{\mu \over 4 }, &   C \mu > 2\|f_\H\|_K^2.
\end{array}\right.$$
Consequently, on the event $\mathcal{U}$,  there holds 
 \begeqn\label{eq:inequ}  \rho_\bz\ge {n\over k}\dmax_{0<\rho\le 1} g(\rho) \ge  {n\over k} \min\bigl({\mu\over 4},   {C \mu^2 \over 8\|f_\H\|_K^2}\bigr).\endeqn

  By choosing $\rho=0$ in \eqref{eq:inter1}, there holds ${\|f_\bz\|_K^2 \over \rho_\bz} \le {2 C k\over n }$.  Combining these estimation together, on the event $\U$ there holds
$$ {\|f_\bz\|_K \over \rho_\bz} \le \sqrt{\|f_\bz\|_K^2 \over \rho_\bz }\sqrt{1\over \rho_\bz} \le   { 2k\over n}  \max\biggl(\sqrt{2C \over \mu },  {2 \|f_\H\|_K\over \mu}  \biggr).$$
This completes the proof of the lemma. \end{proof}

With all the above technical lemmas, we are ready to  prove Theorem \ref{thm:analysis}.

\noindent {\bf Proof of Theorem \ref{thm:analysis}}.  We will use the relationship between the excess misclassification error and generalization error \citeapx{zhang2004statistical}, \ie\: for any $f: \X \to \R$, there holds \begeqn \label{eq:comparison-inq}\cR(\sgn(f)) - \cR(f_c) \le \cE_h(f) - \cE_h(f_c). \endeqn
Let $\U_1$ be the event such that the inequality in Lemma \ref{lem:rhoinq} is true, \ie\: $\U_1 = \bigl\{\bz\in \Z^n: {\|f_\bz\|_K \over \rho_\bz}\le   { 2k\over n}  \max\bigl(\sqrt{2C \over \mu },  {2 \|f_\H\|_K\over \mu} \bigr) \bigr\}.$   On the event $\U_1$, noting that $0<\mu\le 1$ we have  that ${\|f_\bz\|_K \over \rho_\bz}\le R_{C, \mu}:= {2 \sqrt{2C}+ 4 \|f_\H\|_K \over \mu}.$

Now considering the sample $\bz\in \U_1, $ using  \eqref{eq:comparison-inq} we have
\begeqn\label{eq:inter2}{\cal R}(\sgn(f_{\bf z}))-{\cal R}(f_c)
\le \cE_h\bigl({f_{\bf z} \over  \rho_{\bf z}}\bigr)-\cE(f_c)\le
\cE_h\bigl(\frac{f_{\bf z}}{\rho_{\bf z}}\bigr)-\cE_{h, \bf
z}(\frac{f_{\bf z}}{\rho_{\bf z}})+ \cE_{h, \bf
z}(\frac{f_{\bf z}}{\rho_{\bf z}})-\cE(f_c)\endeqn
By the definition of the minimizer $(\rho_\bz,f_\bz)$, there holds ${1\over n}\sum_{ i =1}^n  (\rho_\bz - y_i f_\bz(x_i))_+  - {k\over n}\rho_\bz    + {1 \over 2C}\|f_\bz\|^2_K  \le 0$ which means that  ${1\over n}\sum_{ i =1}^n  (\rho_\bz - y_i f_\bz(x_i))_+  \le {k\over n} \rho_\bz.$  Equivalently,  $ \cE_{h,\bz}\bigl({f_\bz\over \rho_\bz}\bigr) \le {k\over n}$ on the event $\U_1$.  This combines with \eqref{eq:inter2} implies, on the event $\U_1$, that
\beg{align*} {\cal R}(\sgn(f_{\bf z}))-{\cal R}(f_c) & \le \cE_h\bigl(\frac{f_{\bf z}}{\rho_{\bf z}}\bigr)-\cE_{h, \bf
z}(\frac{f_{\bf z}}{\rho_{\bf z}})+ ({k\over n} -  \cE_h(f_\H))  +  \cE_h(f_\H)- \cE(f_c) \\
&  \le   \dsup_{\|f\|_K \le R_{C,\mu} }  \Bigl[\cE_h\bigl(f\bigr)-\cE_{h, \bz}(f)\Bigr]+ ({k\over n} -  \cE_h(f_\H))  +  \inf_{f\in \H_K} \cE_h(f) -\cE_h(f_c) \\
&  \le \dsup_{\|f\|_K \le R_{C,\mu} }  \Bigl[\cE_h\bigl(f\bigr)-\cE_{h, \bz}(f)\Bigr]+ ({k\over n} -  \cE_h(f_\H))  +  \A(\H_K)\\
& \le  \dsup_{\|f\|_K \le R_{C,\mu} }  \Bigl[\cE_h\bigl(f\bigr)-\cE_{h, \bz}(f)\Bigr]+ \mu + {1\over n}  +  \A(\H_K),
\end{align*}
where the last inequality follows from the fact, by the definition $k=k(n) = \lceil n(\cE_h(f_\H)+ \mu)\rceil$, that $\cE_h(f_\H) + \mu\le {k\over n} \le \cE_h(f_\H) + \mu+ {1\over n}.$
Therefore,
\beg{align*} & \Pr \biggl\{\bz\in \Z^n:   {\cal R}(\sgn(f_{\bf z}))-{\cal R}(f_c)  \ge  {\mu} + {1\over n} + \A(\H) +\gep +  {2\gk R_{C,\mu} \over \sqrt{n}}\biggr\} \\ & \le   \Pr(\U_1^c) +  \Pr\biggl\{\bz\in \U_1:  \dsup_{\|f\|_K  \le R_{C,\mu} }  \bigl[\cE_h\bigl(f\bigr)-\cE_{h, \bz}(f)\bigr] \ge \gep + {2\gk R_{C,\mu} \over \sqrt{n}} \biggr\} \\
& \le   \exp\biggl( -{n \mu^2 \over 2(1+ \gk \|f_\H\|_K)^2}\biggr)+ \Pr\biggl\{\bz:   \dsup_{\|f\|_K  \le R_{C,\mu} }  \bigl[\cE_h\bigl(f\bigr)-\cE_{h, \bz}(f)\bigr] \ge \gep +  {2\gk R_{C,\mu} \over \sqrt{n}}  \biggr\}\\
& \le \exp\biggl( -{n \mu^2 \over 2(1+ \gk \|f_\H\|_K)^2}\biggr) + \exp\biggl( - {2n\gep^2 \over (1+ \gk R_{C,\mu})^2}\biggr)\\
& \le 2\exp\biggl(-{ n\gep^2 \mu^2 \over (1+2\gk\sqrt{2C} + 4\gk \|f_\H\|_K)^2}\biggr).
\end{align*}
Here, the second to last inequality follows from Lemma \ref{lem:rad-est} which is the standard estimation for Rademacher averages \citeapx{bartlett2002rademacher}.
\qed

\section{Examples of \atk loss coupled with different individual losses}
The proposed \atk loss is quite general and can be combined with different existing individual losses. An interesting phenomenon is that \atk with hinge loss and absolute loss have a close relations to the well-known $\nu$-SVM and $\nu$-SVR that proposed in \cite{scholkopf2000new}, respectively. Specifically, we have 

\begprop
Under conditions $C=1$ and $K(\x_i,\x_i) \le 1$ for any $i$, \atk-SVM \eqref{topk_hinge_v2} reduces to $\nu$-SVM with $\nu= {k\over n}$.
\endprop

\begin{proof}
Recall \cite{scholkopf2000new} that the primal problem of the $\nu$-SVM without the bias term $b$ is formulated by 
\begeqn\label{eq:nu-svm}
\min_{f\in\H_K, \rho\ge 0} {1\over n} \sum_{i=1}^n \left[\rho- y_i f(\x_i) \right]_+  -\nu\rho + {1\over 2} \|f\|_K^2,
\endeqn
where $\nu \in [0,1]$ is a scalar. Its dual is given by $$\left\{\begin{array}{cl} \min_{\ga}  &  {1\over 2}\dsum_{i,j=1}^n \ga_i \ga_j y_i y_j K(x_i,x_j)\\
\hbox{s.t.} & 0\le\ga_i\le {1\over n}, \forall i =1,2,\ldots,n\\
& \dsum_{i=1}^n \ga_i \ge \nu. 
\end{array}\right.$$  
The KKT conditions implies, for any optimal solution $\ga^\ast$ of the dual and any optimal solution $(f_\bz,\rho_\bz)$ of the primal, there holds, for the support vectors $\x_i$ with $0<\ga^\ast_i < {1\over n}$, that $\rho_\bz = y_i \sum_{j=1}^n \ga^\ast_j y_j K(\x_i, \x_j)$.  If one assumes that $K(\x_i,\x_i)\le 1$ for all $i$, then $|K(x_i,x_j)| = |\langle K_{\x_i}, K_{\x_j} \rangle_K| \le \sqrt{K(\x_i,\x_i)}\sqrt{K(\x_j,\x_j)}\le 1$. Therefore,  
$$ \rho_\bz \le |y_i \sum_{j=1}^n \ga^\ast_j y_j K(\x_i, \x_j) | \le \sum_{j=1}^n \ga^\ast_j \le 1,$$
where the last inequality follows from the fact that $\ga_j \le {1\over n} $ for all $j.$ Consequently, in the minimization of \eqref{eq:nu-svm} we can restrict to $\rho\le 1$ which implies that the \atk-SVM \eqref{topk_hinge_v2} with $C=1$ is reduced to $\nu$-SVM with $\nu = {k\over n}.$ \end{proof}
Besides, the dual formulation of \atk-SVM  \eqref{topk_hinge_v2} can be easily derived as 
$$\left\{\begin{array}{cl} \min_{\ga}  &  {1\over 2}\dsum_{i,j=1}^n \ga_i \ga_j y_i y_j K(x_i,x_j) - \dsum_{i=1}^n \ga_i \\
\hbox{s.t.} & 0\le\ga_i\le {C\over n}, \forall i =1,2,\ldots,n\\
& \dsum_{i=1}^n \ga_i \le { Ck \over n}. 
\end{array}\right.$$  
This leads to a convex quadratic programming problem for \atk-SVM and can be solved efficiently.

\begprop
\matk model \eqref{eq:1} coupled with absolute loss in the RKHS setting becomes $\nu$-SVR with $\nu = {k\over n}$.
\endprop

\begin{proof}	
	Recall \cite{scholkopf2000new} that the primal problem of the $\nu$-SVR without the bias term $b$ in RKHS is formulated by 
	\begeqn\label{eq:nu-svr}
	\min_{\w, \lambda\ge 0} {1\over n} \sum_{i=1}^n \left[ |y_i - f(x_i)| -\lambda \right]_+ + \nu \lambda + {1\over 2C} \|f\|_K^2,
	\endeqn
	where $\nu \in [0,1]$ is a scalar. It is easy to see in the setting of RKHS that, with individual absolute loss (\ie, $\ell(f(\x_i),y_i) = |y_i - f(\x_i)|$) and $\Omega(\w) = {1 \over 2C} \|f\|_K^2$, \matk model \eqref{eq:1} becomes 
	\begeqn\label{eq:atk-svr}
	\min_{\w, \lambda\ge 0} {1\over n} \sum_{i=1}^n \left[ |y_i - f(\x_i)| -\lambda \right]_+ + {k\over n} \lambda + {1\over 2C} \|f\|_K^2,
	\endeqn
	We name model \eqref{eq:atk-svr} as \atk-SVR for brevity. It is straightforward that \atk-SVR is exactly the $\nu$-SVR with $\nu = {k \over n}$. 
		
The above propositions provide new perspectives to understand the success of $\nu$-SVM and $\nu$-SVR. That is, through ``shifting down'' the original individual hinge loss and absolute loss and truncating them at zero, the penalty of correctly classified samples that are ``far enough'' from classification boundary in classification and the penalty of samples that are ``close enough'' to the regression tube in regression will be zero, which enables the model to put more effort to misclassified samples or samples that are ``too far" to the regression tube. Besides, the good properties of $\nu$ in $\nu$-SVM and $\nu$-SVR that derived in \cite{scholkopf2000new} can be extended to $k$ in \atk-SVM and \atk-SVR directly. 
For example, for \atk-SVM with conditions $C=1$ and $K(\x_i,\x_i) \le 1$ and \atk-SVR, $k$ is a lower bound on the number of support vectors and is an upper bound on the number of margin errors. Due to its directness, we refer to \cite{scholkopf2000new} for their proofs. This can also help us select $k$ in \atk-SVM and \atk-SVR.
	
\end{proof}
\label{key}

\section{Toy examples for effects of different aggregate losses} 
We illustrate the behaviors of different aggregate losses using binary classification on 2D synthetic data examples. We generate six different datasets (Fig. \ref{fig:synthetic full}). Each dataset consists of 200 samples sampled from Gaussian distributions with distinct centers and variances. We use linear classifier and consider different aggregate losses combined with individual logistic loss and individual hinge loss. The learned linear classifiers and the misclassification rate of \atk vs. $k$ are shown in Fig. \ref{fig:synthetic full}. The left panel in Fig. \ref{fig:synthetic full} (\ie, (a1-a6) and (b1-b6)) refers to the results of aggregate losses combined with individual logistic loss and the right panel (\ie, (c1-c6) and (d1-d6)) refers to the results of aggregate losses combined with individual hinge loss.

\textbf{Case 1.} The first row in Fig. \ref{fig:synthetic full} represents an ideal situation where there is no outliers and the $+$ samples and $-$ samples are well distributed and linear separable. In this Case , all aggregate losses with both logistic loss and hinge loss can get perfect classification results. This is also verified in Fig. \ref{fig:synthetic full}~(b1) and Fig. \ref{fig:synthetic full}~(d1) that the misclassification rate is zero for \atk with all $k$. 

\textbf{Case 2.} In the second row, there exists an outlier in the $+$ class (shown as an enlarged $\times$). We can see that the maximum loss is very sensitive to outliers and its classification boundary in Fig. \ref{fig:synthetic full}~(a2) and Fig. \ref{fig:synthetic full}~(c2) are largely influenced by this outlier. Seen from Fig. \ref{fig:synthetic full}~(b2) and Fig. \ref{fig:synthetic full}~(d2), \atk loss is more robust with larger $k$ and achieves better classification results when $k \ge 3$.

\textbf{Case 3.} In the third row, there is no outliers and the $+$ samples and $-$ samples are still linear separable. However, the $+$ samples clearly has two distributions (typical distribution and rare distribution). Seen from Fig. \ref{fig:synthetic full}~(a3) and Fig. \ref{fig:synthetic full}~(c3), the linear classifiers learned from average loss sacrifice some $+$ samples from rare distribution even though the data are separable. This is because that the individual logistic loss has non-zero penalty for correctly classified samples and individual hinge loss has non-zero penalty for correctly classified samples with margin less than 1.  Hence samples that are ``too close'' to the classification boundary (samples from rare distributions in this example) are sacrificed to accommodate reducing the average loss over the whole datasets. Besides, average with hinge loss achieves better results than that with logistic loss, this may because that for correctly classified samples with margin larger than 1, the penalty caused by hinge loss is zero while that caused by logistic loss is still non-zeros. Hence this part of samples still has ``negative'' effect to the learned classification boundary of average with logistic loss. 
By ``shifting down'' and truncating, \atk loss with proper $k$ (\eg, $k \in [1,18]$ for logistic loss and $k \in [1,50]$ for hinge loss) can better fit this data, as is shown in Fig. \ref{fig:synthetic full}~(b3) and Fig. \ref{fig:synthetic full}~(d3).

\textbf{Case 4.} The plots in the fourth row refers to a more complicated situation where there are both multi-modal distributions and outliers. Obviously, neither maximum loss (due to the outlier) nor average loss (due to the multi-modal distributions) can fit this data very well. Seen from Fig. \ref{fig:synthetic full}~(b4) and Fig. \ref{fig:synthetic full}~(d4), there exists a proper region of $k$ (\ie, $k \in [4,24]$ for logistic loss and $k \in [3,62]$ for hinge loss) that can yield much better classification results. We also report the linear classifier learned from AT$_{k=10}$ for better understanding. Seen from Fig. \ref{fig:synthetic full}~(a4) and Fig. \ref{fig:synthetic full}~(c4), the classification boundary of AT$_{k=10}$ is closer to the optimal Bayes linear classifier than that of maximum and average. 

\textbf{Case 5.} The fifth row shows an imbalance scenario where the $-$ samples are far less that the $+$ ones. The $+$ samples and $-$ samples are linear separable. We can see from Fig. \ref{fig:synthetic full}~(a5) that the average loss with individual logistic loss sacrifices all $-$ samples to obtain a small loss over the whole dataset. While the average loss with individual hinge loss obtains better results, it still sacrifices half of the $-$ samples, as is shown in Fig. \ref{fig:synthetic full}~(c5). In contrast, \atk loss can better fit this distributions and achieves better classification results with $k \in [1,25]$ for logistic loss and $k \in [1,135]$ for hinge loss.

\textbf{Case 6.} The sixth row shows an imbalanced data with one outlier. Comparing to the results in the fifth row, the performance of maximum loss decreases due to the outlier. The performance of average loss with hinge loss also decreases. Seen from Fig. \ref{fig:synthetic full}~(b6) and Fig. \ref{fig:synthetic full}~(d6), \atk loss with $k \in [2,12]$ for logistic loss and $k \in [3,59]$ for hinge loss can better fit this data and achieve better classification results.

Though very simple, these synthetic datasets reveal some properties of the maximum loss and average loss intuitively. That is, while maximum loss performs very well for separable data, it it very sensitive to outliers. Meanwhile, average loss is more robust to outliers than maximum loss, however, it may sacrifices some correctly classified samples that are ``too close'' to the classification boundary{, especially in imbalanced or multi-modal data distributions}. As the distributions of datasets from real applications can be very complicated and outliers are unavoidable, it is interesting and helpful to add an extra freedom $k$ to better fitting different data distributions.

\begin{table}
	\centering
	\footnotesize
	\renewcommand\arraystretch{1.2}
	\setlength\tabcolsep{5pt}
	
	\begin{tabular}{c c c c c | c c c c c || c c c}
		\hline
		\multicolumn{10}{c ||}{Binary Classification} & \multicolumn{3}{c}{Regression} \\
		\hline
		Dataset & $c$ & $n$ & $d$ & $IR$ & Dataset & $c$ & $n$ & $d$ & $IR$ & Dataset & $n$ & $d$\\
		\hline
		{\tt Monk} & 2 & 432 & 6 & 1.12 & {\tt Spambase} & 2 & 4601 & 57 & 1.54 & {\tt Sinc} & 1000 & 10 \\
		{\tt Australian} & 2 & 690 & 14 & 1.25 & {\tt German} & 2 & 1000 & 24 & 2.33 & {\tt Housing} & 506 & 13 \\
		{\tt Madelon} & 2 & 2600 & 500 & 1.0 & {\tt Titanic} & 2 & 2201 & 3 & 2.10 & {\tt Abalone} & 4177 & 8 \\
		{\tt Splice} & 2 & 3175 & 60 & 1.08 & {\tt Phoneme} & 2 & 5404 & 5 & 2.41 &  {\tt Cpusmall} & 8192 & 12 \\
		\hline
	\end{tabular}
	\caption{\small \em Statistical information of each dataset, where $c,n,d$ are the number of classes, samples and features, respectively. {\em IR} is the class ratio.}
	\label{dataset}
	~\vspace{-3.5em}
\end{table}

{\footnotesize
\begin{figure}
	\begin{tabular}{c@{\hspace{0em}}c@{\hspace{0.5em}}|@{\hspace{0.5em}}c@{\hspace{0em}}c}
		\includegraphics[width=.24\textwidth]{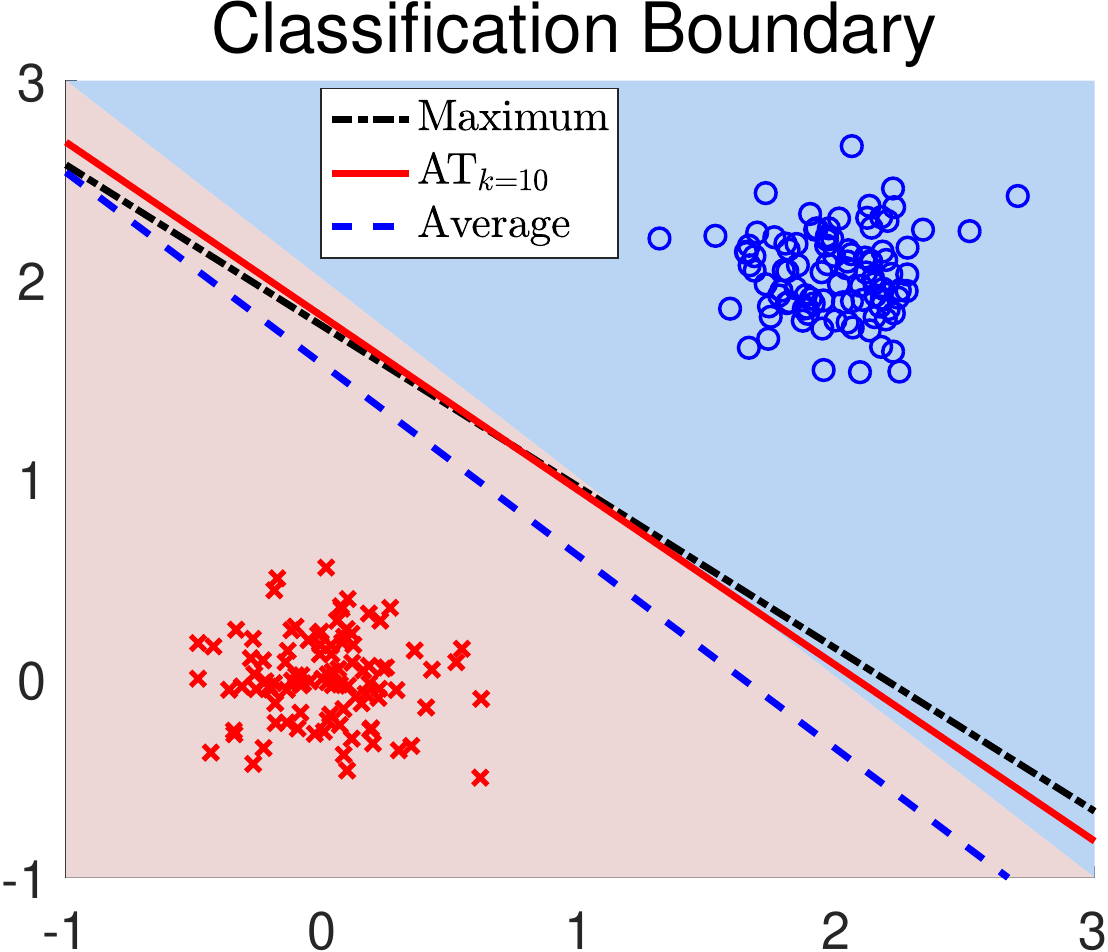} &
		\includegraphics[width=.24\textwidth]{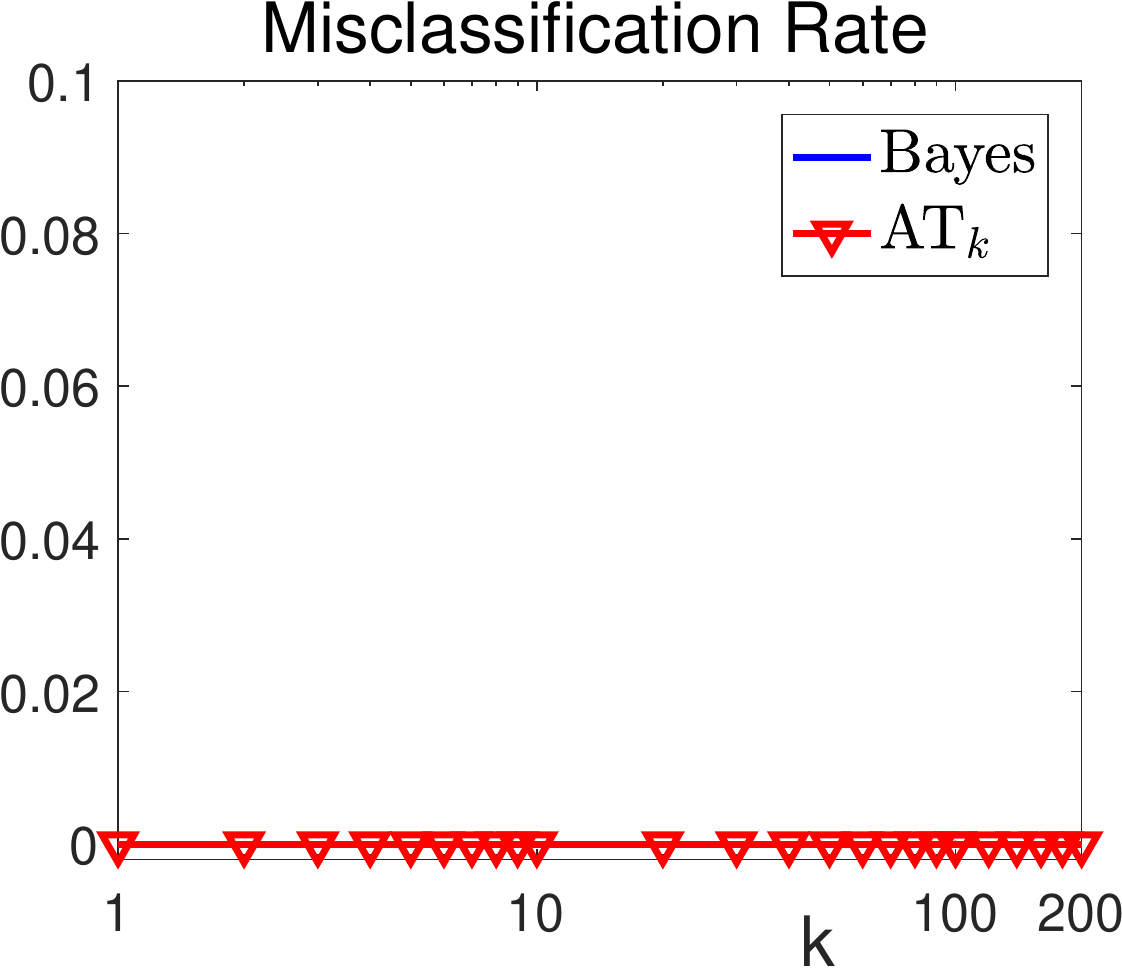}  &
		\includegraphics[width=.24\textwidth]{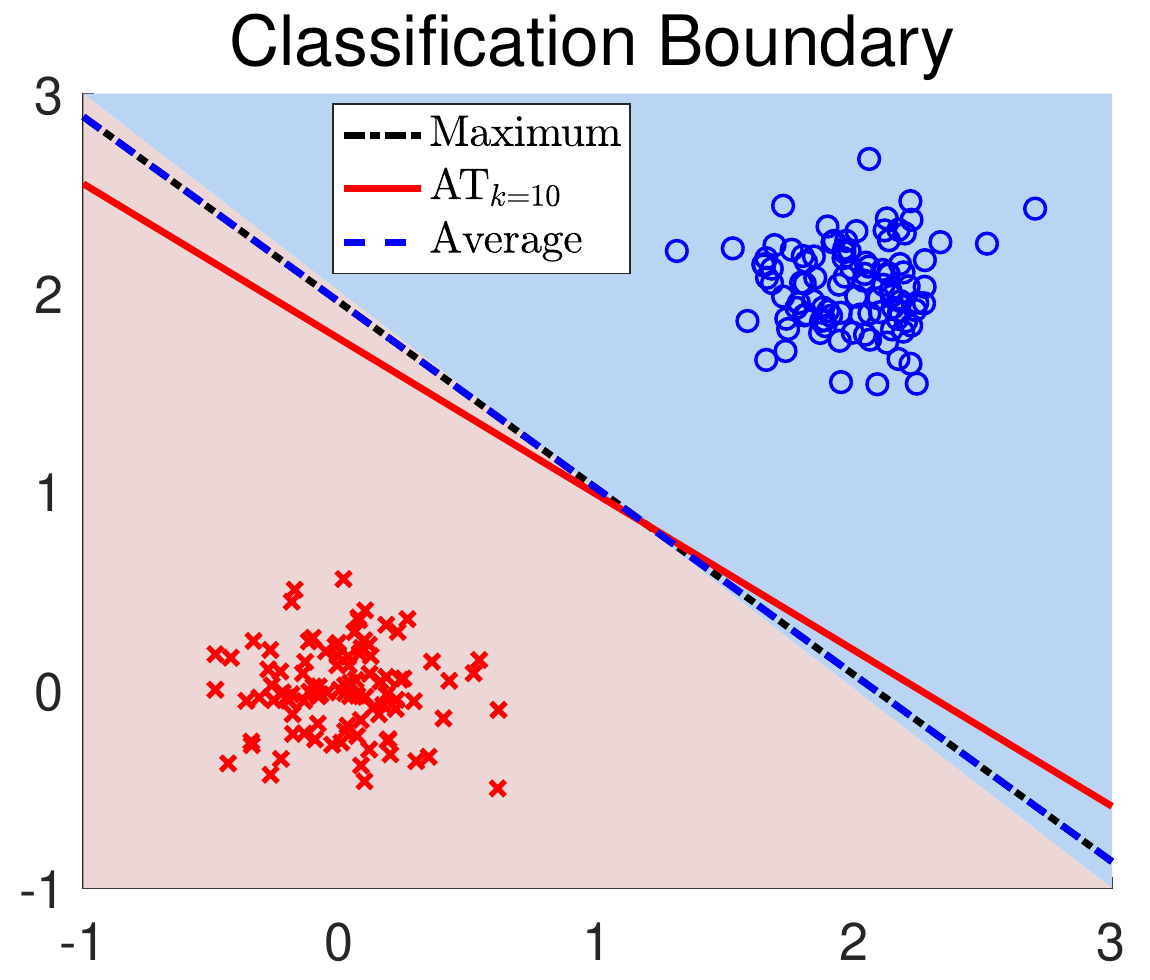} &
		\includegraphics[width=.24\textwidth]{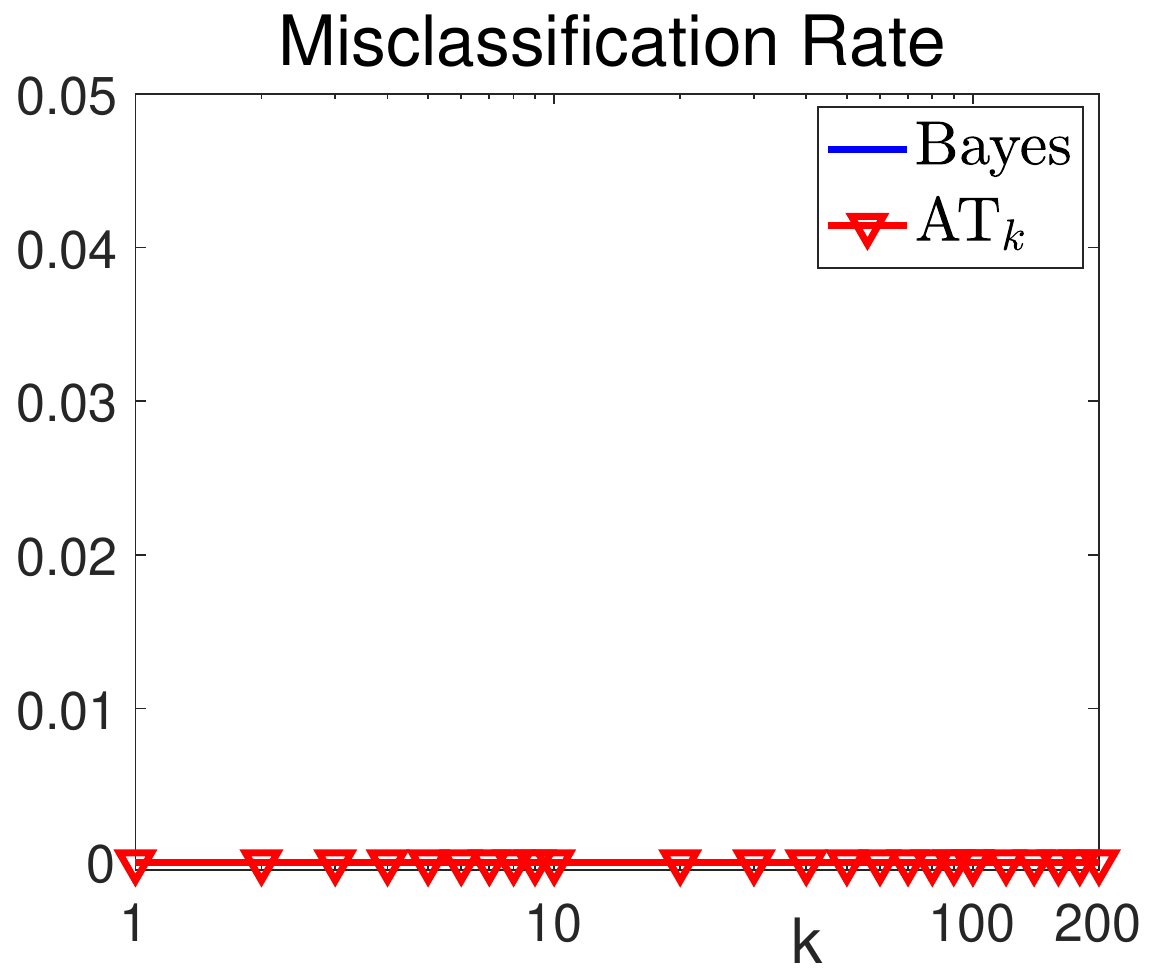} \\
		(a1) & (b1) & (c1) & (d1) \\
		\includegraphics[width=.24\textwidth]{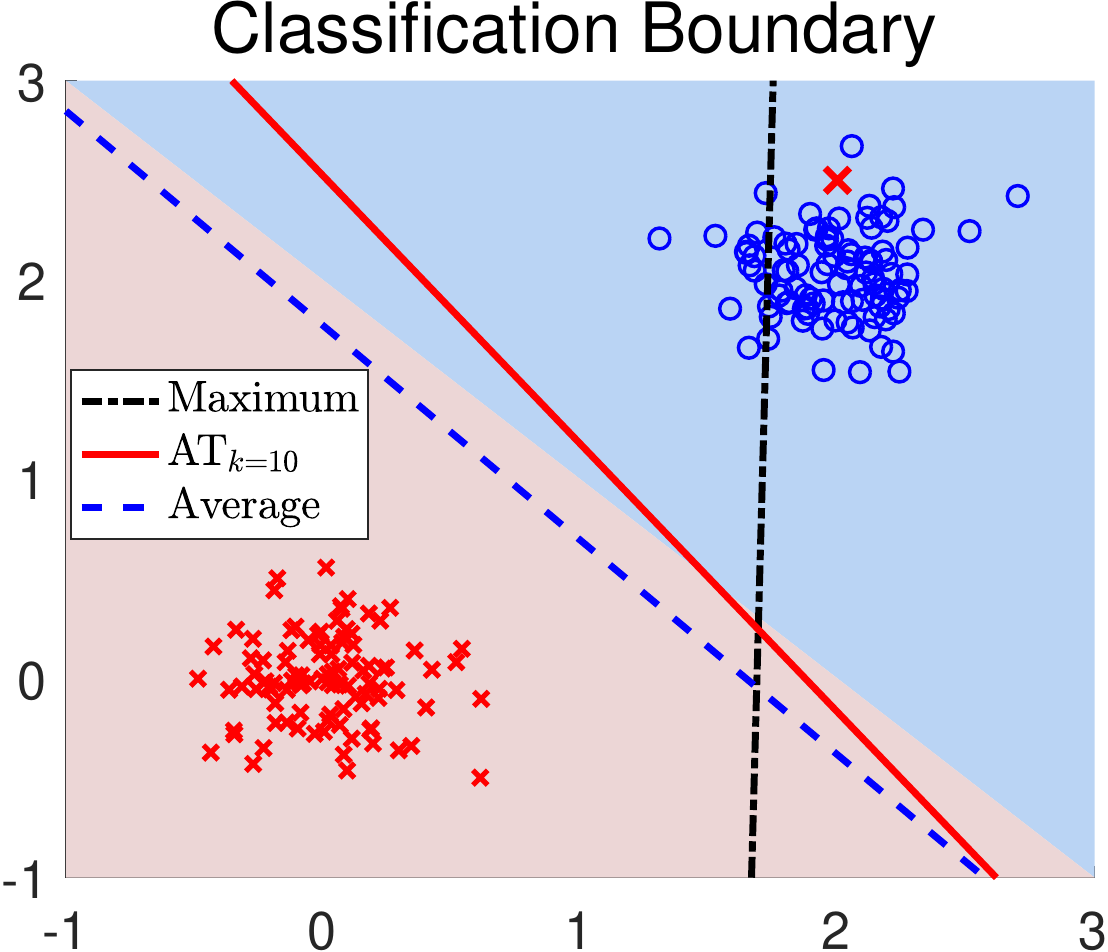} &
		\includegraphics[width=.24\textwidth]{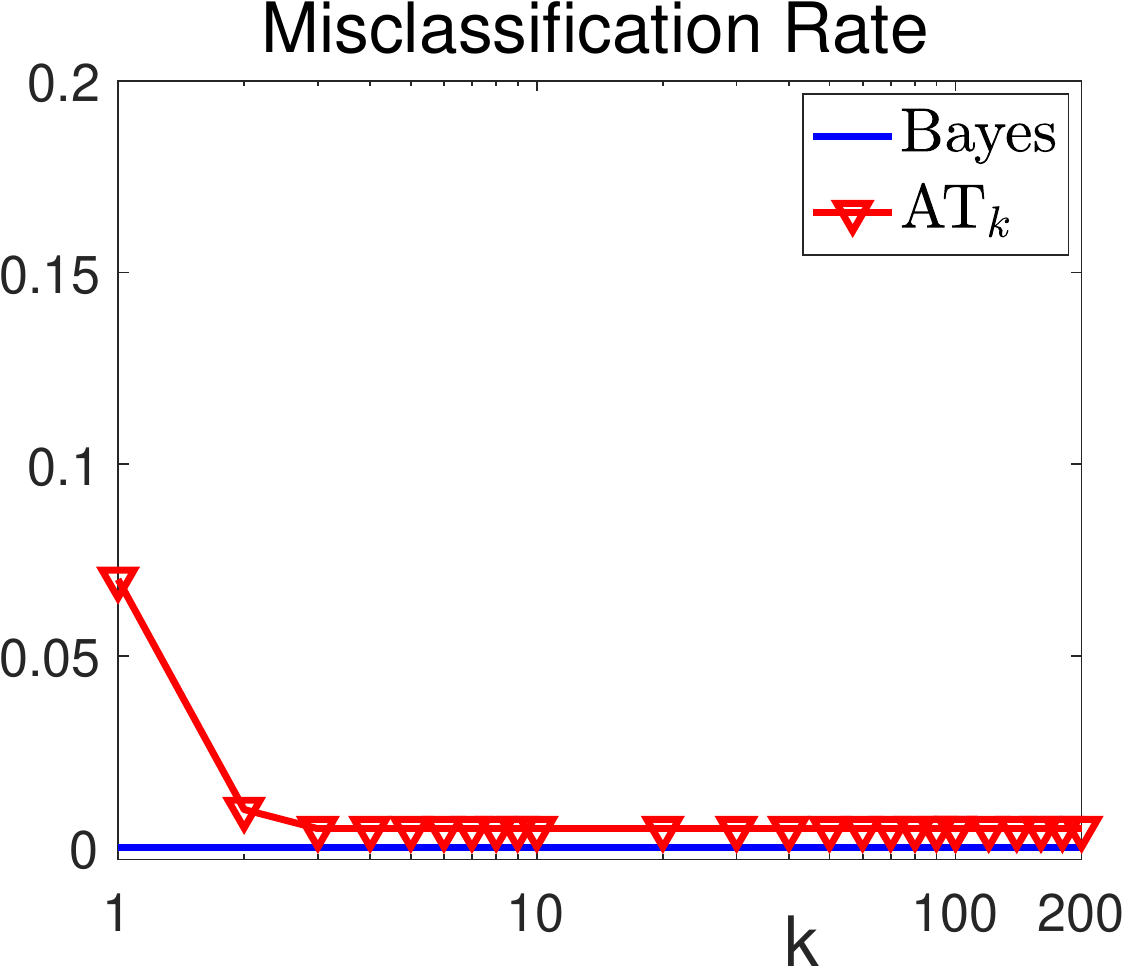}  &
		\includegraphics[width=.24\textwidth]{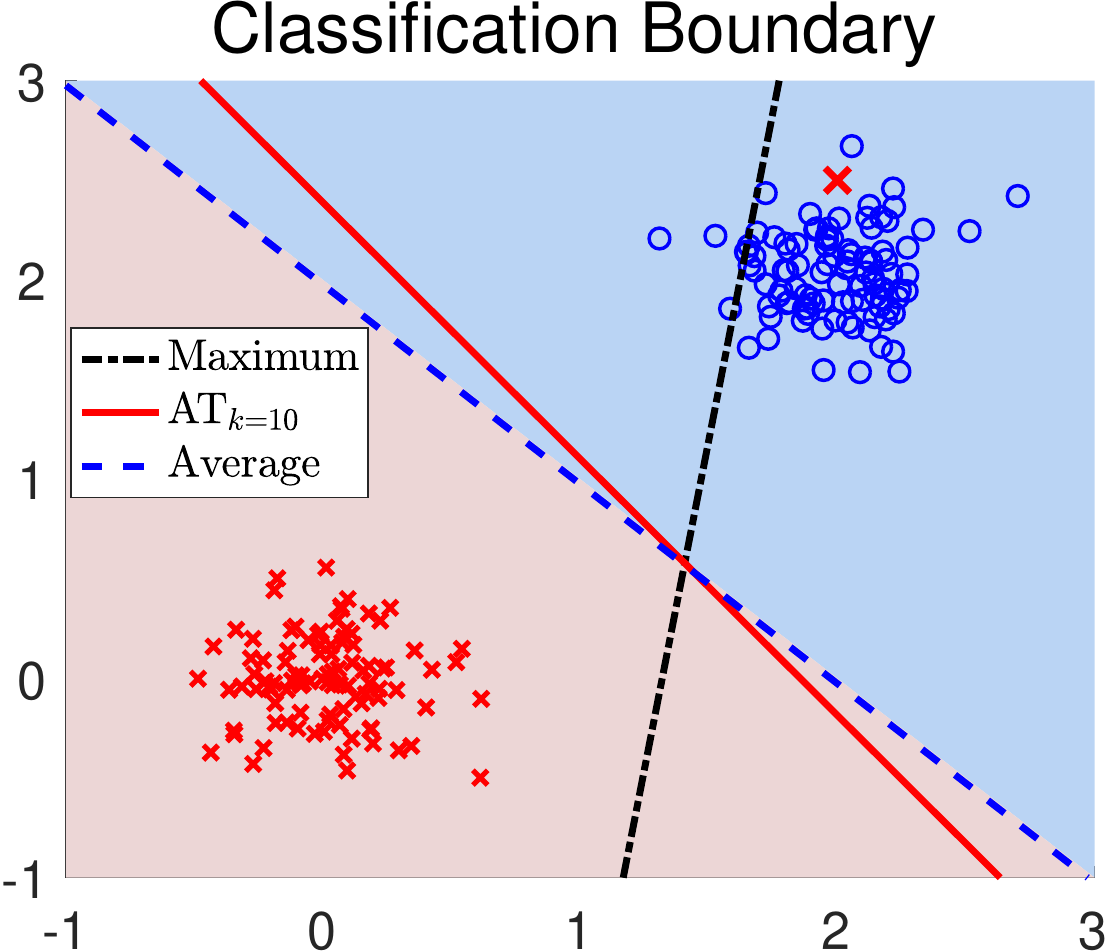} &
		\includegraphics[width=.24\textwidth]{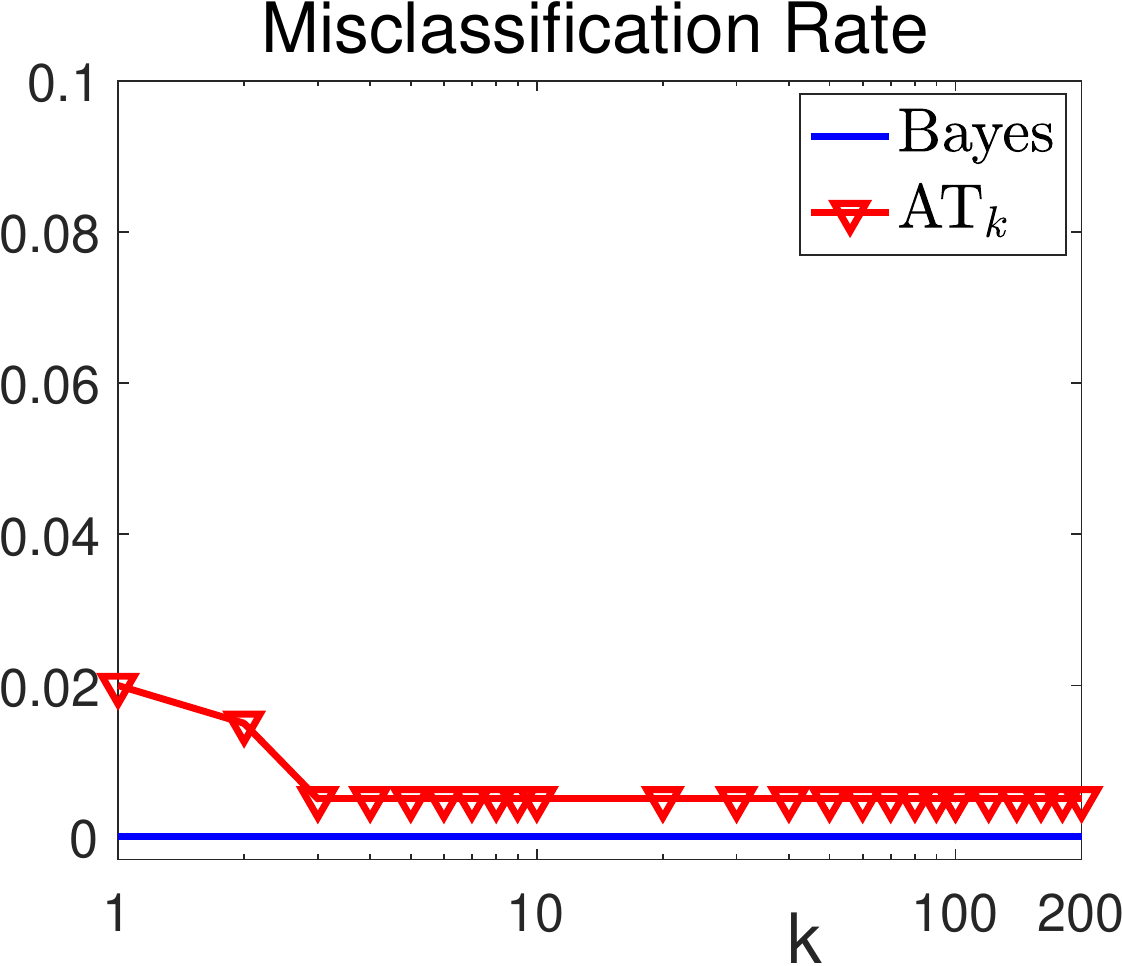} \\
		(a2) & (b2) & (c2) & (d2) \\
		\includegraphics[width=.24\textwidth]{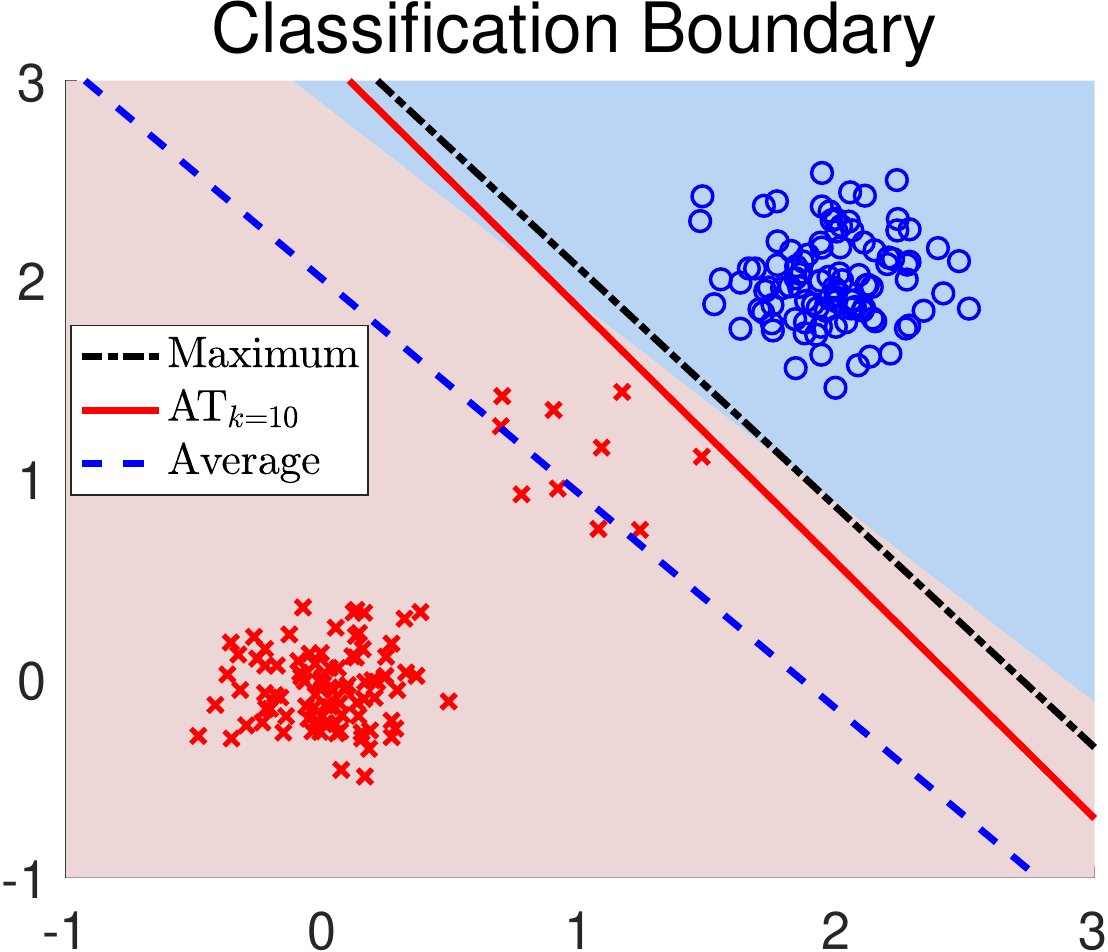} &
		\includegraphics[width=.24\textwidth]{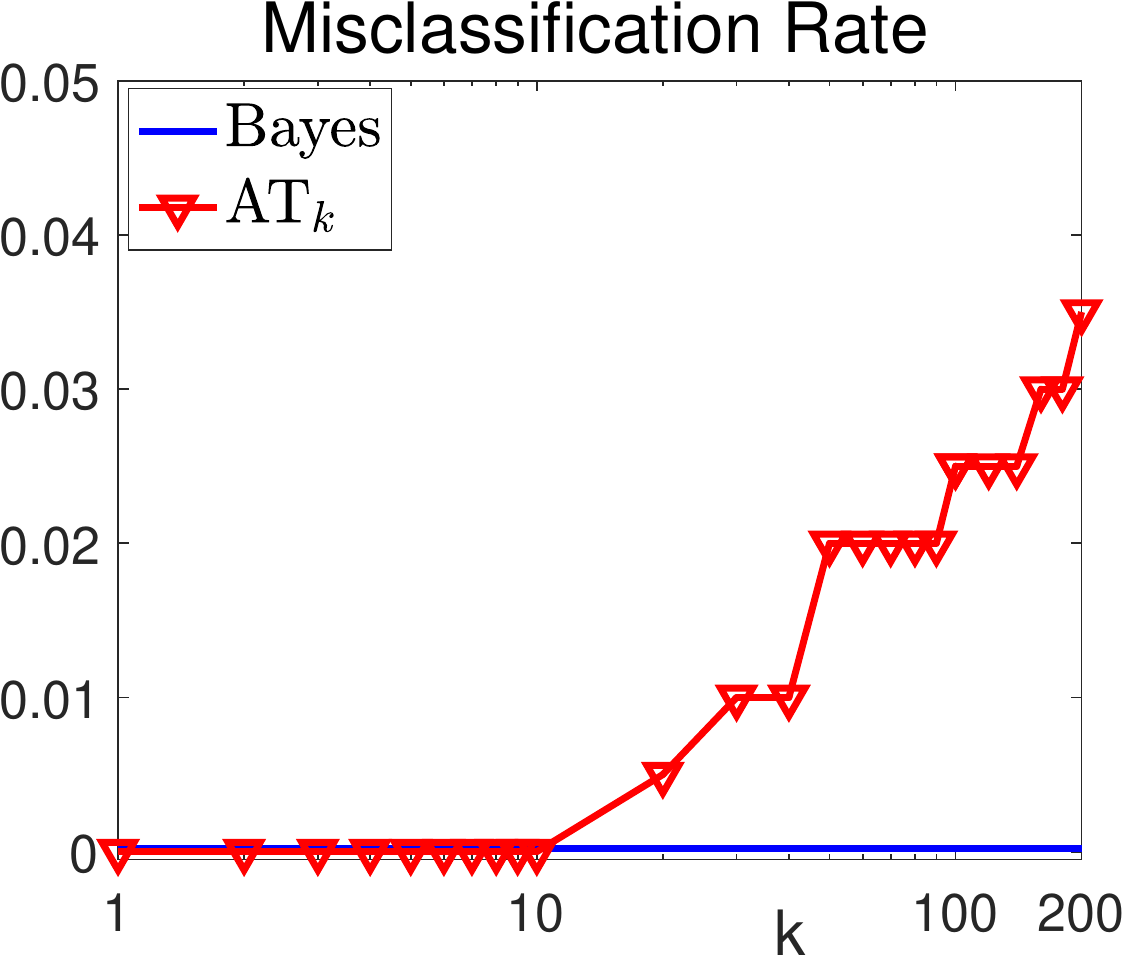}  &
		\includegraphics[width=.24\textwidth]{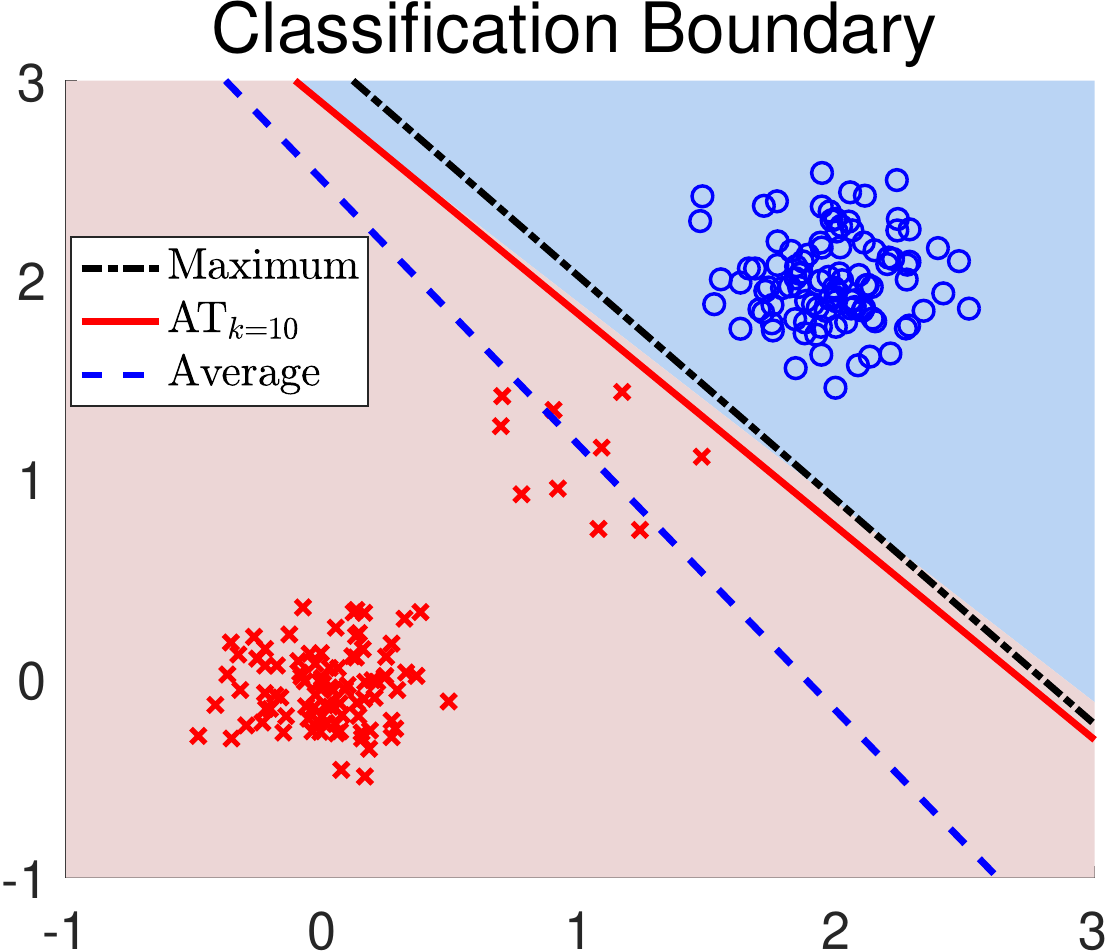} &
		\includegraphics[width=.24\textwidth]{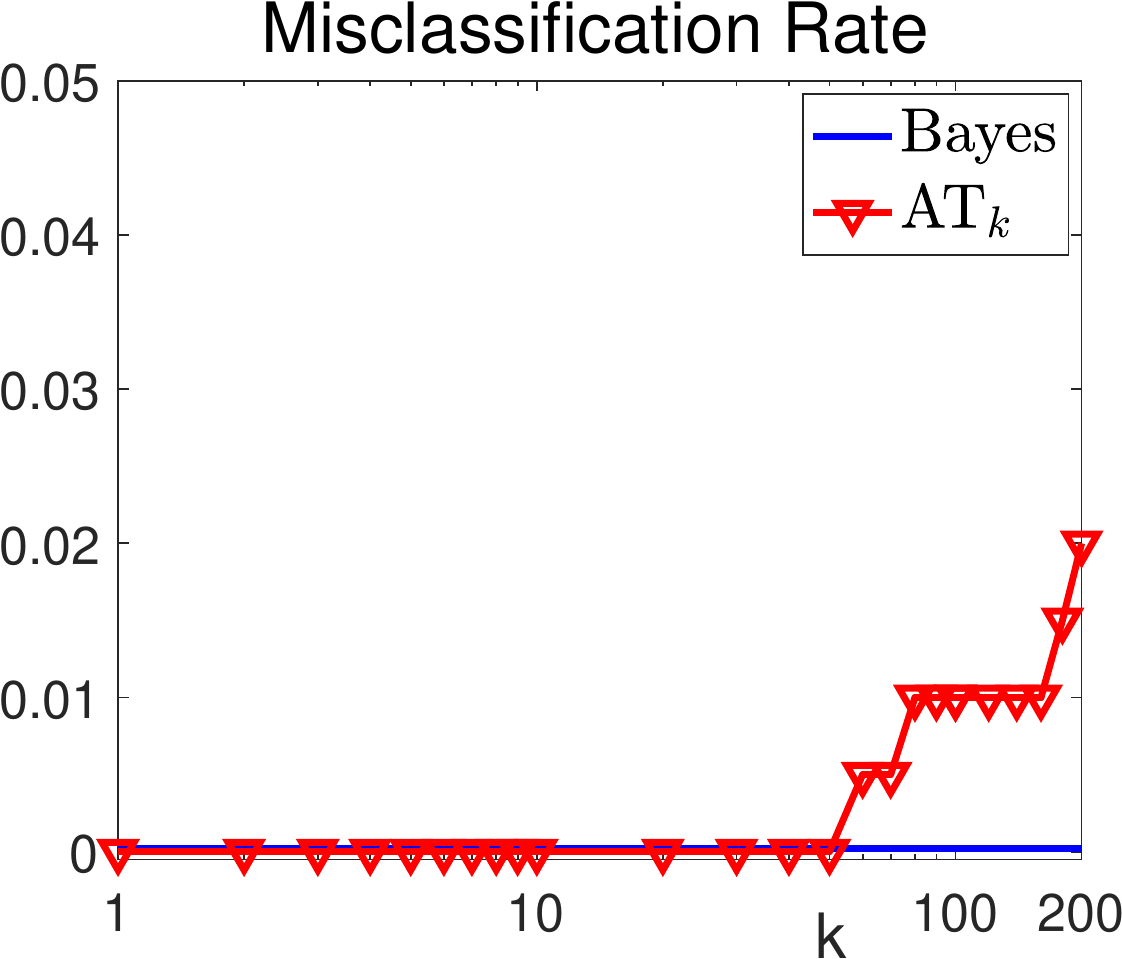} \\
		(a3) & (b3) & (c3) & (d3) \\
		\includegraphics[width=.24\textwidth]{logistic_classification_boundary_case4.pdf} &
		\includegraphics[width=.24\textwidth]{logistic_classification_error_case4.pdf}  &
		\includegraphics[width=.24\textwidth]{hinge_classification_boundary_case4.pdf} &
		\includegraphics[width=.24\textwidth]{hinge_classification_error_case4.pdf} \\
		(a4) & (b4) & (c4) & (d4) \\
		\includegraphics[width=.24\textwidth]{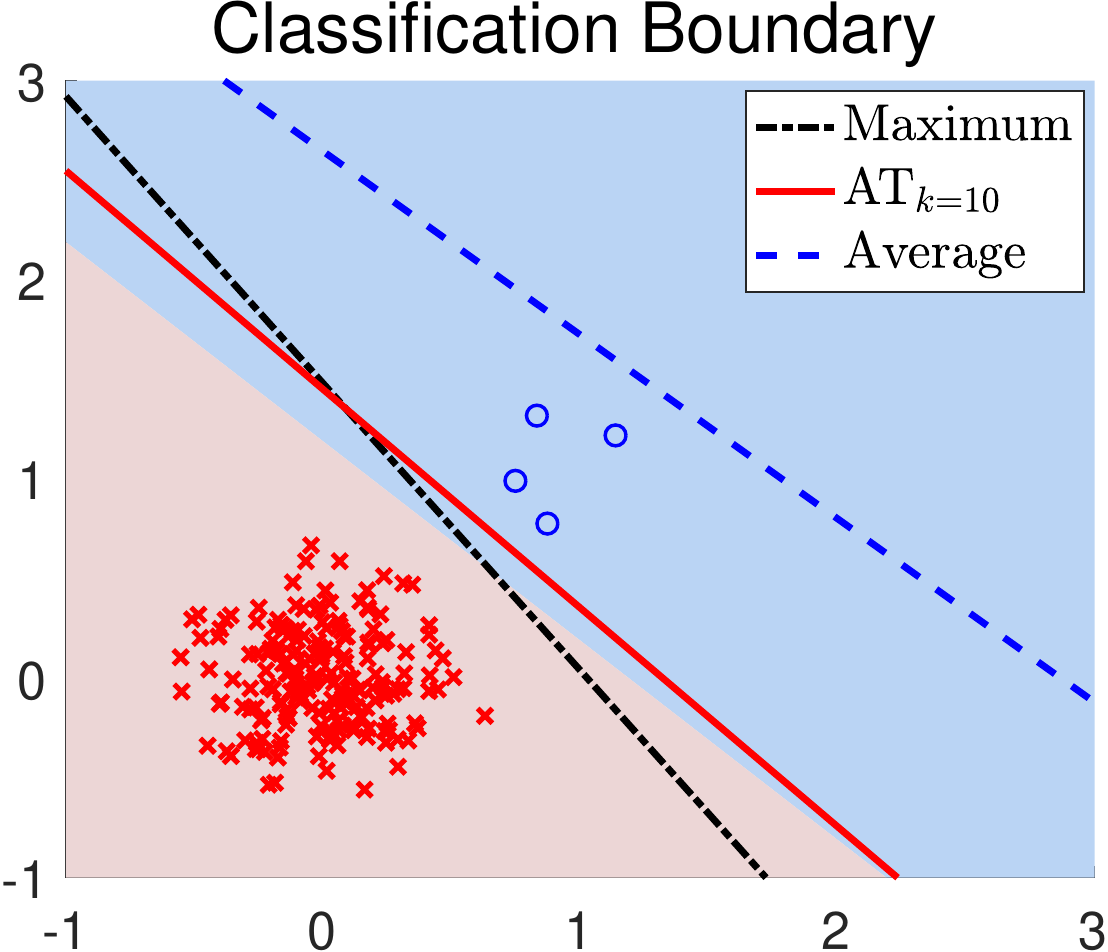} &
		\includegraphics[width=.24\textwidth]{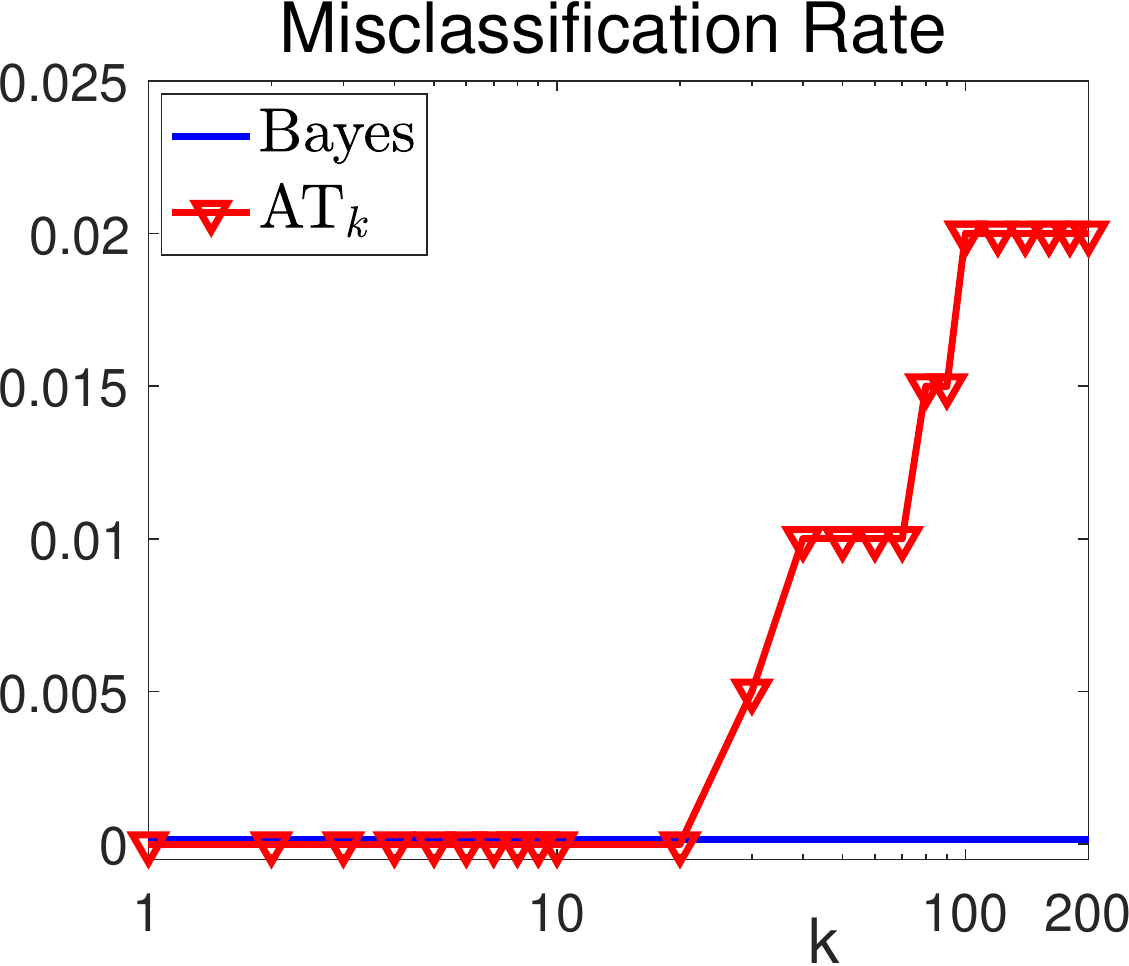}  &
		\includegraphics[width=.24\textwidth]{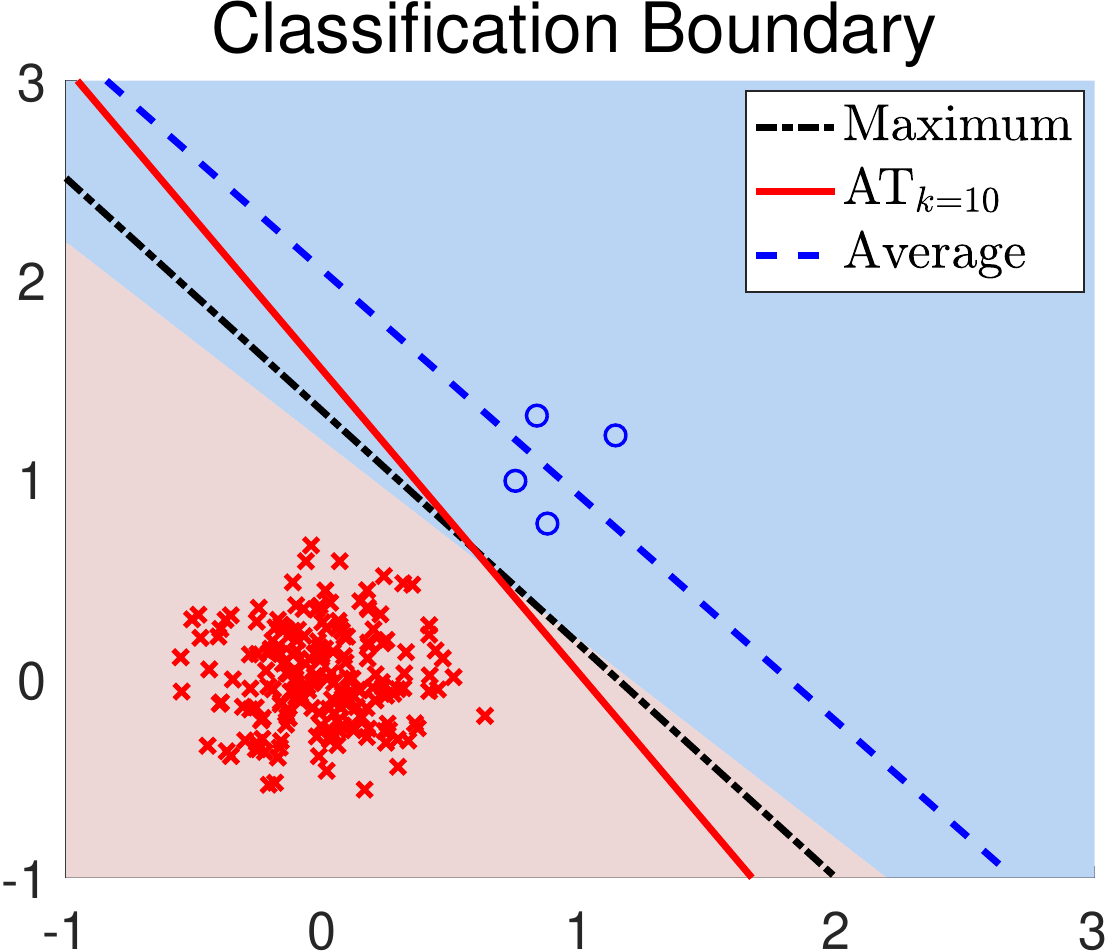} &
		\includegraphics[width=.24\textwidth]{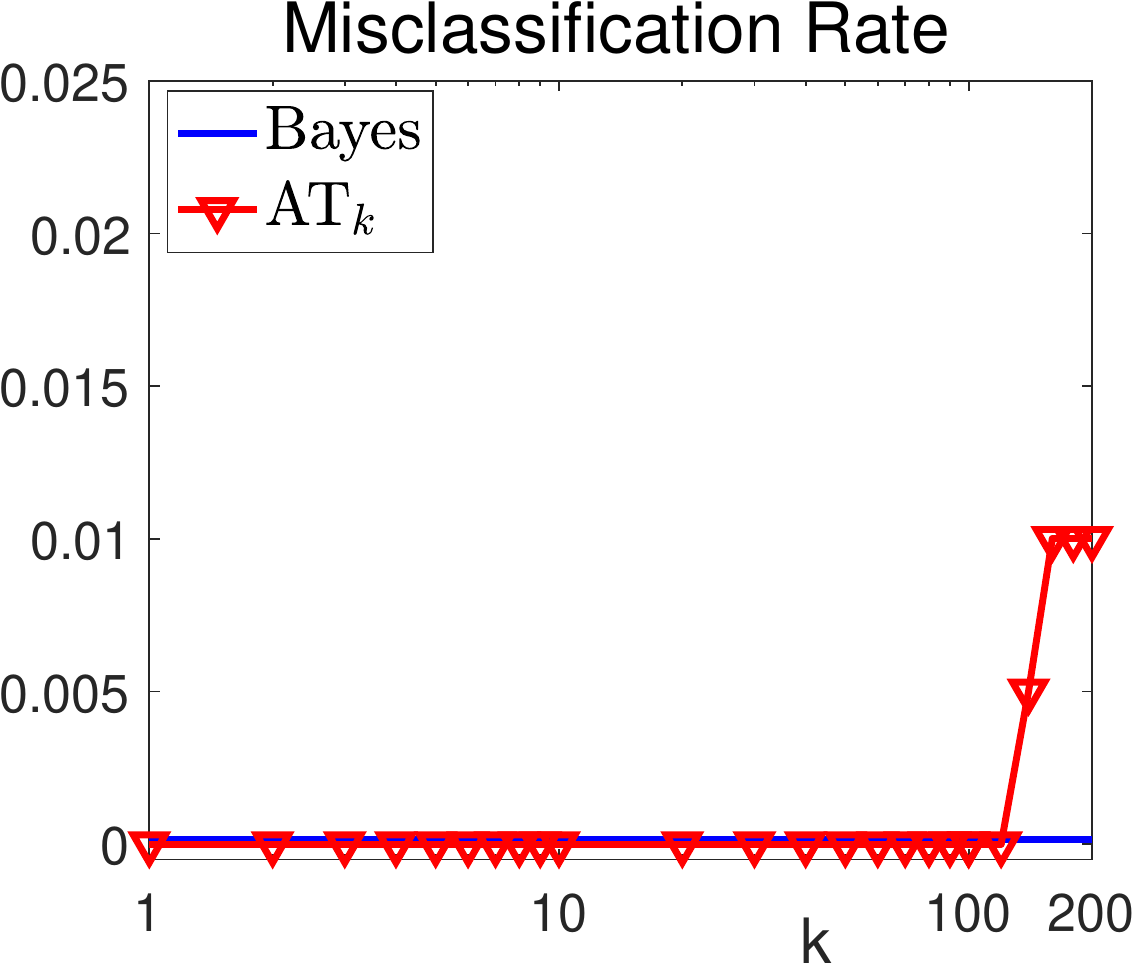} \\
		(a5) & (b5) & (c5) & (d5) \\
		\includegraphics[width=.24\textwidth]{logistic_classification_boundary_case6.pdf} &
		\includegraphics[width=.24\textwidth]{logistic_classification_error_case6.pdf}  &
		\includegraphics[width=.24\textwidth]{hinge_classification_boundary_case6.pdf} &
		\includegraphics[width=.24\textwidth]{hinge_classification_error_case6.pdf} \\
		(a6) & (b6) & (c6) & (d6) \\
	\end{tabular}
	~\vspace{-1em}
	\caption{\em \small Comparison of different aggregate losses on 2D synthetic datasets for binary classification on six different data distributions. Each row refers to one data distribution. In all plots, the $+$ samples are red crosses and the $-$ samples are blue circles. The outliers are shown with an enlarged $\times$ if any. The plots on the left panel report the results of linear classifiers learned with different aggregate losses combined with individual logistic loss, and that on the right panel are the results of different aggregate losses combined with individual hinge loss. The plots on the first and third columns show the learned linear classifiers of maximum, average and AT$_{k=10}$ with the optimal Bayes classification shown as shaded areas, and the plots on the second and fourth columns show the misclassification rate of \atk vs. $k$.}
	\label{fig:synthetic full}
	~\vspace{-2.5em}
\end{figure}
}

\textbf{Sinc data used for regression:} This dataset is drawn from sinc function, \ie, $y = \sin(x)/x$, where $x$ is an scalar, and the goal is to estimate $y$ from the input $x$. We randomly select $1000$ samples $(x_i, y_i)$ with $x_i$ drawn uniformly from $[-10,10]$. As we use linear regression model in our experiments, we map the input $x$ into a kernel space via the radial basis function (RBF) kernel. We select $10$ RBF kernels from $[-10,10]$, which leads to $10$-dimension input $\x = [k(x,c_1),\cdots,k(x,c_{10})]^T$, where $k(x,c_i) = \exp(-(x - c_i)^2)$. We also add random Gaussian noise $N(0,0.2^2)$ to the target output.

Table \ref{dataset} tabulates the statistical information of datasets that used in this paper. Experiments results in terms of G-mean for binary classification are reported in Table \ref{results:binary classification F1}, and experiments results in terms of \emph{mean absolute error (MAE)} for regression are also reported in Table \ref{results:regression mae}.

\begin{table}
	\centering
	\small
	\renewcommand\arraystretch{1.1}

	\begin{tabular}{c| c c c||c c c}
		\hline 
		\multirow{2}{*}{} & \multicolumn{3}{c||}{Logistic Loss} & \multicolumn{3}{c}{Hinge Loss}
		\\ \cline{2-7}
		& Maximum & Average & AT$_{k^\ast}$ & Maximum & Average & AT$_{k^\ast}$ \\
		\hline	
		
		{\tt Monk}   
		& 75.80(3.37)	 & 79.47(2.05)	 & \textbf{82.95(2.39)} 	 
		& 76.37(3.51)	 & 81.15(3.11)	 & \underline{82.68(2.79)} 	 \\ 
		
		{\tt Australian}
		& 78.88(7.56)	 & 86.10(3.19)	 & \textbf{88.37(2.97)} 	 
		& 78.99(7.47)	 & 85.72(3.15)	 & \underline{87.50(4.14)} 	 \\ 
		
		{\tt Madelon}
		& 51.20(2.92)	 & 59.28(1.41)	 & \textbf{60.26(1.58)} 	 
		& 49.42(2.71)	 & 59.36(1.83)	 & 59.72(1.51) 	 \\ 
		
		{\tt Splice}
		& 76.31(1.94)	 & 82.73(1.01)	 & \textbf{83.90(0.97)} 	 
		& 76.47(2.12)	 & \underline{83.78(1.12)}	 & \underline{83.79(0.97)} 	 \\ 
		
		{\tt Spambase}
		& 69.48(5.94)	 & 90.63(1.21)	 & 90.63(1.21) 	 
		& 69.96(6.76)	 & \textbf{91.90(0.85)}	 & \textbf{91.90(0.85)} 	 \\ 
		
		{\tt German}
		& 44.88(7.37)	 & 60.12(7.59)	 & \textbf{63.80(4.29)} 	 
		& 44.87(7.34)	 & 61.02(7.49)	 & \underline{62.96(3.33)} 	 \\ 
		
		{\tt Titanic}
		& 46.52(15.27)	 & 66.69(1.44)	 & 66.69(1.44) 	 
		& 48.55(13.15)	 & 66.65(1.43)	 & \textbf{67.74(1.78)} 	 \\ 
		
		{\tt Phoneme}   
		& 19.10(11.84)	 & 63.00(1.84)	 & 66.29(2.04) 	 
		& 12.89(11.47)	 & \textbf{70.41(1.65)}	 & \textbf{70.41(1.65)} 	 \\ 
		
		\hline
	\end{tabular}
	\caption{\small \em Average G-mean(\%) of different learning objectives over $8$ datasets. The best results are shown in bold with results that are not significant different to the best results underlined.}
	\label{results:binary classification F1}
\end{table}

\begin{table}[t]
	\centering
	\footnotesize
	\renewcommand\arraystretch{1.1}
	\setlength\tabcolsep{1.5pt}
	
	\begin{tabular}{c| c c c||c c c}
		\hline 
		\multirow{2}{*}{} & \multicolumn{3}{c||}{Square Loss} & \multicolumn{3}{c}{Absolute Loss}
		\\ \cline{2-7}
		& Maximum & Average & AT$_{k^\ast}$ & Maximum & Average & AT$_{k^\ast}$ \\		
		\hline
		{\tt Sinc}   
		& 0.2438(0.0445) & 0.0816(0.0045) & \textbf{0.0806}(0.0044)
		& 0.1489(0.0466) & {0.0827}(0.0048) & {0.0821}(0.0055) \\
		{\tt Housing}   
		& 0.1198(0.0150) & 0.0738(0.0075) & 0.0736(0.0079)
		& 0.1233(0.0127) & \underline{0.0713}(0.0089) & \textbf{0.0712}(0.0088) 	  \\
		
		{\tt Abalone}   
		& 0.1312(0.0919) & 0.0575(0.0016) & 0.0574(0.0015) 
		& 0.1082(0.0303) & \underline{0.0559}(0.0014) & \textbf{0.0557}(0.0016)  \\
		
		{\tt Cpusmall}   
		& 0.2404(0.0832) & 0.0634(0.0027) & 0.0627(0.0025) 
		& 0.1868(0.0997) & \underline{0.0423}(0.0018) & \textbf{0.0422}(0.0018)  \\
		
		\hline
	\end{tabular}
	\caption{\small \em Average MAE on four datasets. The best results are shown in bold with results that are not significant different to the best results underlined.}
	\label{results:regression mae}
	~\vspace{-3.5em}
\end{table}
	
	{\small
		\bibliographystyleapx{plain}
		\bibliographyapx{refs_apx}
	}
	
\end{document}